\theoremstyle{plain}
\newtheorem{theorem}{Theorem}[section]
\newtheorem{lemma}[theorem]{Lemma}
\theoremstyle{definition}
\newtheorem{assumption}[theorem]{Assumption}
\theoremstyle{remark}
\definecolor{codegreen}{rgb}{0.313, 0.498, 0.498} 
\definecolor{codegray}{rgb}{0.5,0.5,0.5}
\definecolor{codepurple}{rgb}{0.58,0,0.82}
\definecolor{backcolour}{rgb}{0.95,0.95,0.92}
\lstdefinestyle{mystyle}{
    commentstyle=\color{codegreen},
    keywordstyle=\color{magenta},
    numberstyle=\tiny\color{codegray},
    stringstyle=\color{codepurple},
    basicstyle=\ttfamily\footnotesize,
    breakatwhitespace=false,         
    breaklines=true,                 
    captionpos=b,                    
    keepspaces=true,                 
    numbers=left,                    
    numbersep=5pt,                  
    showspaces=false,                
    showstringspaces=false,
    showtabs=false,                  
    tabsize=2
}
\icmltitlerunning{Personalized Federated Fine-tuning for Heterogeneous Data: An Automatic Rank Learning Approach via Two-Level LoRA}
\begin{document}
\twocolumn[
\icmltitle{Personalized Federated Fine-tuning for Heterogeneous Data:\\An Automatic Rank Learning Approach via Two-Level LoRA}


\icmlsetsymbol{equal}{*}

\begin{icmlauthorlist}
\icmlauthor{Jie Hao }{yyy}
\icmlauthor{Yuman Wu}{yyy}
\icmlauthor{Ali Payani}{comp}
\icmlauthor{Myungjin Lee}{comp}
\icmlauthor{Mingrui Liu}{yyy}
\end{icmlauthorlist}

\icmlaffiliation{yyy}{Department of Computer Science, George Mason University, Fairfax, VA, USA.}
\icmlaffiliation{comp}{Cisco, Cupertino, CA, USA}

\icmlcorrespondingauthor{Mingrui Liu}{mingruil@gmu.edu}

\icmlkeywords{Machine Learning, ICML}

\vskip 0.3in
]



\printAffiliationsAndNotice{}  
\begin{abstract}
We study the task of personalized federated fine-tuning with heterogeneous data in the context of language models, where clients collaboratively fine-tune a language model (e.g., BERT, GPT) without sharing their local data, achieving personalization simultaneously. While recent efforts have applied parameter-efficient fine-tuning techniques like low-rank adaptation (LoRA) in federated settings, 
they typically use single or multiple independent low-rank adapters with predefined maximal and minimal ranks, which may not be optimal for diverse data sources over clients.

To address this issue, we propose PF2LoRA, a new personalized federated fine-tuning algorithm built on a novel \emph{automatic rank learning approach via two-level LoRA}. Given the pretrained language model whose weight is frozen, our algorithm aims to learn two levels of adaptation simultaneously: the first level aims to learn a common adapter for all clients, while the second level fosters individual client personalization. A key advantage of PF2LoRA is its ability to adaptively determine a suitable rank based on an individual client’s data, rather than relying on a predefined rank that is agnostic to data heterogeneity. We present a synthetic example that highlights how PF2LoRA automatically learns the ground-truth rank for each client, tailoring the adaptation to match the properties of their individual data. Notably, this approach introduces minimal additional memory overhead, as the second-level adaptation comprises a small number of parameters compared to the first level. Our experiments on natural language understanding and generation tasks demonstrate that PF2LoRA significantly outperforms existing federated fine-tuning methods.
\end{abstract}
%
\section{Introduction}

Federated learning (FL) \citep{mcmahan2017communication,kairouz2021advances} is a crucial paradigm for enabling collaborative training of machine learning models across distributed clients while preserving data privacy \citep{mcmahan2017learning,geyer2017differentially}. FL is particularly important in some scenarios that involve sensitive data, such as healthcare~\citep{brisimi2018federated, sheller2020federated},  finance~\citep{yang2019federated}, and mobile devices \citep{bonawitz2019towards}. However, in the context of foundation models like BERT \citep{devlin2018bert} and GPT \citep{radford2018improving}, traditional FL algorithms face significant challenges due to the complexity of these models.
It requires huge computing resources when directly fine-tuning model parameters on the heterogeneous data distributed across different clients.

To address the issue of fine-tuning foundation models, many parameter-efficient fine-tuning (PEFT) methods such as prompt tuning \citep{lester2021power} and low-rank adaptation (LoRA) \citep{hu2021lora} have been explored, where LoRA freezes the original pre-trained parameters $W\in \mathbb{R}^{m\times n}$ of the foundation model while fine-tuning additional low rank matrices $B\in\mathbb{R}^{m\times r}$ and $A\in \mathbb{R}^{r\times n}$, $ r\ll \min(m, n)$. This technique enables fine-tuning large models with a reduced number of trainable parameters, making them more suitable for resource-constrained devices. This paper specifically focuses on LoRA in the context of federated learning for heterogeneous data.

\begin{figure}[!t]
    \centering
    \includesvg[scale=0.23]{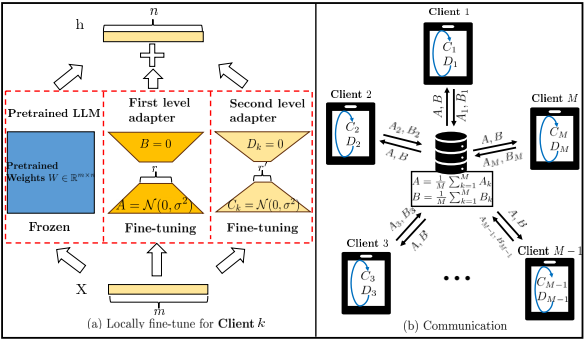}
    \vspace*{-0.1in}
     \caption{Overview of the two-level low-rank adaptation framework. The first level learns a common adapter $\{A, B\}$ for all clients, and the common adapter is synchronized by averaging across all the clients at every communication round.
     The second level aims to learn a client-specific and lightweight adapter $\{C_k, D_k\}$ for a specific client $k\in[1, M]$,  while the lightweight adapters introduce negligible additional memory overhead.}   \vspace*{-0.15in} 
    \label{fig:illustration}
\end{figure}

A natural method to perform low rank adaptation in federated learning is to adopt the same rank $r$ of matrices $A$ and $B$ across different clients. This method is referred to as homogeneous LoRA (HOMLoRA), but it does not accommodate the personalized requirement of clients with heterogeneous data distributions. Recent work HETLoRA \citep{cho2024heterogeneous} highlights the importance of heterogeneous rank configurations to enable personalized federated learning, which proposed ``matrix truncation", ``local rank self-pruning", and ``sparsity-weighted aggregation"  to learn various ranks $r_k$ for  the heterogeneous data from clients. However, this approach suffers from two main drawbacks: (1) The initial rank for any client is fixed and in the range of predefined minimal and maximal ranks, which is independent of client data. However, it is possible that the clients learning difficult tasks are assigned with smaller ranks and do not have the capacity to learn their corresponding tasks well. We empirically observe this issue and analysis theoretically the reason in Section \ref{sec:automatic_rank_learning}. (2) There are many hyperparameters which need to be tuned, including the minimal and maximal values of rank, the pruning parameter, and the sparsity parameter. It remains unclear how to perform personalized federated fine-tuning such that the adapter is dependent on the data and the procedure has a small number of tuning parameters.

In this paper, we propose PF2LoRA, a novel personalized federated fine-tuning algorithm that explicitly incorporates heterogeneous ranks into the problem formulation. Our approach introduces a \emph{two-level low-rank adaptation framework}. The first level learns a common adapter shared among all clients with trainable parameters $x=\{B\in\mathbb{R}^{m\times r}, A\in\mathbb{R}^{r\times n}\}$, while the second level enables client-specific personalization by learning lightweight, client-specific adapter $y_k$ for $k$-th client, defined as $y_k=\{ D_k\in\mathbb{R}^{m\times \tilde{r}}, C_k\in\mathbb{R}^{ \tilde{r}\times n}, 0<\tilde{r}<r\}$ and $1\leq k \leq M$ ($M$ represents the number of participating clients). We formulate the two-level low-rank adaptation framework as a bilevel optimization problem, aiming to learn a common adapter $x$ that minimizes the loss function given the fact the individual client adapters $\{y_k\}_{k=1}^{M}$ can achieve the optimal performance when conditioned on the shared adapter $x$. The two-level LoRA framework explicitly accommodates variations in adapter matrix ranks across clients, i.e. $r-\tilde{r}\leq r_k \leq  r+\tilde{r}$. That allows the algorithm to automatically learn the ground-truth rank for each client based on their data heterogeneity.   

Thus our algorithm essentially circumvents the rank pruning, matrix truncation, and zero-padding in HETLoRA for the alignment of adapters. 
Besides, the whole framework increases negligible additional memory overhead, as the second-level low rank adaptation comprises a small number of parameters compared to the first level. 
Our main contribution is listed as follows:
\begin{itemize}
    \item We propose a novel bilevel formulation for personalized fine-tuning on heterogeneous data, and develop a two-level low rank adaptation framework to efficiently fine-tune foundation model in the scenario of federated learning. The main workflow of our framework is illustrated in Figure \ref{fig:illustration}. 
    \item We provide a synthetic example explaining why HETLoRA fails to learn the ground truth rank of clients, resulting in underfitting in a multivariate linear regression example. Then we conducted an experiment on personalized federated fine-tuning with two clients. The experimental results demonstrate that our algorithm can automatically learn the ground-truth of clients' rank to accomodate the data heterogeneity. 
    \item Through extensive experiments on various natural language understanding and generation tasks, we demonstrate that PF2LoRA significantly outperforms existing federated fine-tuning baselines, providing a robust and efficient solution for personalized federated learning with foundation models.  For example on GLUE benchmark, PF2LoRA achieves $25.6\%$, $2.33\%$, $15.34\%$, and $2.53\%$ higher performance than state-of-the-art baseline HETLoRA on MNLI, SST-2, QQP, QNLI dataset, respectively. In addition, through extensive ablation studies, we show that our proposed two-level adaptation framework achieves the highest performance across various data heterogeneity levels and outperforms baseline methods even if they use more trainable parameters.
\end{itemize}
\vspace*{-0.1in}
\section{Related Work}
\noindent\textbf{Parameter-efficient Fine-Tuning.} There are various categories of parameter-efficient fine-tuning (PEFT) techniques, where only a subset of parameters of the pretrained foundation model or a small number of additional parameters are updated to adapt to the target task. The first line of work includes bias update or head-tuning~\citep{lee2019would,zaken2021bitfit,lawton2023neural,wei2021pretrained} and weight masking~\citep{zhao2020masking,sung2021training,xu2021raise}. The second line of work considers adapters~\citep{houlsby2019parameter,he2021towards}, prompt tuning~\citep{lester2021power,li2021prefix} and low rank matrix adaptation~\citep{hu2021lora}. Different from these works, our paper focuses on designing new federated learning algorithms based on low rank adaptation with heterogeneous data, where the local client data is not shared to other clients.

\noindent\textbf{Federated Learning with Fine-tuning.} The PEFT framework has been recently incorporated in the FL framework~\citep{babakniya2023slora,zhang2024towards,zhang2023fedpetuning,cho2024heterogeneous,wang2023can}. However, most of them do not consider the data heterogeneity in the context of foundation models. To the best of our knowledge, HETLoRA~\citep{cho2024heterogeneous} is the only work which allows data-independent heterogeneous ranks for each clients by a fixed rank initialization, zero-padding, truncation, self-pruning and sparsity regularization. In contrast, our work promotes data-dependent heterogeneous ranks of local clients by an explicit bilevel modeling and reduce the number of tuning hyperparameters.

\vspace*{-0.1in}
\section{Preliminaries}
In this section, we introduce a few parameter-efficient fine-tuning methods in the context of (federated) foundation model learning. It includes LoRA, HOMLoRA, HETLoRA. Due to limited space, we describe a variant of the personalized federated learning algorithm in the context of low rank adaptation method, namely Per-FedAvg-LoRA, in Appendix~\ref{app:per-fedavg}.

\noindent\textbf{Low-rank adaptation (LoRA).} LoRA is a technique designed to efficiently fine-tune large pre-trained models by injecting trainable low-rank matrices into each layer of a foundation model \citep{hu2021lora}.
Formally, consider a pre-trained model where the original weight matrix is denoted as $W_0 \in \mathbb{R}^{m \times n}$. The model update $\Delta W$ during the fine-tuning can be approximated by multiplication of two low-rank matrices $B\in\mathbb{R}^{m \times r}$ and $A\in \mathbb{R}^{r \times n}$.
The updated weight matrix $W$ is then given by:
  \vspace*{-0.05in}
\begin{equation}\label{eq:w_update}
    \begin{aligned} 
        W = W_0 + \Delta{W} = W_0 + BA.
    \end{aligned}
\end{equation}
This decomposition allows the model to learn adaptations for down-stream tasks while keeping the majority of the original weights frozen, thereby maintaining the pre-trained knowledge and significantly reducing memory and computational overhead.

\noindent\textbf{HOMLoRA}. \label{sec:HOMLoRA} When considering LoRA in the scenario of federated learning, a natural extension is refereed to as HOMLoRA, which adopts homogeneous rank $r$ across all the clients. Assume that $M$ clients participate in federated learning at every communication round. The objective function of each client $k$ is $f_k(\cdot)$, and the goal is to find a common adapter $x =\{A\in \mathbb{R}^{m\times r}, B\in\mathbb{R}^{r\times n} \}$ that performs well across all the clients. It aims to solve the optimization problem: $\min_{x} \frac{1}{M}\sum_{k=1}^{M}f_k(x)$.
Specifically, each client locally updates their adapters for $I$ steps by Adam (or SGD) using their local data, and the server aggregates the adapters from each local clients $\{A_k^{t}, B_k^{t}\}_{k=1}^{M}$ ($k$ is the local client id) at iteration $t$ when $t$ is a multiple of $I$, where $I$ is the number of local updates per round: $A^{t} = \frac{1}{M}\sum_{k=1}^{M}A^{t}_k, \ B^{t} = \frac{1}{M}\sum_{k=1}^{M}B^{t}_k$. Then the server broadcasts the aggregated adapters back to each client. HoMLoRA can be regarded as a direct extension of FedAvg~\citep{mcmahan2017communication} in the context of LoRA~\citep{hu2021lora}.

\noindent \textbf{HETLoRA.} \label{sec:HETLoRA} Recently,  \citet{cho2024heterogeneous} proposed a heterogeneous LoRA method, namely HETLoRA, which is able to learn heterogeneous low rank matrices for different clients. The main technical components contain four parts: (1) a fixed rank initialization: where the $r_k$ is fixed for $k$-th client and $r_{min}\leq r_k \leq r_{max}$; (2) distribution via truncation, wherein at each communication round, 
each client truncates the global adapter matrices to align dimensions $A_k^t=A_{:r_k,:}^t, B_k^t=B_{:, :r_k}^t$;

(3) local training with self-pruning, which introduces the regularization term (with a penalty coefficient $\lambda$) to induce adapter sparsity (with a sparsity factor $\gamma$), and it dynamically reduces the $r_k$ by pruning unimportant columns in $B_k^t$ (or rows in $A_k^t$); (4) sparsity-weighted aggregation, wherein each communication round, to aggregate the adapter matrices with different rank $r_{min}\leq r_k \leq r_{max}$, the server reconstructs $\{A_k^t, B_k^t\}$ by zero-padding them to $r_{max}$.

Then HETLoRA updates the common adapter by aggregating the local adapters with an aggregation weight. In particular, the update rule is $A^{t+1} = \sum_{k=1}^{M} \|\Delta W_k^t\|A^{t}_k/\sum_{k=1}^{M}\|\Delta W_k^t\|$  and $B^{t+1} =  \sum_{k=1}^{M} \|\Delta W_k^t\|B^{t}_k/\sum_{k=1}^{M}\|\Delta W_k^t\|$, $\Delta W_k^t=B_k^t A_k^t$.

However, the performance of HETLoRA heavily depends on (1) the fixed rank initialization, which is independent of data and may cause underfitting or overfitting issues, and (2) the proper setting for a set of hyperparameters, including $r_{\min}$, $r_{\max}$, $\gamma$, and $\lambda$. 

To address these issues, we propose a new two-level low-rank adaptation framework for personalized fine-tuning in the next subsection.

\section{A New Two-level Adaptation for Personalized Federated Fine-Tuning}
As we discussed in Section~\ref{sec:HOMLoRA}, HOMLoRA uses only one common adapter $x=\{B\in\mathbb{R}^{m\times r}, A\in\mathbb{R}^{r\times n}\}$ across all the clients, which is insufficient to learn from the heterogeneous data in federated learning. Therefore, we introduce a client-specific adapter for every client $k$ with $y_k=\{ D_k\in\mathbb{R}^{m\times \tilde{r}}, C_k\in\mathbb{R}^{ \tilde{r}\times n}, 0<\tilde{r}<r, 1\leq k \leq M\}$. We emphasize that the newly introduced adapter has a much smaller rank $\tilde{r}$ than that in the common adapter. Empirically, we usually set $\tilde{r} = \frac{r}{4}$ or $\frac{r}{2}$, which means the trainable parameters in the client-specific adapter are only $\frac{1}{4}$ or $\frac{1}{2}$ of those in the common adapter. Thus the new adapter is lightweight and incurs negligible additional memory overhead. 

Different from \eqref{eq:w_update}, we incorporate both the common and client-specific adapters. In particular, the adapter for the $k$-th client can be parameterized as, 
\vspace*{-0.05in}
\begin{equation}\label{eq:origin_update}
    \begin{aligned}
        W_k = W_0 + BA + D_kC_k,
    \end{aligned}
\end{equation}
where $W_k$ is the adapter for $k$-th client, $A,B$ are common adapters for all clients, and $(C_k,D_k)$ are client-specific adapters for $k$-th client.
Since the original weight $W_0$ is frozen, the trainable parameters in the model are $A, B, C_k, D_k$ for the client $k$. Different than the HETLoRA whose local client matrix rank is predefined and independent of data, our specific parameterization~\eqref{eq:origin_update} explicitly encourages each adapter $W_k$ for the $k$-th client to vary over $k$: it can have different ranks in the range $(r-\tilde{r}, r+\tilde{r})$ and the specific rank is automatically determined by the training data.

We formalize our two-level adaptation framework for personalized federated fine-tuning as the following bilevel optimization problem:

\vspace*{-0.15in}
\begin{equation}\label{eq:bilevel_obj}
    \begin{aligned}
        &\min_{x} \Phi(x)\coloneq \frac{1}{M}\sum_{k=1}^{M}f_k(x, y^*_{k}(x)),  & (\text{UL})\\
        & \text{s.t.}, \ y^*_k(x) \in \arg\min_{y_k} f_k(x, y_k),  &(\text{LL})
    \end{aligned}
\end{equation}
where $f_k(x, y_{k}) \coloneq \mathbb{E}_{\xi\sim \mathcal{D}_k} F_k(x, y_k; \xi)$ is the loss function for the $k$-th client, $F_k$ the individual loss function for a sample $\xi$ from the $k$-th client, and $\mathcal{D}_k$ is the data on client $k$. The upper-level (UL) learns a common adapter $x$ for all the clients upon a set of the best client-specific adapters $\{y_k^*(x)\;|\;1\leq k\leq M\}$ for given $x$ defined by the lower-level problem. Given the common adapter, the lower-level (LL) aims to locally search the optimal client-specific adapter to fit its respective data, which in fact
fosters individual client personalization.  


\begin{algorithm}[!t]
    \caption{\textsc{Two-level Adaptation for Personalized Fine-Tuning}} \label{alg:PF2LORA}
    \begin{algorithmic}[1]
        \STATE \textbf{Input:} $ \alpha, \eta, I, T, M, \mathcal{D}_k$ 
        \FOR {$k \in \{1,\ldots,M\}$ \textbf{in parallel}} 
        \FOR{$t=0, 1, \dots, T-1$}
            \STATE Sample $\pi_k^t$, $\xi_k^t$,  $\tilde{\xi}_k^t$, $\zeta_k^t$ independently from distribution $\mathcal{D}_k$
            \STATE $y^{t+1}_{k} = y^t_{k}  - \alpha\nabla_y F_k(x^t_k,y^t_k;\pi_k^t)$
            \STATE $x^{t+1}_k = x^t_k - \eta \nabla_x F_{k}(x^{t}_{k}, y^{t+1}_{k}; \xi_k^t) + \alpha \eta\nabla_{xy}F_{k}(x^{t}_k, y^{t}_{k}; \zeta_k^t)\nabla_yF_{k}(x^t_k, y^{t+1}_{k};\tilde{\xi}_k^t) $\\
            \IF{$t\% I ==0$} 
                 \STATE $x^{t+1} = \frac{1}{M}\sum_{k=1}^{M}x^{t+1}_k$
                 \STATE $x^{t+1}_k = x^{t+1}$
            \ENDIF
        \ENDFOR
        \ENDFOR
    \end{algorithmic}
\end{algorithm}

\noindent\textbf{Algorithm Design.} Now we consider solving \eqref{eq:bilevel_obj} efficiently in personalized federated learning. At the beginning of every communication round, i.e., $(t\%I=0)$, each client $k$ receives the averaged common adapter $x^{t}_k$ from the server, and starts running its local steps.    
We run one step SGD for the lower-level problem to approximately find the minimizer of the lower-level problem (line 5 in Algorithm \ref{alg:PF2LORA}). 

Define $\Phi_k(x)=f_k(x,y_k^*(x))$, then
the gradient of the function $\Phi_k(x_k^t)$ in terms of $x_k^t$, namely hypergradient~\citep{ghadimi2018approximation}, can be calculated by chain rule approximately as follows:
\begin{equation} \label{eq:hypergrad}
\begin{aligned}
    \nabla\Phi_k(x^t_k) &\approx \nabla\widehat{\Phi}_k(x^t_k) =\nabla_x f_{k}(x^t_k, y^{t+1}_{k}) \\
    &- \alpha \nabla_{xy}f_{k}(x^t_k, y^{t}_k)\nabla_yf_{k}(x^t_k, y^{t+1}_{k}), 
    \end{aligned}
\end{equation}
where $\approx$ is due to the fact that $y_k^{t+1}$ is only an approximation to the optimal solution $y_k^*(x_k^t)$. Therefore, we use the stochastic version of $\nabla \widehat{\Phi}_k(x_k^t)$ to update the common adapter $x$ on client $k$, as described in line 6 of Algorithm~\ref{alg:PF2LORA}.

In fact, Adam or AdamW can also be used to update the upper-level  variable based on the stochastic gradient information to replace the SGD update as in line 6.  Empirically, we adopt AdamW as the upper-level optimizer (line 6) and SGD as the lower-level optimizer (line 5) to fine-tune a language model. One can refer to Algorithm \ref{alg:PF2LORA} for more details, where line 5 is used to update the client-specific adapter, line 6 is used to update the common adapter, and line 8 corresponds to the synchronization of the common adapter.

\section{Automatic Rank Learning} \label{sec:automatic_rank_learning}

To clarify this mechanism of ``automatic rank learning of PF2LoRA", as well as the failure reason of HETLoRA, we first construct a multivariate linear regression example and provide a theoretical analysis to demonstrate why our method is able to  learn the ground-truth rank, accommodating the heterogeneity of clients' data, whereas HETLoRA fails. Then we conduct a synthetical experiment to compare two algorithms in federated learning with two clients. The experimental results confirm that our algorithm is able to learn and converge to the optimal solution. In contrast, HETLoRA underestimates the initial rank of some clients due to random rank initialization strategy, resulting in underfitting and suboptimal performance in such clients.

Consider a multivariate linear regression in federated learning, 
\begin{equation*}
    \min_{W}\sum_{k=1}^{2}\|X_kW-Y_k\|_F^2 
\end{equation*}
where $(X_k, Y_k)$ is the client-$k$'s data, $W$ is a low-rank matrix and can be decomposed into low-rank matrices, $W=AB$. The details of synthetic experiments are described as follows,
\vspace{-0.15in}
 \paragraph{Ground truth of trainable parameters.} Given two clients, suppose that we have two optimal solutions with low-rank structure,
    $$W_1^*=A_1^*B_1^*, \ s.t., W_1^*\in \mathbb{R}^{10\times 10}, A_1^*\in \mathbb{R}^{10\times 3}, B_1^*\in \mathbb{R}^{3\times 10},$$
    $$W_2^*=A_2^*B_2^*, \ s.t., W_2^*\in \mathbb{R}^{10\times 10},  A_2^*\in \mathbb{R}^{10\times 4}, B_2^*\in \mathbb{R}^{4\times 10},$$
    with $rank(W_1^*)=3$,  $rank(W_2^*)=4$. We initialize the random matrices $A_1^*, B_1^*, A_2^*, B_2^*\sim \mathcal{N}(0, 1)$\footnote{Note that we use $X\sim\mathcal{N}(0,1)$ to denote each entry of the matrix $X$ follows a standard Gaussian distribution.}.
\vspace{-0.15in}

 \paragraph{Training and testing data.}  We construct the synthetic data $(X, Y)$ for two clients respectively by randomly generating 1000 samples, i,e., $X_1\in\mathcal{R}^{1000\times10}, s.t., X_1\sim \mathcal{N}(0, 1)$, $X_2\in\mathcal{R}^{1000\times10}, s.t.,X_2\sim \mathcal{N}(0, 1)$, and their element targets,
    \begin{align*}
        & y_1 = x_1W_1^* + \epsilon_1, \  \epsilon_1\sim \mathcal{N}(0, 0.1), \\
        & y_2 = x_2W_2^* + \epsilon_2, \  \epsilon_2\sim \mathcal{N}(0, 0.2).
    \end{align*}
    The first 700 samples serve as the training set $\mathcal{D}_{k}^{tr}, k=1,2$ and the remaining serves as the testing set $\mathcal{D}_{k}^{te}, k=1,2$.    
\vspace{-0.15in}
\paragraph{Training process.} 
 HETLoRA: Following its rank initialization strategy $r_{min}\leq rank_1 \leq rank_2 ...\leq rank_k...\leq r_{max}$, we assume that $r_{min}=1, r_{max} = 12$ and initialize $\hat{W}_k = \hat{A}_k\hat{B}_k$ by,
\begin{align*}
    & \hat{A}_1\in \mathbb{R}^{10\times 2}, \hat{B}_1\in \boldsymbol{0}^{2\times 10}, \ s.t. \ \hat{A}_1 \sim  \mathcal{N}(0, 1),\\
    & \hat{A}_2\in \mathbb{R}^{10\times 10}, \hat{B}_2\in \boldsymbol{0}^{10\times 10}, \ s.t. \ \hat{A}_2 \sim  \mathcal{N}(0, 1)
\end{align*}
so we have $rank(\hat{A}_1)=2$ and $rank(\hat{A}_2)=10$. We can easily get that the total number of trainable parameters for two clients is $240$. 

PF2LoRA: For a fair comparison, we initialize the trainable parameters $\hat{W}_k = \hat{A}_k\hat{B}_k+\hat{C}_k \hat{D}_k$, and make sure the total number of trainable parameters to be the same as that in HETLoRA.
For client $k=1, 2$, we have $r=4, \tilde{r}=2$ and,
\begin{align*}
    &\hat{A}_k\in \mathbb{R}^{10\times 4}, \hat{B}_k\in \mathbb{R}^{4\times 10}, \hat{C}_k\in \mathbb{R}^{10\times 2}, \hat{D}_k\in \mathbb{R}^{2\times 10}, \\
    & s.t. \ \hat{A}_k, \hat{C}_k, \hat{C}_k, \hat{D}_k \sim  \mathcal{N}(0, 1).
\end{align*}
and $A_kB_k$ is orthogonal to the matrix $C_kD_k$, such that their column space or row space are independent mutually.
The total number of training steps are fixed as 2000, and the communication interval is 10. The details of hyperparameter settings, inlcuding learning rate, pruning parameter of HETLoRA etc., are summarized in Appendix \ref{sec:hyperparameter_syn_exp}. 


\paragraph{Evaluation.}
We evaluate the trained model every communication round on the testing data $\mathcal{D}_k^{te}$, and measure the distance between $\hat{W}_k$ and $W_k^*$ by $\|\hat{W}_k - W_k^*\|_F^2$. In addition, we record the rank evolution of two clients as the training steps. 
For PF2LoRA, We compute the singular value $\{\lambda_i|i=1,..., 10\}$ of $\hat{W}_k$ by singular value decomposition (SVD) and determine its rank: $\min_{1\leq j\leq 10} \sum_{i=1}^{j} \lambda_i \geq 0.9 \sum_{i=1}^{10} \lambda_i$, where $\{\lambda_i\}$ keeps descending order. The comparison results, including training, testing loss, frobenius norm distance and rank evolution are shown in Figure \ref{fig:synthetic_exp}.

In the last column of Figure 2(b), PF2LoRA successfully learns the ground truth rank of $3$ for client 1 and $4$ for client 2, demonstrating its ability to automatically learn ranks within the range $[r-\tilde{r}, r+\tilde{r}]$. The training and testing losses decrease rapidly, converging near zero, and the distance to the ground truth parameters consistently reduces to a small value. In contrast, HETLoRA fails to learn the ground truth rank for client 1 due to random initialization, which underestimates the rank and prevents it from covering the true rank. Rank pruning further reduces client 1's rank to $r_{min}=1$, increasing training/testing losses and Frobenius norm distance. However, client 2, with a better initial rank, successfully learns the ground truth rank through pruning. Refer to Appendix \ref{sec:analysis_rank_learning} for theoretical analysis.

    
    \begin{figure*}[!h]
        \centering
        \subfigure[HETLoRA fails to converge to the ground truth.]{\includegraphics[width=0.495\linewidth]{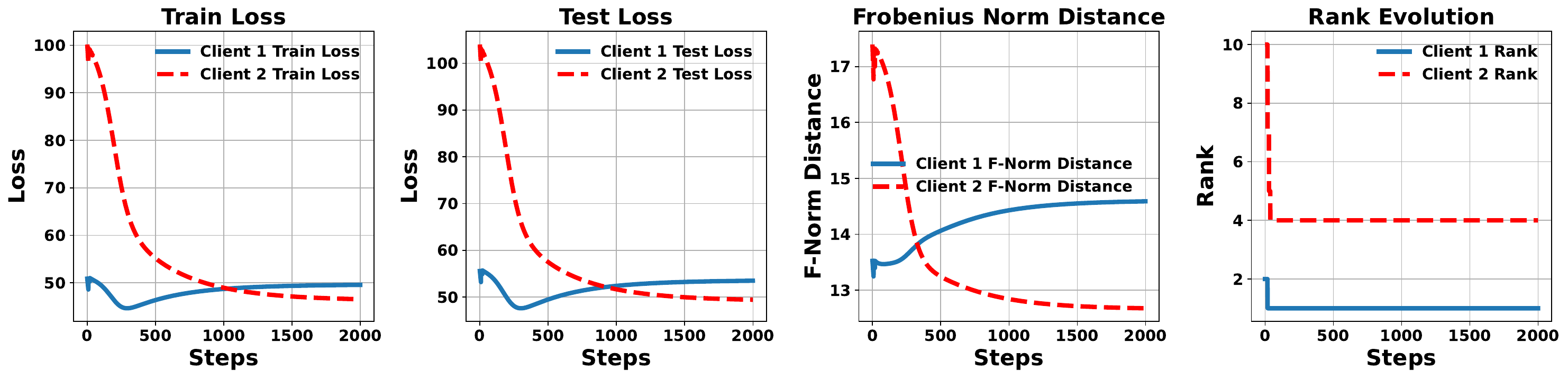}}
        \subfigure[PF2LoRA can converge to the ground truth.]{\includegraphics[width=0.495\linewidth]{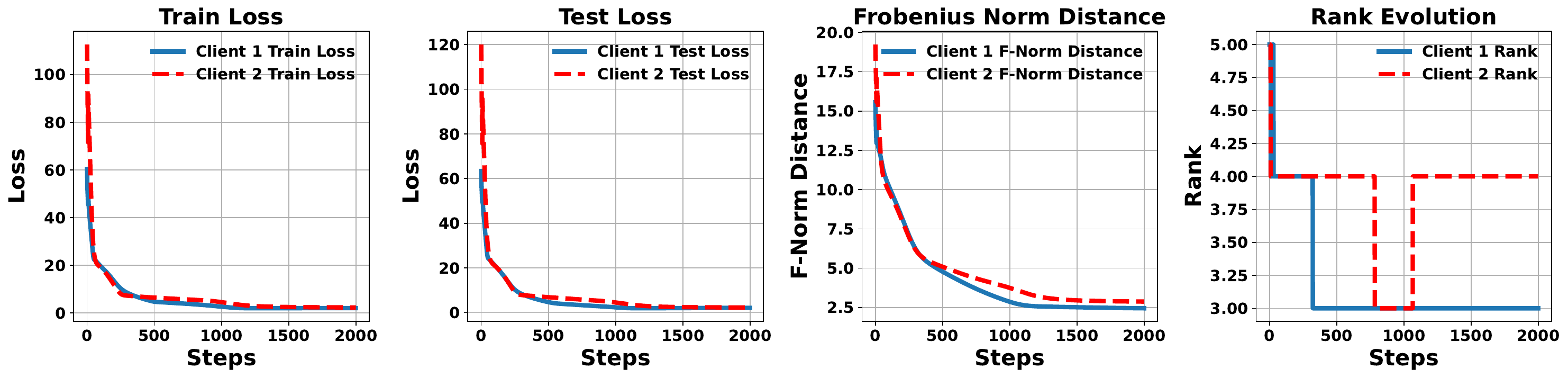}}
        \caption{Comparison of two algorithms. Left to right: the training loss on two clients, the testing loss on two clients, Frobenius norm distance $\|W_k - W_k^*\|_F$, $k = 1, 2$, and the rank evolution of two clients.}
        \label{fig:synthetic_exp}
        \vspace{-0.1in}
    \end{figure*}

\vspace*{-0.1in}
\section{Experiments}
We evaluate PF2LoRA and baseline methods on two major NLP tasks: natural language understanding (NLU) and natural language generation (NLG), where NLU experiments include the text classification on GLUE benchmark \citep{wang2018glue} and question answering task on SQuAD v1 \citep{rajpurkar2016squad}  and v2 \citep{rajpurkar2018know}. NLG experiments are performed on E2E NLG Challenge dataset \citep{novikova2017e2e} and WebNLG dataset \citep{gardent2017webnlg}. Then we execute the ablation studies to explore (1) the performance comparison when other baselines have more trainable parameters than ours in section \ref{sec:basline_more_params};  (2) the impact of data heterogeneity in Appendix \ref{sec:het_level}; and (3) the role of bilevel optimization in our framework in Appendix \ref{sec:ablation_BO}. Training stability is further analyzed in Appendix \ref{sec:stability}. Baseline methods include Centralized LoRA, Homogeneous LoRA (HOMLoRA), Personalized Federated Average LoRA (Per-FedAvg-LoRA), and Heterogeneous LoRA (HETLoRA). The parameter sensitivity analysis is deferred to Appendix \ref{sec:sensitivity}, and the computing and communication costs are presented in Appendix \ref{sec:computing_cost}. 



\subsection{Natural Language Understanding}
\subsubsection{RoBERTa on Text Classification}
\noindent\textbf{Model}.
We use RoBERTa \citep{liu2019roberta} as the backbone model for personalized federated fine-tuning on GLUE tasks. Specifically, we fine-tune pre-trained RoBERTa base (125M parameters) and RoBERTa large (355M parameters) using LoRA. Baselines only use common adapters injected into the attention layers, while PF2LoRA introduces both common adapters and lightweight client-specific adapters. The ranks are set as follows:
\begin{enumerate}
    \item For Centralized LoRA, HOMLoRA, and Per-FedAvg-LoRA: rank $r_k=8$ across all clients.
    \item For HETLoRA: rank $r_k$ vary between $r_{min}$ and $r_{max}$, and it initially assigns $r_k = r_{min} + \frac{(r_{max} - r_{min})(k-1)}{M}$.
    \item For PF2LoRA: common adapter rank  $r_k=8$ and client-specific adapter rank $\tilde{r}_k=2$. 
\end{enumerate}
Details of the rank initialization and trainable parameters are summarized in Table \ref{tab:parameters_roberta} in Appendix \ref{sec:hy_txt_cl}. 
We can observe that the number of trainable parameters in PF2LoRA is slightly increased. Note that HETLoRA uses a different rank  for matrices on different clients, leading to different number of trainable parameters in each client, we count the average trainable parameters of the clients. Experimental results regarding baselines with more trainable parameters will be discussed in Section~\ref{sec:ablation}.

\noindent\textbf{Dataset}.\label{sec:roberta_data}
Following the non-i.i.d. partitioning protocol in \citep{karimireddy2020scaffold}, datasets are split into heterogeneous client datasets with a similarity parameter $s\in [0, 1]$. Each client’s local dataset consists of $(100\times s)\%$ i.i.d. samples from the complete dataset, and $100\times (1- s)\%$ of data sorted by label. Five classification datasets are used for text classification, including CoLA, MNLI, SST-2, QQP, QNLI, from GLUE benchmark. The data summary information is presented in Table \ref{tab:glue_benchmark} in Appendix \ref{sec:hy_txt_cl}. 
 

\noindent\textbf{Experiment Details}. \label{sec:robert_exp_detail} We run federated fine-tuning algorithms across 8 clients (NVIDIA A100 GPU), where all the clients participate in the training process, with client 0 also performing parameter aggregation and distribution at every communication round. Centralized LoRA, HOMLoRA and HETLoRA use the AdamW optimizer to update the common adapter. Per-FedAvg-LoRA adopts SGD to implement one-step update and AdamW to update the common adapter. PF2LoRA uses SGD to update the client-specific adapter and AdamW to update the common adapter. The learning rates for all methods are tuned and the best choices of learning rate for each baseline can be found in Table \ref{tab:lr_roberta} in Appendix \ref{sec:hy_txt_cl}. For fair comparison, we keep the batch size $\mathcal{B}=16$, and communication interval $I= 10$ for all the federated baselines. The communication rounds $R$ are set according to the dataset size, $\{$CoLA: 50, MNLI: 300, SST-2: 100, QQP: 300, QNLI: 100$\}$, and we keep the same $R$ for all the baselines in a dataset. 

  \vspace*{-0.15in}
\begin{table}[htbp]
  \centering
  \caption{Roberta-base results on GLUE benchmark. We report ``Matthew's correlation" for CoLA and ``Accuracy" for MNLI, SST-2, QQP and QNLI. Higher value means "better performance".}
    \setlength{\tabcolsep}{2pt}
    \begin{tabular}{l|ccccc}
    \Xhline{1.2pt}
    Method  & CoLA  & MNLI  & SST-2  & QQP   & QNLI \\
    \hline
    Centralized LoRA & 56.85 & 83.48 & 93.58 & 86.97 & 89.70 \\
    HOMLoRA & 50.75 & 70.56 & 92.47 & 79.61 & 85.45 \\
    Per-FedAvg-LoRA & 51.11 & 74.73 & 90.56 & 81.26 & 78.59 \\
    HETLoRA & 53.76 & 73.33 & 93.67 & 81.49 & 91.86 \\
    PF2LoRA  & \textbf{54.19} & \textbf{92.14} & \textbf{95.85} & \textbf{93.99} & \textbf{94.18} \\
    \Xhline{1.2pt}
    \end{tabular}%
  \label{tab:result_roberta_base}%
\end{table}%

We execute evaluate model on each client's test data and report averaged results. Metrics include ``Matthews's correlation" for CoLA and ``Accuracy" for MNLI,SST-2, QQP, QNLI. Results are presented in Table \ref{tab:result_roberta_base} (RoBERTa base) and Table \ref{tab:result_roberta_large} (RoBERTa large) in Appendix \ref{sec:result_roberta_large}, where the heterogeneity level $s=0.3$ is set for CoLA and $s=0.9$ for MNLI, SST-2, QQP, and QNLI. PF2LoRA outperforms all baselines significantly on MNLI, SST-2, QQP, QNLI, and achieves comparable performance to Centralized LoRA on CoLA while surpassing other federated baselines.

\subsubsection{DeBERTa on Question Answering}
\noindent\textbf{Model}. We use DeBERTa \citep{he2021debertav3}, an enhanced transformer encoder with 86M parameters, for question-answering tasks on SQuAD v1 and v2. DeBERTa improves text understanding compared to BERT and RoBERTa, making it suitable for complex tasks like question answering and sentiment analysis.

\noindent\textbf{Dataset}. SQuAD v1/v2 are reading comprehension dataset with over 100k+/150k+ (v1/v2) question-answering pairs extracted from Wikipedia articles. In SQuAD v1, all answers can be derived from the given passage, while SQuAD v2 includes some questions that do not have an answer in the passage, posing a greater challenge. The training data consists of 442 unique topics, while the test sets include 48 topics for SQuAD v1 and 35 topics for SQuAD v2. To ensure consistency, we uniformly sample $80\%$ of the original training set as the new training set and use the remainder as the test set. Heterogeneous data is constructed based on question topics using the method in Section \ref{sec:roberta_data} with heterogeneity parameter $s=0.5$. Exact Match (EM) score and F1 score are two commonly used metrics to evaluate the quality of answers that models provide.

\noindent\textbf{Experiment Details}. Considering the complexity of the question-answering task, we run federated fine-tuning across 4 clients (NVIDIA A100 GPU) with the same heterogeneity parameter $s=0.5$, communication rounds $R=200$, communication interval $I=10$ for SQuAD v1/v2. The optimizer for different baselines follows the settings in Section \ref{sec:robert_exp_detail}. The batch size $\mathcal{B}$ is fixed as 16 for all the baselines for fair comparison. The best learning rate settings for baselines are listed in Table \ref{tab:squadv1_lr} in Appendix \ref{sec:hy_qa}. The initial rank settings for all the baselines can be found in Table \ref{tab:deberta_hyperparam} in Appendix \ref{sec:hy_qa}. The test results of PF2LoRA and other baselines are shown in Table~\ref{tab:result_squad}. PF2LoRA exhibits the highest EM score and F1 score among all federated baselines. For example, PF2LoRA outperforms the best baseline by $4.08\%$ in terms of EM score and $2.20\%$ in terms of F1 score on SQuAD v1.

\vspace*{-0.15in}
\begin{table}[htbp]
  \centering
  \caption{Deberta-v3 results on SQuAD.}
    \begin{tabular}{l|cc}
    \Xhline{1.2pt}
    \multicolumn{1}{c|}{\multirow{2}[2]{*}{Method }} & SQuAD v1.0 & SQuAD v2.0 \\
          & (EM/F1) & (EM/F1) \\
    \hline
    Centralized LoRA \tablefootnote{Note that the results do not exactly match the LoRA results reported in Table 2 in \citep{zhang2023adaptive}. The reason is that the test data used in our experiment is different and more difficult. The test data is a subset of the original training data, which contains much more topics (442 topics) than that in the original test data (48 topics). } & 68.72/83.36 & 44.56/53.31  \\
    HOMLoRA & 68.57/82.99 & 42.53/52.70 \\
    Per-FedAvg-LoRA & 68.80/83.08  & 43.15/53.16 \\
    HETLoRA & 68.64/83.28  & 44.53/54.69 \\
    PF2LoRA  & \textbf{71.61/85.11} & \textbf{44.95/54.71}  \\
    \Xhline{1.2pt}
    \end{tabular}%
  \label{tab:result_squad}%
\end{table}%

\subsection{Natural Language Generation}
For NLG tasks, we follow LoRA \citep{hu2021lora} to use GPT-2 medium model for federated fine-tuning on WebNLG and E2E NLG Challenge dataset. 

\noindent\textbf{Model}. GPT-2 \citep{radford2019language} 
 is an advanced language model developed by OpenAI. We use GPT-2 medium with 345M parameters and GPT2-XL with 1.5 Billion parameters for NLG tasks.

 \noindent\textbf{Dataset}. WebNLG dataset is a benchmark for evaluating natural language generation systems, including various domains such as sports, cities, universities, hotels and more. E2E NLG Challenge dataset is a NLG dataset especially focusing on restaurants domain.  It emphasizes generating natural, human-like text from structured data (including attributes like restaurant name, food type, price range and rating). For WebNLG, we find that the text style and feature vary with the domains, so we construct the heterogeneous data based on the entry domains. There are 10 domains in the training set and test set. We split the domain into 8 (the number of clients) groups, and make sure that the domains of training and test set on a client are the same. E2E NLG Challenge dataset collects information of 34 restaurants in the training set and 18 restaurants in the test set. We split all the restaurants into 8 (the number of clients) groups by the name, and make sure that the restaurant names in the test set that a client receives are covered by its training set.

\vspace*{-0.15in}
\begin{table}[!h]
  \centering
  \caption{GPT-2 generation results on WebNLG dataset.}
    \setlength{\tabcolsep}{2pt}
    \begin{tabular}{l|cccc}
    \Xhline{1.2pt}
    Method  & BLEU $\uparrow$ & \multicolumn{1}{l}{MET $\uparrow$} & \multicolumn{1}{l}{TER $\downarrow$} & \multicolumn{1}{l}{ROUGE-L $\uparrow$} \\
    \hline
    Centralized LoRA & 0.6031 & 0.7807 & 0.5900  & 0.4169 \\
    HOMLoRA & 0.5141 & 0.7271 & \textbf{0.5697} & 0.4736 \\
    Per-FedAvg-LoRA & 0.5152 & 0.7219 & 0.5746 & 0.4740 \\
    HETLoRA & 0.5196 & 0.7219 & 0.5746 & 0.4740 \\
    PF2LoRA  & \textbf{0.5261} & \textbf{0.7301} & 0.5733 & \textbf{0.4769} \\
    \Xhline{1.2pt}
    \end{tabular}%
  \label{tab:result_webnlg}%
\end{table}%
  \vspace*{-0.15in}
\begin{table}[!h]
  \centering
  \caption{GPT2-XL generation results on WebNLG dataset.}
    \setlength{\tabcolsep}{2pt}
    \begin{tabular}{l|cccc}
    \Xhline{1.2pt}
    Method  & BLEU $\uparrow$ & \multicolumn{1}{l}{MET $\uparrow$} & \multicolumn{1}{l}{TER $\downarrow$} & \multicolumn{1}{l}{ROUGE-L $\uparrow$} \\
    \hline
    HOMLoRA & 0.5768 & 0.7771 & \textbf{0.6103} & 0.3967 \\
    Per-FedAvg-LoRA & 0.5783 & 0.7783 & 0.6157 & 0.3972 \\
    HETLoRA & 0.5763 & 0.7789 & 0.6164 & 0.3922 \\
    PF2LoRA  & \textbf{0.5881} & \textbf{0.7832} & 0.6198 & \textbf{0.3978} \\
    \Xhline{1.2pt}
    \end{tabular}%
    \vspace{-0.05in}
  \label{tab:result_webnlg_GPT2_XL}%
\end{table}%

\vspace*{-0.15in}
\begin{table}[!h]
  \centering
  \caption{The comparison results with more trainable parameters in baselines. We report "Matthew's correlation" for CoLA and "Accuracy" for MNLI, SST-2, QQP and QNLI. Higher value means "better performance".}
    \scalebox{0.8}{
    \begin{tabular}{l|cccccc}
    \Xhline{1.2pt}
    Method  & \multicolumn{1}{c}{\# Param} & CoLA  & MNLI  & SST-2 & QQP   & QNLI \\
    \hline
    HOMLoRA   & 0.44M & 52.01 & 73.82 & 92.63 & 80.11 & 86.27 \\
    \small{Per-FedAvg-LoRA}  & 0.44M & 52.35 & 78.62 & 89.65 & 81.12 & 81.41 \\
    HETLoRA    & 0.43M & 53.43 & 79.32 & 94.83 & 81.71 & 92.12 \\
    PF2LoRA   & \textbf{0.37M} & \textbf{54.19} & \textbf{92.14} & \textbf{95.85} & \textbf{93.99} & \textbf{94.18} \\
    \Xhline{1.2pt}
    \end{tabular}}%
  \label{tab:more_prams_result}%
\end{table}%
 \noindent\textbf{Experiment Details}.
We follow the procedures in LoRA \citep{hu2021lora} to implement language generation, including (1) fine-tuning the language model, (2) generating outputs for text data using beam search, (3) decoding the outputs, and (4) evaluating the generated outputs.  
The NLG experiments are run across 8 clients (NVIDIA A100 GPU), where each client fine-tunes the adapter on the data of specific domains (WebNLG) or restaurants (E2E NLG Challenge), and then generates individual outputs for the client test data during the evaluation phase. We use metrics including BLEU, NIST, METEOR (MET), TER, ROUGE-L, CIDEr to measure the quality of generated texts.

The total communication rounds $R$ are set to 200 for WebNLG  and 300 for E2E NLG Challenge, and communication interval is fixed as $I=10$ for both datasets. The optimizer setting follows the previous Section \ref{sec:robert_exp_detail}. The batch size $\mathcal{B}=4$, and beam search width $bw=10$ are kept for all the baselines.  We tune the best step size for each baseline, and the details are summarized in Table \ref{tab:web_e2e_lr} and Table \ref{tab:web_e2e_lr_GPT2_XL} in Appendix \ref{sec:hy_nlg}. In addition, the rank initialization for each algorithm and the number of trainable parameters are summarized in Table \ref{tab:gpt2_hyperparam} in Appendix \ref{sec:hy_nlg}. The test results for GPT-2 medium are presented in Tables \ref{tab:result_webnlg} and 
\ref{tab:result_e2e} in Appendix \ref{sec:result_e2e}, and the results of GPT2-XL are presented in Table \ref{tab:result_webnlg_GPT2_XL}. We can see that PF2LoRA achieves the best performance in almost all metrics compared to other federated fine-tuning baselines. For example, PF2LoRA on GPT-2 medium achieves $1.25\%$ higher BLEU score than HETLoRA on WebNLG and $3.85\%$ higher BLUE score than HETLoRA on E2E NLG Challenge.

\subsection{Ablation Studies}
\label{sec:ablation}
We execute the ablation studies to explore  (1) the performance comparison when other baselines have more trainable parameters than ours. (2) the impact of data heterogeneity on PF2LoRA and baselines. (3) the importance of bilevel optimization in our framework. Due to the space limitation, the details about (2) and (3) have been deferred to the Appendix. Refer to Appendix \ref{sec:het_level} for the impact of data heterogeneity and Appendix \ref{sec:ablation_BO} for the importance of bilevel optimization.

\noindent\textbf{Baselines with More Trainable Parameters.} \label{sec:basline_more_params}
The lightweight client-specific adapters introduce additional trainable parameters. For fair comparison with other baselines, we consider to increase the number of trainable parameters in other baselines. Specifically, we increase the initial rank $r_k$ (from $8$ to $12$) for baselines HOMLoRA and Per-FedAvg-LoRA in the text classification experiments. Note that HETLoRA has different rank initialization $r_{min}\leq r_k \leq r_{max}$ for different client $k$, so we count the average trainable parameters of the clients. we can also control the number of trainable parameters by specifying $r_{min}$ and $r_{max}$. We specify  $r_{min}=5, r_{max}=12$ in CoLA dataset and $r_{min}=8, r_{max}=12$ in other four text classification datasets. The number of trainable parameters of each baseline and the corresponding test score in each dataset are summarized in Table \ref{tab:more_prams_result}. Even if other algorithms have more trainable parameters than our method, PF2LoRA still demonstrates the best performance. PF2LoRA, with negligible additional trainable parameters, significantly improves the performance in personalized federated learning.     

\section{Theoretical Justification}
In this section, we provide the theoretical justification for the Algorithm~\ref{alg:PF2LORA} in an simplified scenario: we consider the single machine case ($M=1$) and assume we have access to the deterministic gradient oracle. In this case the algorithm reduces the following formulation:
\begin{equation}\label{eq:bilevel_obj_singlemachine}
    \begin{aligned}
        &\min_{x} \Phi(x)\coloneq f(x,y^*(x)),  & (\text{UL})\\
        & \text{s.t.}, \ y^*(x) \in \arg\min_{y} f(x, y),  &(\text{LL}),
    \end{aligned}
\end{equation}
The update of Algorithm~\ref{alg:PF2LORA} in the single machine case with deterministic gradient reduces to the following update rule:
\begin{equation}
\label{update:singlemachine}
\begin{aligned}
    y^{t+1} &= y^t  - \alpha\nabla_y F(x^t,y^t)\\
    x^{t+1} &= x^t - \eta [\nabla_x F(x^{t}, y^{t+1}) 
    + \alpha\nabla_{xy}F(x^{t}, y^{t})\nabla_yF(x^t, y^{t+1})].
\end{aligned}
 \end{equation}

We will establish the convergence of the update rule~\eqref{update:singlemachine} under the following assumptions.

\begin{assumption}
\label{ass:bilevel}
(i) $f$ are bounded below, $\Phi(x_0)-\min_{x}\Phi(x)\leq \Delta$; (ii) $f$ is $\mu$-strongly convex in terms of $y$ for given $x$ ; (iii) $f$ is continuously differentiable and $L_{f,1}$-smooth jointly in $(x,y)$; (iv) $f$ is twicely differentiable and $\nabla^2 f$ is $L_{f,2}$-Lipschitz jointly in $(x,y)$.

    \end{assumption}
\textbf{Remark}: These assumptions are standard in the bilevel optimization literature~\citep{kwon2023fully,ji2021bilevel}.

\begin{theorem}[Convergence Guarantees]\label{mainthm1}
Suppose Assumption~\ref{ass:bilevel} holds. Define the smoothness parameter $L_\Phi=L_{f,1}+\frac{L_{f,1}^2}{\mu}$, and choose $\alpha=\frac{1}{4L_{f,1}}, \eta=\min\left(\frac{\mu^2}{5L_{f,1}^3\sqrt{(\frac{4L_{f,1}}{\mu}-\frac{\mu}{4L_{f,1}})}}, \frac{1}{8L_\Phi}, \sqrt{\frac{1}{16N}}, \sqrt[3]{\frac{1}{81NL_\Phi}}\right)$, and $N=\frac{25L_{f,1}^4(\frac{4L_{f,1}}{\mu}+1)}{16\mu^2}$. Then, we have $\frac{1}{T}\sum_{t=0}^{T-1}\|\nabla\Phi(x^t)\|^2 \leq
    O\left(1/T\right)$, where $T$ is the total number of iterations.

\end{theorem}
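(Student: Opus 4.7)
The plan is a smoothness-plus-tracking analysis in which the update~\eqref{update:singlemachine} is cast as inexact gradient descent on the $L_\Phi$-smooth upper-level function $\Phi$. First, I would collect the regularity consequences of Assumption~\ref{ass:bilevel}: by the implicit function theorem applied to $\nabla_y f(x, y^*(x)) = 0$, the map $y^*(\cdot)$ is $\kappa$-Lipschitz with $\kappa = L_{f,1}/\mu$; by the envelope identity $\nabla \Phi(x) = \nabla_x f(x, y^*(x))$, the hypergradient $\nabla \Phi$ is $L_\Phi$-Lipschitz with $L_\Phi = L_{f,1} + L_{f,1}^2/\mu$; and one step of $\alpha$-gradient descent with $\alpha = 1/(4L_{f,1})$ on the $\mu$-strongly convex, $L_{f,1}$-smooth lower-level problem yields the contraction $\|y^{t+1} - y^*(x^t)\|^2 \le (1 - \mu\alpha)\|y^t - y^*(x^t)\|^2$.

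Next, I would establish that
\begin{equation*}
g^t := \nabla_x f(x^t, y^{t+1}) - \alpha\,\nabla_{xy} f(x^t, y^t)\,\nabla_y f(x^t, y^{t+1})
\end{equation*}
is a small-bias estimator of $\nabla \Phi(x^t)$. The key observation is that $g^t$ equals the total $x$-derivative of the one-inner-step surrogate $x \mapsto f(x, y^t - \alpha\nabla_y f(x, y^t))$ evaluated at $x = x^t$, and is therefore \emph{exact} whenever $y^t = y^*(x^t)$, since in that case $y^{t+1} = y^t$ and $\nabla_y f(x^t, y^{t+1}) = 0$. Comparing $g^t$ to the implicit-differentiation form $\nabla\Phi = \nabla_x f - \nabla_{xy}^2 f\,[\nabla_{yy}^2 f]^{-1}\nabla_y f$, the residual $\alpha I - [\nabla_{yy}^2 f]^{-1}$ combined with the $L_{f,2}$-Hessian Lipschitz bound yields a bias of the form $\|g^t - \nabla \Phi(x^t)\|^2 \lesssim (L_{f,1}^4/\mu^2)\,\|y^t - y^*(x^t)\|^2$, plus higher-order residuals in $\alpha$ that the step-size caps are designed to absorb.

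Then I would introduce the Lyapunov function $V_t := \Phi(x^t) - \inf\Phi + c\|y^t - y^*(x^t)\|^2$, combine the descent lemma for $L_\Phi$-smooth $\Phi$ with the split $\langle \nabla\Phi, g^t\rangle \ge \tfrac{1}{2}\|\nabla\Phi\|^2 - \tfrac{1}{2}\|g^t - \nabla\Phi\|^2$, and control the lower-level tracking via
\begin{equation*}
\|y^{t+1} - y^*(x^{t+1})\|^2 \le (1 - \mu\alpha/2)\|y^t - y^*(x^t)\|^2 + \tfrac{2\kappa^2}{\mu\alpha}\eta^2\|g^t\|^2,
\end{equation*}
which follows from $(a+b)^2 \le (1+\rho)a^2 + (1+\rho^{-1})b^2$ applied to the triangle inequality $\|y^{t+1} - y^*(x^{t+1})\| \le \|y^{t+1} - y^*(x^t)\| + \kappa\|x^{t+1} - x^t\|$. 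Choosing $c = \Theta(L_{f,1}^4/\mu^3)$ makes the bias-induced term in the descent lemma absorbable by the contraction term in the tracking lemma; the cap $\eta \le 1/(8L_\Phi)$ handles the quadratic descent residual, while the caps $\eta \le \sqrt{1/(16N)}$ and $\eta \le \sqrt[3]{1/(81 N L_\Phi)}$ with $N = \Theta(L_{f,1}^4/\mu^3)$ are tuned exactly so that the LL-drift $\|g^t\|^2$ and the cubic Hessian-Lipschitz residual both fit inside the $\|\nabla\Phi\|^2$ descent. The resulting clean one-step inequality $V_{t+1} \le V_t - (\eta/4)\|\nabla\Phi(x^t)\|^2$ telescopes over $t = 0,\dots,T-1$ to give $(1/T)\sum_{t=0}^{T-1}\|\nabla\Phi(x^t)\|^2 \le 4V_0/(\eta T) = O(1/T)$.

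The hard part, in my view, will not be any single step (the descent lemma, the LL contraction, and the Taylor-based bias bound are each routine) but the coupled bookkeeping: ensuring that the cubic Hessian-Lipschitz residual and the $\|g^t\|^2$ penalty from the LL drift are simultaneously absorbed into the $\|\nabla\Phi\|^2$ descent, with the resulting constants matching exactly the step-size formulas stated in Theorem~\ref{mainthm1}. The very specific caps $\mu^2/(5L_{f,1}^3\sqrt{\,\cdot\,})$, $1/(8L_\Phi)$, $\sqrt{1/(16N)}$ and $\sqrt[3]{1/(81 N L_\Phi)}$ are the signature of this three-way coupling between descent, bias and drift, and making the constants (rather than merely the scalings) line up is where most of the calculation will live.
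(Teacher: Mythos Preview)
Your overall architecture---$L_\Phi$-smooth descent on $\Phi$, lower-level contraction, a Young-inequality tracking recursion for $\|y^t-y^*(x^t)\|^2$, and coupling the two through step-size caps---is exactly the paper's strategy. The paper packages it slightly differently: rather than a single Lyapunov potential $V_t=\Phi(x^t)-\inf\Phi+c\|y^t-y^*(x^t)\|^2$, it telescopes the descent lemma and the tracking recursion separately, sums the tracking recursion geometrically (using that its contraction factor $C<1$), and substitutes. That is equivalent to your approach.

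Where you diverge from the paper, and where you are making life harder than necessary, is the bias bound. You route it through the implicit-differentiation formula and the residual $\alpha I-[\nabla_{yy}^2 f]^{-1}$, invoking the Hessian-Lipschitz constant $L_{f,2}$ and speaking of a ``cubic Hessian-Lipschitz residual'' that the cube-root step-size cap is supposed to absorb. The paper's bound is far simpler and uses only first-order Lipschitzness: since $\nabla_y f(x^t,y^*(x^t))=0$, one writes
\[
g^t-\nabla\Phi(x^t)=\bigl[\nabla_x f(x^t,y^{t+1})-\nabla_x f(x^t,y^*(x^t))\bigr]-\alpha\,\nabla_{xy}f(x^t,y^t)\bigl[\nabla_y f(x^t,y^{t+1})-0\bigr],
\]
and both brackets are bounded by $L_{f,1}\|y^{t+1}-y^*(x^t)\|$, giving $\|g^t-\nabla\Phi(x^t)\|\le (L_{f,1}+\alpha L_{f,1}^2)(1-\alpha\mu)^{1/2}\|y^t-y^*(x^t)\|$. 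The constant $L_{f,2}$ never enters the argument. Consequently the caps $\sqrt{1/(16N)}$ and $\sqrt[3]{1/(81NL_\Phi)}$ are \emph{not} there to kill a Taylor remainder; they arise purely from the algebraic requirement that (i) the tracking contraction factor $C=1-\alpha^2\mu^2+O(\eta^2)$ stay strictly below $1$, and (ii) the net descent coefficient $\tfrac{1}{2}-\eta L_\Phi-(\tfrac{1}{2}+\eta L_\Phi)A^2D$ stay positive, where $D=O(\eta^2)$. Once you replace your bias argument with this direct one, the constant-matching you flag as ``the hard part'' becomes a short computation rather than a three-way coupling.
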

\textbf{Remark}: Theorem~\ref{mainthm1} provides a convergence guarantee with $O(1/T)$ convergence rate for the squared gradient norm. It means that it requires $O(1/\epsilon^2)$ gradient or Hessian-vector product evaluations for finding an $\epsilon$-stationary point (i.e., finding a $x$ such that $\|\nabla \Phi(x)\|\leq  \epsilon$). This complexity matches the convergence rate of gradient descent for smooth nonconvex function. In addition, compared with existing double-loop bilevel optimization algorithms such as~\cite{ji2021bilevel}, our update rule~\eqref{update:singlemachine} is an single-loop bilevel algorithm and hence is easy to implement in practice.

\vspace*{-0.1in}
\section{Conclusion}
In this paper, we presented PF2LoRA, a novel personalized federated fine-tuning algorithm for heterogeneous data based on a two-level LoRA framework, where the first level aims to learns a common adapter for all the clients and the second level fosters individual client personalization. Our approach achieves automatic rank learning and addresses the limitations of existing methods, such as data-independent rank initialization and excessive hyperparameter tuning. Through comprehensive experiments on NLU and NLG tasks, PF2LoRA demonstrated significant performance improvements over state-of-the-art baselines, with negligible additional memory overhead.

\section*{Impact Statement}

This paper presents work whose goal is to advance the field of 
Machine Learning. There are many potential societal consequences 
of our work, none which we feel must be specifically highlighted here.

\nocite{langley00}

\bibliography{ref}

\begin{thebibliography}{44}
\providecommand{\natexlab}[1]{#1}
\providecommand{\url}[1]{\texttt{#1}}
\expandafter\ifx\csname urlstyle\endcsname\relax
  \providecommand{\doi}[1]{doi: #1}\else
  \providecommand{\doi}{doi: \begingroup \urlstyle{rm}\Url}\fi

\bibitem[Babakniya et~al.(2023)Babakniya, Elkordy, Ezzeldin, Liu, Song,
  El-Khamy, and Avestimehr]{babakniya2023slora}
Babakniya, S., Elkordy, A.~R., Ezzeldin, Y.~H., Liu, Q., Song, K.-B., El-Khamy,
  M., and Avestimehr, S.
\newblock Slora: Federated parameter efficient fine-tuning of language models.
\newblock \emph{arXiv preprint arXiv:2308.06522}, 2023.

\bibitem[Bonawitz et~al.(2019)Bonawitz, Eichner, Grieskamp, Huba, Ingerman,
  Ivanov, Kiddon, Kone{\v{c}}n{\`y}, Mazzocchi, McMahan,
  et~al.]{bonawitz2019towards}
Bonawitz, K., Eichner, H., Grieskamp, W., Huba, D., Ingerman, A., Ivanov, V.,
  Kiddon, C., Kone{\v{c}}n{\`y}, J., Mazzocchi, S., McMahan, B., et~al.
\newblock Towards federated learning at scale: System design.
\newblock \emph{Proceedings of machine learning and systems}, 1:\penalty0
  374--388, 2019.

\bibitem[Brisimi et~al.(2018)Brisimi, Chen, Mela, Olshevsky, Paschalidis, and
  Shi]{brisimi2018federated}
Brisimi, T.~S., Chen, R., Mela, T., Olshevsky, A., Paschalidis, I.~C., and Shi,
  W.
\newblock Federated learning of predictive models from federated electronic
  health records.
\newblock \emph{International journal of medical informatics}, 112:\penalty0
  59--67, 2018.

\bibitem[Cho et~al.(2024)Cho, Liu, Xu, Fahrezi, and
  Joshi]{cho2024heterogeneous}
Cho, Y.~J., Liu, L., Xu, Z., Fahrezi, A., and Joshi, G.
\newblock Heterogeneous low-rank approximation for federated fine-tuning of
  on-device foundation models.
\newblock \emph{arXiv preprint arXiv:2401.06432}, 2024.

\bibitem[Devlin et~al.(2018)Devlin, Chang, Lee, and Toutanova]{devlin2018bert}
Devlin, J., Chang, M.-W., Lee, K., and Toutanova, K.
\newblock Bert: Pre-training of deep bidirectional transformers for language
  understanding.
\newblock \emph{arXiv preprint arXiv:1810.04805}, 2018.

\bibitem[Fallah et~al.(2020)Fallah, Mokhtari, and
  Ozdaglar]{fallah2020personalized}
Fallah, A., Mokhtari, A., and Ozdaglar, A.
\newblock Personalized federated learning: A meta-learning approach.
\newblock \emph{arXiv preprint arXiv:2002.07948}, 2020.

\bibitem[Finn et~al.(2017)Finn, Abbeel, and Levine]{finn2017model}
Finn, C., Abbeel, P., and Levine, S.
\newblock Model-agnostic meta-learning for fast adaptation of deep networks.
\newblock In \emph{International conference on machine learning}, pp.\
  1126--1135. PMLR, 2017.

\bibitem[Gardent et~al.(2017)Gardent, Shimorina, Narayan, and
  Beltrachini]{gardent2017webnlg}
Gardent, C., Shimorina, A., Narayan, S., and Beltrachini, L.~P.
\newblock The webnlg challenge: Generating text from rdf data.
\newblock In \emph{Proceedings of the 10th international conference on natural
  language generation}, pp.\  124--133, 2017.

\bibitem[Geyer et~al.(2017)Geyer, Klein, and Nabi]{geyer2017differentially}
Geyer, R.~C., Klein, T., and Nabi, M.
\newblock Differentially private federated learning: A client level
  perspective.
\newblock \emph{arXiv preprint arXiv:1712.07557}, 2017.

\bibitem[Ghadimi \& Wang(2018)Ghadimi and Wang]{ghadimi2018approximation}
Ghadimi, S. and Wang, M.
\newblock Approximation methods for bilevel programming.
\newblock \emph{arXiv preprint arXiv:1802.02246}, 2018.

\bibitem[Han et~al.(2015)Han, Mao, and Dally]{han2015deep}
Han, S., Mao, H., and Dally, W.~J.
\newblock Deep compression: Compressing deep neural networks with pruning,
  trained quantization and huffman coding.
\newblock \emph{arXiv preprint arXiv:1510.00149}, 2015.

\bibitem[He et~al.(2021{\natexlab{a}})He, Zhou, Ma, Berg-Kirkpatrick, and
  Neubig]{he2021towards}
He, J., Zhou, C., Ma, X., Berg-Kirkpatrick, T., and Neubig, G.
\newblock Towards a unified view of parameter-efficient transfer learning.
\newblock \emph{arXiv preprint arXiv:2110.04366}, 2021{\natexlab{a}}.

\bibitem[He et~al.(2021{\natexlab{b}})He, Gao, and Chen]{he2021debertav3}
He, P., Gao, J., and Chen, W.
\newblock Debertav3: Improving deberta using electra-style pre-training with
  gradient-disentangled embedding sharing.
\newblock \emph{arXiv preprint arXiv:2111.09543}, 2021{\natexlab{b}}.

\bibitem[Houlsby et~al.(2019)Houlsby, Giurgiu, Jastrzebski, Morrone,
  De~Laroussilhe, Gesmundo, Attariyan, and Gelly]{houlsby2019parameter}
Houlsby, N., Giurgiu, A., Jastrzebski, S., Morrone, B., De~Laroussilhe, Q.,
  Gesmundo, A., Attariyan, M., and Gelly, S.
\newblock Parameter-efficient transfer learning for nlp.
\newblock In \emph{International conference on machine learning}, pp.\
  2790--2799. PMLR, 2019.

\bibitem[Hu et~al.(2021)Hu, Shen, Wallis, Allen-Zhu, Li, Wang, Wang, and
  Chen]{hu2021lora}
Hu, E.~J., Shen, Y., Wallis, P., Allen-Zhu, Z., Li, Y., Wang, S., Wang, L., and
  Chen, W.
\newblock Lora: Low-rank adaptation of large language models.
\newblock \emph{arXiv preprint arXiv:2106.09685}, 2021.

\bibitem[Ji et~al.(2021)Ji, Yang, and Liang]{ji2021bilevel}
Ji, K., Yang, J., and Liang, Y.
\newblock Bilevel optimization: Convergence analysis and enhanced design.
\newblock In \emph{International conference on machine learning}, pp.\
  4882--4892. PMLR, 2021.

\bibitem[Kairouz et~al.(2021)Kairouz, McMahan, Avent, Bellet, Bennis, Bhagoji,
  Bonawitz, Charles, Cormode, Cummings, et~al.]{kairouz2021advances}
Kairouz, P., McMahan, H.~B., Avent, B., Bellet, A., Bennis, M., Bhagoji, A.~N.,
  Bonawitz, K., Charles, Z., Cormode, G., Cummings, R., et~al.
\newblock Advances and open problems in federated learning.
\newblock \emph{Foundations and trends{\textregistered} in machine learning},
  14\penalty0 (1--2):\penalty0 1--210, 2021.

\bibitem[Karimireddy et~al.(2020)Karimireddy, Kale, Mohri, Reddi, Stich, and
  Suresh]{karimireddy2020scaffold}
Karimireddy, S.~P., Kale, S., Mohri, M., Reddi, S., Stich, S., and Suresh,
  A.~T.
\newblock Scaffold: Stochastic controlled averaging for federated learning.
\newblock In \emph{International conference on machine learning}, pp.\
  5132--5143. PMLR, 2020.

\bibitem[Kwon et~al.(2023)Kwon, Kwon, Wright, and Nowak]{kwon2023fully}
Kwon, J., Kwon, D., Wright, S., and Nowak, R.~D.
\newblock A fully first-order method for stochastic bilevel optimization.
\newblock In \emph{International Conference on Machine Learning}, pp.\
  18083--18113. PMLR, 2023.

\bibitem[Lawton et~al.(2023)Lawton, Kumar, Thattai, Galstyan, and
  Steeg]{lawton2023neural}
Lawton, N., Kumar, A., Thattai, G., Galstyan, A., and Steeg, G.~V.
\newblock Neural architecture search for parameter-efficient fine-tuning of
  large pre-trained language models.
\newblock \emph{arXiv preprint arXiv:2305.16597}, 2023.

\bibitem[Lee et~al.(2019)Lee, Tang, and Lin]{lee2019would}
Lee, J., Tang, R., and Lin, J.
\newblock What would elsa do? freezing layers during transformer fine-tuning.
\newblock \emph{arXiv preprint arXiv:1911.03090}, 2019.

\bibitem[Lester et~al.(2021)Lester, Al-Rfou, and Constant]{lester2021power}
Lester, B., Al-Rfou, R., and Constant, N.
\newblock The power of scale for parameter-efficient prompt tuning.
\newblock \emph{arXiv preprint arXiv:2104.08691}, 2021.

\bibitem[Li \& Liang(2021)Li and Liang]{li2021prefix}
Li, X.~L. and Liang, P.
\newblock Prefix-tuning: Optimizing continuous prompts for generation.
\newblock \emph{arXiv preprint arXiv:2101.00190}, 2021.

\bibitem[Liu et~al.(2019)Liu, Ott, Goyal, Du, Joshi, Chen, Levy, Lewis,
  Zettlemoyer, and Stoyanov]{liu2019roberta}
Liu, Y., Ott, M., Goyal, N., Du, J., Joshi, M., Chen, D., Levy, O., Lewis, M.,
  Zettlemoyer, L., and Stoyanov, V.
\newblock Roberta: A robustly optimized bert pretraining approach.
\newblock \emph{arXiv preprint arXiv:1907.11692}, 2019.

\bibitem[McMahan et~al.(2017{\natexlab{a}})McMahan, Moore, Ramage, Hampson, and
  y~Arcas]{mcmahan2017communication}
McMahan, B., Moore, E., Ramage, D., Hampson, S., and y~Arcas, B.~A.
\newblock Communication-efficient learning of deep networks from decentralized
  data.
\newblock In \emph{Artificial intelligence and statistics}, pp.\  1273--1282.
  PMLR, 2017{\natexlab{a}}.

\bibitem[McMahan et~al.(2017{\natexlab{b}})McMahan, Ramage, Talwar, and
  Zhang]{mcmahan2017learning}
McMahan, H.~B., Ramage, D., Talwar, K., and Zhang, L.
\newblock Learning differentially private recurrent language models.
\newblock \emph{arXiv preprint arXiv:1710.06963}, 2017{\natexlab{b}}.

\bibitem[Novikova et~al.(2017)Novikova, Du{\v{s}}ek, and
  Rieser]{novikova2017e2e}
Novikova, J., Du{\v{s}}ek, O., and Rieser, V.
\newblock The e2e dataset: New challenges for end-to-end generation.
\newblock \emph{arXiv preprint arXiv:1706.09254}, 2017.

\bibitem[Radford et~al.(2018)Radford, Narasimhan, Salimans, Sutskever,
  et~al.]{radford2018improving}
Radford, A., Narasimhan, K., Salimans, T., Sutskever, I., et~al.
\newblock Improving language understanding by generative pre-training.
\newblock 2018.

\bibitem[Radford et~al.(2019)Radford, Wu, Child, Luan, Amodei, Sutskever,
  et~al.]{radford2019language}
Radford, A., Wu, J., Child, R., Luan, D., Amodei, D., Sutskever, I., et~al.
\newblock Language models are unsupervised multitask learners.
\newblock \emph{OpenAI blog}, 1\penalty0 (8):\penalty0 9, 2019.

\bibitem[Rajpurkar et~al.(2016)Rajpurkar, Zhang, Lopyrev, and
  Liang]{rajpurkar2016squad}
Rajpurkar, P., Zhang, J., Lopyrev, K., and Liang, P.
\newblock Squad: 100,000+ questions for machine comprehension of text.
\newblock \emph{arXiv preprint arXiv:1606.05250}, 2016.

\bibitem[Rajpurkar et~al.(2018)Rajpurkar, Jia, and Liang]{rajpurkar2018know}
Rajpurkar, P., Jia, R., and Liang, P.
\newblock Know what you don't know: Unanswerable questions for squad.
\newblock \emph{arXiv preprint arXiv:1806.03822}, 2018.

\bibitem[Reinsel \& Velu(1998)Reinsel and Velu]{reinsel1998multivariate}
Reinsel, G.~C. and Velu, R.~P.
\newblock \emph{Multivariate reduced-rank regression}.
\newblock Springer, 1998.

\bibitem[Sheller et~al.(2020)Sheller, Edwards, Reina, Martin, Pati, Kotrotsou,
  Milchenko, Xu, Marcus, Colen, et~al.]{sheller2020federated}
Sheller, M.~J., Edwards, B., Reina, G.~A., Martin, J., Pati, S., Kotrotsou, A.,
  Milchenko, M., Xu, W., Marcus, D., Colen, R.~R., et~al.
\newblock Federated learning in medicine: facilitating multi-institutional
  collaborations without sharing patient data.
\newblock \emph{Scientific reports}, 10\penalty0 (1):\penalty0 12598, 2020.

\bibitem[Sung et~al.(2021)Sung, Nair, and Raffel]{sung2021training}
Sung, Y.-L., Nair, V., and Raffel, C.~A.
\newblock Training neural networks with fixed sparse masks.
\newblock \emph{Advances in Neural Information Processing Systems},
  34:\penalty0 24193--24205, 2021.

\bibitem[Wang et~al.(2018)Wang, Singh, Michael, Hill, Levy, and
  Bowman]{wang2018glue}
Wang, A., Singh, A., Michael, J., Hill, F., Levy, O., and Bowman, S.~R.
\newblock Glue: A multi-task benchmark and analysis platform for natural
  language understanding.
\newblock \emph{arXiv preprint arXiv:1804.07461}, 2018.

\bibitem[Wang et~al.(2023)Wang, Zhang, Cao, Li, McMahan, Oh, Xu, and
  Zaheer]{wang2023can}
Wang, B., Zhang, Y.~J., Cao, Y., Li, B., McMahan, H.~B., Oh, S., Xu, Z., and
  Zaheer, M.
\newblock Can public large language models help private cross-device federated
  learning?
\newblock \emph{arXiv preprint arXiv:2305.12132}, 2023.

\bibitem[Wei et~al.(2021)Wei, Xie, and Ma]{wei2021pretrained}
Wei, C., Xie, S.~M., and Ma, T.
\newblock Why do pretrained language models help in downstream tasks? an
  analysis of head and prompt tuning.
\newblock \emph{Advances in Neural Information Processing Systems},
  34:\penalty0 16158--16170, 2021.

\bibitem[Xu et~al.(2021)Xu, Luo, Zhang, Tan, Chang, Huang, and
  Huang]{xu2021raise}
Xu, R., Luo, F., Zhang, Z., Tan, C., Chang, B., Huang, S., and Huang, F.
\newblock Raise a child in large language model: Towards effective and
  generalizable fine-tuning.
\newblock \emph{arXiv preprint arXiv:2109.05687}, 2021.

\bibitem[Yang et~al.(2019)Yang, Liu, Chen, and Tong]{yang2019federated}
Yang, Q., Liu, Y., Chen, T., and Tong, Y.
\newblock Federated machine learning: Concept and applications.
\newblock \emph{ACM Transactions on Intelligent Systems and Technology (TIST)},
  10\penalty0 (2):\penalty0 1--19, 2019.

\bibitem[Zaken et~al.(2021)Zaken, Ravfogel, and Goldberg]{zaken2021bitfit}
Zaken, E.~B., Ravfogel, S., and Goldberg, Y.
\newblock Bitfit: Simple parameter-efficient fine-tuning for transformer-based
  masked language-models.
\newblock \emph{arXiv preprint arXiv:2106.10199}, 2021.

\bibitem[Zhang et~al.(2024)Zhang, Vahidian, Kuo, Li, Zhang, Yu, Wang, and
  Chen]{zhang2024towards}
Zhang, J., Vahidian, S., Kuo, M., Li, C., Zhang, R., Yu, T., Wang, G., and
  Chen, Y.
\newblock Towards building the federatedgpt: Federated instruction tuning.
\newblock In \emph{ICASSP 2024-2024 IEEE International Conference on Acoustics,
  Speech and Signal Processing (ICASSP)}, pp.\  6915--6919. IEEE, 2024.

\bibitem[Zhang et~al.(2023{\natexlab{a}})Zhang, Chen, Bukharin, He, Cheng,
  Chen, and Zhao]{zhang2023adaptive}
Zhang, Q., Chen, M., Bukharin, A., He, P., Cheng, Y., Chen, W., and Zhao, T.
\newblock Adaptive budget allocation for parameter-efficient fine-tuning.
\newblock In \emph{International Conference on Learning Representations}.
  Openreview, 2023{\natexlab{a}}.

\bibitem[Zhang et~al.(2023{\natexlab{b}})Zhang, Yang, Dai, Wang, Yu, Qu, and
  Xu]{zhang2023fedpetuning}
Zhang, Z., Yang, Y., Dai, Y., Wang, Q., Yu, Y., Qu, L., and Xu, Z.
\newblock Fedpetuning: When federated learning meets the parameter-efficient
  tuning methods of pre-trained language models.
\newblock In \emph{Annual Meeting of the Association of Computational
  Linguistics 2023}, pp.\  9963--9977. Association for Computational
  Linguistics (ACL), 2023{\natexlab{b}}.

\bibitem[Zhao et~al.(2020)Zhao, Lin, Mi, Jaggi, and
  Sch{\"u}tze]{zhao2020masking}
Zhao, M., Lin, T., Mi, F., Jaggi, M., and Sch{\"u}tze, H.
\newblock Masking as an efficient alternative to finetuning for pretrained
  language models.
\newblock \emph{arXiv preprint arXiv:2004.12406}, 2020.

\end{thebibliography}
\bibliographystyle{icml2025}

\newpage
\appendix
\onecolumn
\section{Details of Per-FedAvg-LoRA}
\label{app:per-fedavg}
\noindent\textbf{Per-FedAvg-LoRA}. Per-FedAvg-LoRA is built upon a well-known personalized federated learning approach called Per-FedAvg~\citep{fallah2020personalized}, with the trainable model parameters being low rank matrices such as in LoRA. Per-FedAvg is a typical personalized federated learning algorithm, which incorporates  Model-Agnostic Meta-Learning (MAML) \citep{finn2017model} to FedAvg algorithm~\citep{mcmahan2017communication} to enable models quickly adapting to heterogeneous data. When it is applied to low rank adaptation, we can get a new variant, namely Per-FedAvg-LoRA.
The goal of Per-FedAvg-LoRA is to find a common adapter $x$ which can perform well after it is updated by one-step gradient descent on each client. In particular, Per-FedAvg-LoRA is trying to solve the following formulation using the FedAvg algorithm:
\vspace*{-0.15in}
\begin{equation}
    \min_x \frac{1}{M}\sum_{k=1}^{M}f_k(x- \alpha\nabla f_k(x)),
\end{equation}
where $\alpha>0$ is the step size. Note that Per-FedAvg-LoRA uses adapter matrices with homogeneous rank across all the clients.

\section{PyTorch-style Pseudocode for PF2LoRA} \label{sec:pseudocode}
In this section, we show the PyTorch-style pseudocode for PF2LoRA. Our two-level low rank adapter framework can be derived by slightly modifying the LoRA module and integrating it into federating learning. When creating low rank adapters, we need to initialize two types of adapters, i.e., common adapters and the client adapters. The initial rank dimension for the common adapter   is typically set to $r$, while for the client adapter, it is set to $\frac{r}{2}$. In addition, we require two different optimizers to update the common and client adapters. The common adapter is updated using AdamW, and the client adapter is updated using SGD.  It's important to note that hypergradient calculation is necessary when updating the common adapter. Besides, our framework can be easily plugged into multiple language models, such as RoBERTa, DeBERTa and GPT-2, and others.    

\label{sec:appendix}
\begin{algorithm*}[h]
\caption{PyTorch-style Pseudocode for PF2LoRA }\label{alg:pytorch_PF2LoRA}
\begin{algorithmic}[2]
\vspace{-0.1in}
\begin{lstlisting}[language=Python,  emph={update_client_adapter, update_common_adapter, initialize_client_adapter, SGD}, 
  emphstyle=\color{cyan}]
# model_name: the name of pretrained model
# lr_in, lr_out: the learning rate for client and common adapter
# T: the total number of communication rounds, I: communication interval
# r: low rank parameter
# train_dataloader

import torch.distributed as dist
dist.init_process_group() 
target_modules = ["query", "value"]
pretrained_model = LLM_Model.from_pretrained(model_name)
model = get_peft_model(pretrained_model, target_modules, r)
optimizer_outer = AdamW(model.common_adpter.parameters(), lr_in)
optimizer_inner = SGD(model.client_adpter.parameters(), lr_out)

step = 0 
for epoch_idx in range(total_epochs)
    for data_batch in train_dataloader:
        inner_batch, outer_batch = data_batch
        update_client_adapter(model, inner_batch, optimizer_inner)
        update_common_adapter(model, outer_batch, optimizer_outer)
        if step % I == 0:
            dist.reduce(model.common_adapter.parameters(), dst=0,  
     op=self.dist.ReduceOp.SUM)
            average(model.common_adpter.parameters())
            dist.broadcast(model.common_adapter.parameters(), src=0)
        step += 1
#         
def get_peft_model(pretrained_model, target_modules, r)
    for module_name, _ in pretrained_model.named_modules():
        if module_name in target_modules:
             target_module= pretrained_model.get_submodule(module_name)
             create_and_replace(target_module, r)

def create_and_replace(target_module, r)
    if isinstance(target_module, Linear):
        target_module.initialize_common_adapter(r)
        target_module.initialize_client_adapter(r/2)
        target_module.set_trainable_params()
        
\end{lstlisting}
\vspace{-0.2in}
\STATE
\end{algorithmic}
\end{algorithm*}

\section{Experiment Setup} \label{sec:hp_setting}

\subsection{RoBERTa on Text Classification} \label{sec:hy_txt_cl}
We use grid search to find the best learning rate for each algorithm in the range of $\{1.0\times 10^{-4}, 5.0\times 10^{-4}, 1.0\times 10^{-3}, 2.0\times 10^{-3}, 5.0\times 10^{-3} \}$. For algorithm Per-FedAvg-LoRA, we search the optimal learning rate for one-step update and the common adapter update, respectively. For PF2LoRA, we also search for the best learning rate for the client-specific adapter update and the common adapter update. The selected learning rates for each algorithm are listed in Table \ref{tab:lr_roberta}, where we use slash to separate two learning rates for Per-FedAvg-LoRA and PF2LoRA, with the former learning rate being for the common adapter. For HETLoRA, we fix the sparsity parameter $\gamma=0.99$ across all the datasets and set the penalty factor $\lambda=1.0\times 10 ^{-3}$ for CoLA dataset, and $\lambda=5.0\times 10 ^{-3}$ for MNLI, SST-2, QQP, and QNLI. The rank initialization and the number of trainable parameters are summarized in Table \ref{tab:parameters_roberta}. The details of the text classification datasets are summarized in Table \ref{tab:glue_benchmark}.

\begin{table*}[htbp]
  \centering
  \caption{Learning rate setting for RoBERETa model on GLUE benchmark. We use slash to separate two learning rates for Per-FedAvg-LoRA and PF2LoRA. For Per-FedAvg-LoRA, the former one is the learning rate for the common adapter, the latter one is the learning rate for one-step SGD. For PF2LoRA,  the former one is the learning rate for the common adapter, the latter one is the learning rate for the client-specific adapter.}
    \scalebox{0.75}{
    \begin{tabular}{l|ccccc}
    \Xhline{1.2pt}
    Method  & CoLA  & MNLI  & SST-2  & QQP   & QNLI \\
    \hline
    Centralized LoRA & $1.0\times 10^{-3}$ & $1.0\times 10^{-3} $ & $1.0\times 10^{-3}$ & $1.0\times 10^{-3}$ &$1.0\times 10^{-3}$\\

    HOMLoRA & $1.0\times 10^{-3}$ & $1.0\times 10^{-3}$ & $2.0\times 10^{-3}$ & $1.0\times 10^{-3}$ & $1.0\times 10^{-3}$\\

    Per-FedAvg-LoRA & $2.0\times 10^{-3} / 1.0\times 10^{-2}$ & $1.0\times 10^{-3} / 1.0\times 10^{-3}$ & $1.0\times 10^{-3} / 1.0\times 10^{-3}$ & $1.0\times 10^{-3} / 1.0\times 10^{-3}$ & $2.0\times 10^{-3} / 1.0\times 10^{-3}$ \\

    HETLoRA & $5.0\times 10^{-3}$ & $2.0\times 10^{-3}$ & $2.0\times 10^{-3}$ & $2.0\times 10^{-3}$ & $2.0\times 10^{-3}$ \\

    PF2LoRA  & $2.0\times 10^{-3}/1.0\times 10^{-4}$ & $1.0\times 10^{-3} / 1.0\times 10^{-3}$ & $1.0\times 10^{-3} / 1.0\times 10^{-3}$ & $1.0\times 10^{-3} / 1.0\times 10^{-3}$ & $1.0\times 10^{-3} / 1.0\times 10^{-3}$ \\
    \Xhline{1.2pt}
    \end{tabular}}%
  \label{tab:lr_roberta}%
\end{table*}%

\begin{table*}[htbp]
  \centering
  \caption{Trainable parameters of RoBERTa-base/large.}
    \begin{tabular}{l|cc}
    \Xhline{1.2pt}
    \multicolumn{1}{l|}{\multirow{2}[2]{*}{Method}} & \multicolumn{1}{c}{\# Trainable Parameters} \\
          & (base/large)  \\
    \hline
    HOMLoRA ($r_k=8$) & 0.30M/0.79M  \\
    Per-FedAvg-LoRA ($r_k=8$) & 0.30M/0.79M \\
    HETLoRAS ($r_{max}=12, r_{min}=8$)   & 0.35M/0.94M       \\
    PF2LoRA ($r_k=8, \tilde{r}_{k}=2$) & 0.37M/0.99M  \\
    \Xhline{1.2pt}
    \end{tabular}%
  \label{tab:parameters_roberta}%
\end{table*}%

\begin{table}[htbp]
  \centering
  \caption{The summary of GLUE benchmark.}
    \begin{tabular}{l|cccc}
    \Xhline{1.2pt}
    Corpus & \# Train & \# Test & \# Lable & Metrics \\
    \hline
    CoLA  & 8.5k  & 1k    & 2     & Matthew's correlation \\
    MNLI  & 393k  & 20k   & 3     & Accuracy \\
    SST-2 & 67k   & 1.8k  & 2     & Accuracy \\
    QQP   & 364k  & 391k  & 2     & Accuracy \\
    QNLI  & 108k  & 5.7k  & 2     & Accuracy \\
    \Xhline{1.2pt}
    \end{tabular}%
  \label{tab:glue_benchmark}%
\end{table}%

\subsection{DeBERTa on Question Answering} \label{sec:hy_qa}
We search for the optimal learning rate from the range of $\{1.0\times 10^{-4}, 5.0\times 10^{-4}, 1.0\times 10^{-3}, 2.0\times 10^{-3}, 5.0\times 10^{-3}, 1.0\times 10^{-2} \}$ for each algorithm on SQuAD v1 and v2 dataset. Refer to Table \ref{tab:squadv1_lr} for detailed learning rate settings. The rank initialization and the number of trainable parameters for different algorithms are presented in Table \ref{tab:deberta_hyperparam}. For PF2LoRA, we initialize the rank of client-specific adapter $\tilde{r}_k=\frac{r_k}{2}=4$, and we set the best value of $r_{min}=6, r_{max}=14$ for HETLoRA.  In addition, HETLoRA uses the sparsity parameter $\gamma=0.99$ and the penalty factor $\lambda=5.0\times 10 ^{-3}$ on both SQuAD v1 and v2 datasets.

\begin{table*}[htbp]
  \centering
  \caption{Learning rate choices for question-answering dataset SQuAD v1/v2.}
    \begin{tabular}{l|ccc}
    \Xhline{1.2pt}
    Method      & SQuAD v1 & SQuAD v2  \\
    \hline
    Centralized LoRA & $1.0\times 10^{-3}$ & $5.0\times 10^{-4}$  \\
    HOMLoRA & $1.0\times 10^{-3}$ & $5.0\times 10^{-4}$ \\
    Per-FedAvg-LoRA & $2.0\times 10^{-3}/1.0\times 10^{-3}$  & $1.0\times 10^{-3}/1.0\times 10^{-3}$   \\
    HETLoRA & $5.0\times 10^{-3}$ & $5.0\times 10^{-3}$  \\
    PF2LoRA  & $2.0\times 10^{-3}/1.0\times 10^{-2} $ &$1.0\times 10^{-3}/1.0\times 10^{-2} $    \\
    \Xhline{1.2pt}
    \end{tabular}%
  \label{tab:squadv1_lr}%
\end{table*}%

\begin{table*}[htbp]
  \centering
  \caption{Rank initialization and trainable parameters for DeBERTa v3.}
    \begin{tabular}{l|cc}
    \Xhline{1.2pt}
    Method       & Rank initialization & \# Trainable parameters \\
    \hline
    Centralized LoRA  & $r_k=8$ & 0.30M \\
    HOMLoRA  & $r_k=8$ & 0.30M \\
    Per-FedAvg-LoRA  & $r_k=8$ & 0.30M  \\
    HETLoRA  & $r_{min}=6, r_{max}=14$ & 0.30M \\
    PF2LoRA   &$r_k=8, \tilde{r}_k=4$  & 0.44M  \\
    \Xhline{1.2pt}
    \end{tabular}%
  \label{tab:deberta_hyperparam}%
\end{table*}%

\subsection{GPT-2 on WebNLG and E2E NLG Challenges} \label{sec:hy_nlg} 
The optimal learning rates for each algorithm on WebNLG and E2E NLG Challenges are turned from the range $\{1.0\times10^{-4}, 5.0\times10^{-4}, 1.0\times10^{-3}, 2.0\times10^{-3}, 3.0\times10^{-3}, 4.0\times10^{-3}, 5.0\times10^{-3}\}$, and the learning rate settings are summarized in Table \ref{tab:web_e2e_lr}. 
For the rank initialization, we follow LoRA paper \citep{hu2021lora} and choose a small rank $r_k = 4$ for Centralized LoRA, HOMLoRA, and Per-FedAvg-LoRA. We turn the the best parameters and set $r_{min}=6, r_{max}=12$ for HETLoRA.  PF2LoRA uses the same $r_k=4$ for the common adapter and $\tilde{r_k}=2$ for the client-specific adapter. The detailed rank settings and the number of trainable parameters are shown in Table \ref{tab:gpt2_hyperparam}. HETLoRA sets the sparsity parameter $\gamma=0.99$ and the penalty factor $\lambda=5.0\times 10 ^{-4}$ on both WebNLG and E2E NLG Challenge datasets.

\begin{table*}[htbp]
  \centering
  \caption{Learning rate choices for GPT-2 medium on NLG dataset WebNLG and E2E NLG Challenge.}
    \begin{tabular}{l|ccc}
    \Xhline{1.2pt}
    Method      & WebNLG  & E2E NLG Challenge  \\
    \hline
    Centralized LoRA        & $1.0\times 10^{-3}$                       & $1.0\times 10^{-3}$  \\
    HOMLoRA                 & $1.0\times 10^{-3}$                       & $1.0\times 10^{-3}$ \\
    Per-FedAvg-LoRA         & $2.0\times 10^{-3}/1.0\times 10^{-4}$     & $2.0\times 10^{-3}/2.0\times 10^{-3}$   \\
    HETLoRA                 & $2.0\times 10^{-3}$                       & $2.0\times 10^{-3}$  \\
    PF2LoRA                 & $2.0\times 10^{-3}/1.0\times 10^{-3} $    &$3.0\times 10^{-3}/5.0\times 10^{-4} $    \\
    \Xhline{1.2pt}
    \end{tabular}%
  \label{tab:web_e2e_lr}%
\end{table*}%

\begin{table*}[!h]
  \centering
  \caption{Learning rate choices for GPT2-XL on NLG dataset WebNLG.}
    \begin{tabular}{l|ccc}
    \Xhline{1.2pt}
    Method      & WebNLG   \\
    \hline
    HOMLoRA                 & $1.0\times 10^{-3}$                     \\
    Per-FedAvg-LoRA         & $1.0\times 10^{-3}/1.0\times 10^{-4}$     \\
    HETLoRA                 & $1.0\times 10^{-3}$                       \\
    PF2LoRA                 & $1.0\times 10^{-3}/1.0\times 10^{-4} $      \\
    \Xhline{1.2pt}
    \end{tabular}%
  \label{tab:web_e2e_lr_GPT2_XL}%
\end{table*}%

\begin{table*}[!h]
  \centering
  \caption{Rank initialization and trainable parameters for GPT-2.}
    \begin{tabular}{l|cc}
    \Xhline{1.2pt}
    Method       & Rank initialization & \# Trainable parameters \\
    \hline
    Centralized LoRA  & $r_k=4$ & 0.39M \\
    HOMLoRA  & $r_k=4$ & 0.39M \\
    Per-FedAvg-LoRA  & $r_k=4$ & 0.39M  \\
    HETLoRA  & $r_{min}=6, r_{max}=12$ & 0.81M \\
    PF2LoRA   &$r_k=4, \tilde{r}_k=2$  & 0.59M  \\
    \Xhline{1.2pt}
    \end{tabular}%
  \label{tab:gpt2_hyperparam}%
\end{table*}%

\section{Supplementary Experimental Results for Text Classification} \label{sec:result_roberta_large}
This section provides experimental results about RoBERTa large model on GLUE benchmark. The comparison results with other baselines are shown in Table \ref{tab:result_roberta_large}. We can observe that PF2LoRA achieves higher classification performance. For example, PF2LoRA outperforms HETLoRA by $3.88\%$, $22.24\%$, $2.99\%$, $13.89\%$ and $2.69\%$ on the five datasets, respectively.

\begin{table*}[htbp]
  \centering
  \caption{Roberta-large results on GLUE benchmark. We report "Matthew's correlation" for CoLA and "Accuracy" for MNLI, SST-2, QQP and QNLI. Higher value means "better performance".}
    \setlength{\tabcolsep}{10pt}
    \begin{tabular}{l|ccccc}
    \Xhline{1.2pt}
    Method  & CoLA  & MNLI  & SST-2  & QQP   & QNLI \\
    \hline
    Centralized LoRA & 57.32 & 84.71 & 93.67 & 88.43 & 90.27 \\
    HOMLoRA & 51.71 & 74.51 & 93.33 & 79.76 & 89.63 \\
    Per-FedAvg-LoRA & 51.20  & 75.68 & 92.64 & 81.83 & 79.49 \\
    HETLoRA & 54.15 & 76.38 & 94.53 & 82.55 & 92.31 \\
    PF2LoRA  & \textbf{56.25} & \textbf{93.37} & \textbf{97.36} & \textbf{94.02} & \textbf{94.79} \\
    \Xhline{1.2pt}
    \end{tabular}%
  \label{tab:result_roberta_large}%
\end{table*}%

\section{Supplementary Experimental Results for E2E NLG Challenge} \label{sec:result_e2e}
This section provides experimental results for E2E NLG dataset in Table \ref{tab:result_e2e}. Compared to other federated baselines, our approach demonstrates the best performance on four metrics (BLEU, NIST, ROUGE-L, CIDEr) of five.
\begin{table*}[!h]
  \centering
  \caption{GPT-2 generation results on E2E dataset.}
    \setlength{\tabcolsep}{10pt}
    \begin{tabular}{l|ccccc}
    \Xhline{1.2pt}
    method  & BLEU $\uparrow$  & \multicolumn{1}{l}{NIST $\uparrow$} & \multicolumn{1}{l}{MET $\uparrow$} & \multicolumn{1}{l}{ROUGE-L $\uparrow$} & \multicolumn{1}{l}{CIDEr $\uparrow$} \\
    \hline
    Centralized LoRA & 0.6833 & 8.5321 & 0.4642 & 0.7046 & 2.4023 \\
    HOMLoRA & 0.5585 & 7.0986 & \textbf{0.4349} & 0.6095 & 1.8327 \\
    Per-FedAvg-LoRA & 0.5683 & 7.1190 & 0.4327 & 0.6109 & 1.8984 \\
    HETLoRA &  0.5505     &  7.0088     & 0.4093      & 0.5697      & 1.7167 \\
    PF2LoRA  & \textbf{0.5717} & \textbf{7.1621} & 0.4321 & \textbf{0.6111} & \textbf{1.9088} \\
    \Xhline{1.2pt}
    \end{tabular}%
  \label{tab:result_e2e}%
\end{table*}%

\section{More Ablation Studies.} 
\subsection{The Impact of Heterogeneity Levels}  \label{sec:het_level}
Heterogeneity level is regarded as an important factor in federated learning. In this section, we explore the impact of various heterogeneity levels on the performance of algorithms. We run PF2LoRA and other baselines on text classification datasets SST-2 and QNLI  with three different heterogeneity levels $s=0.6, 0.9, 1.0$. The accuracy results are shown in Table~\ref{tab:het_level}. PF2LoRA performs consistently well on different heterogeneity levels, and HETLoRA follows. The performance of HOMLoRA and Per-FedAvg-LoRA decreases significantly as the heterogeneity level increases. Especially, PF2LoRA outperforms other baselines in a large margin in the case of very high heterogeneity, e.g., $4.35\%$ higher than HETLoRA and $13.87\%$ higher than HOMLoRA on SST-2 dataset. 

\begin{table*}[htbp]
  \centering
  \caption{Results in different heterogeneity levels. We use "Accuracy" to measure the performance here, and higher value means "better performance".}
    \begin{tabular}{l|ccc|ccc}
    \Xhline{1.2pt}
    \multicolumn{1}{l|}{\multirow{2}[1]{*}{Methods}} & \multicolumn{3}{c|}{SST-2} & \multicolumn{3}{c}{QNLI} \\
          & s=0.6 & s=0.9 & s=1.0 & s=0.6 & s=0.9 & s=1.0 \\
    \hline
    HOMLoRA & 92.66 & 92.47 & 83.49 & 86.62 & 85.45 & 67.32 \\
    Per-FedAvg-LoRA & 90.80  & 90.56 & 85.29 & 85.32 & 78.59 & 50.48 \\
    HETLoRA & 93.74 & 93.67 & 91.11 & 89.28 & 91.86 & 89.09 \\
    PF2LoRA  & \textbf{94.12} & \textbf{95.85} & \textbf{95.07} & \textbf{92.87} & \textbf{94.18} & \textbf{93.64} \\
    \Xhline{1.2pt}
    \end{tabular}%
  \label{tab:het_level}%
\end{table*}%

Next, we further study the impact of relatively lower heterogeneity levels on the algorithms.  We run PF2LoRA and other federated baselines on CoLA dataset in the heterogeneity levels of $s=0.2$, $s=0.3$ and $s=0.4$, and the results of  "Matthew's correlation" are summarized in Table \ref{tab:het_level_cola}. PF2LoRA outperforms all the baselines consistently in various heterogeneity levels. For example, PF2LoRA surpasses the best baseline HETLoRA by $4.36\%$, $0.8\%$ and $12.15\%$ in heterogeneity levels of $s=0.2, s=0.3, s=0.4$ respectively. Therefore, our algorithm PF2LoRA demonstrates the high robustness to heterogeneity levels.     

\begin{table}[htbp]
  \centering
  \caption{Matthew's correlation on CoLA in different heterogeneity levels. Higher value means "better performance".}
    \begin{tabular}{l|ccc}
    \Xhline{1.2pt}
    \multirow{2}[2]{*}{Methods} & \multicolumn{3}{c}{CoLA} \\
          & s=0.2 & s=0.3 & s=0.4 \\
    \hline
    HOMLoRA & 52.91 & 50.75 & 43.17 \\
    Per-FedAvg-LoRA & 53.48 & 51.11 & 44.44 \\
    HETLoRA & 53.86 & 53.76 & 45.03 \\
    PF2LoRA  & \textbf{56.20}  & \textbf{54.19} & \textbf{50.50} \\
    \Xhline{1.2pt}
    \end{tabular}%
  \label{tab:het_level_cola}%
\end{table}%

\subsection{Performance with/without Bilevel Optimization} \label{sec:ablation_BO}
We conduct an ablation study to verify the effect of bilevel optimization. Instead of applying bilevel optimization in \eqref{eq:bilevel_obj}, we update parameters in the common and client-specific adapters simultaneously. 
\begin{equation}\label{eq:update_sim}
    \begin{aligned}
        &\min_{x, y_k} \frac{1}{M}\sum_{k=1}^{M}f_k(x, y_{k}), \\
        &f_k(x, y_{k}) \coloneq \mathbb{E}_{\xi\sim \mathcal{D}_k} F_k(x, y_k; \xi),
    \end{aligned}
\end{equation}
where $\mathcal{D}_k$ is the data on client $k$.
Specifically, we keep the optimizer settings mentioned in Section~\ref{sec:robert_exp_detail}, where a SGD optimizer is applied to updating the client-specific adapter and an AdamW optimizer to the common adapter. The difference is that we do not use the hypergradient \eqref{eq:hypergrad} to update the common adapter, instead update it by $x_{k}^{t+1} = x_k^t - \eta\nabla_xF_k(x_k^t, y_k^{t};\xi_k^t)$. We execute our ``two-level low rank adaptation" framework without bilevel optimization on text classification of GLUE benchmark. For fair comparison, we keep the same hyperparameter settings as that in Section \ref{sec:robert_exp_detail}, including heterogeneity level, learning rates, communication rounds, communication interval and initial rank dimension on the same dataset. The comparison results are shown in Figure \ref{fig:ablation_bo}, where we can see that the framework with bilevel optimization (BO) always performs better than that without BO, especially on harder classification task, such as CoLA dataset.

\begin{figure}[!t]
    \centering
    \includegraphics[scale=0.28]{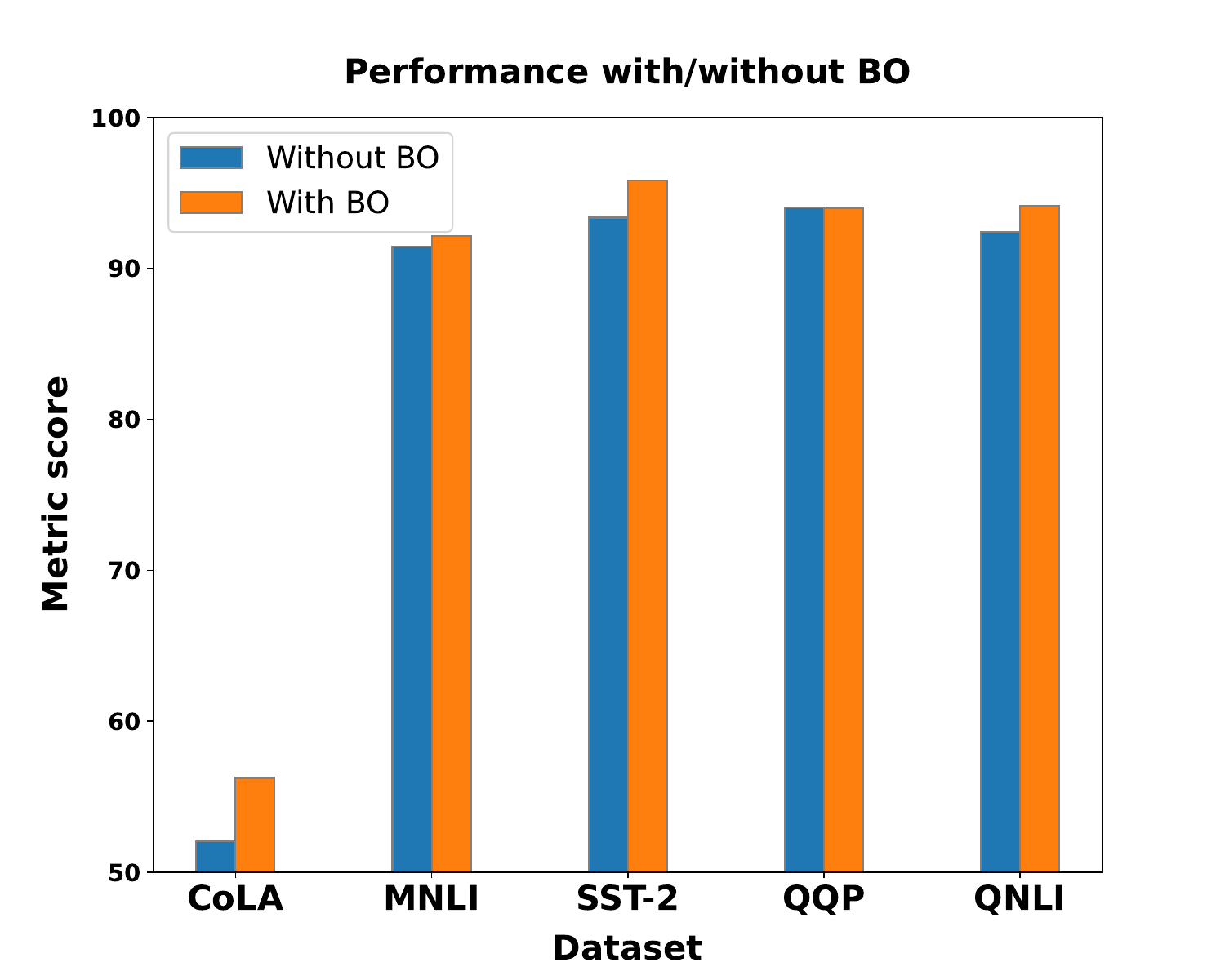}
     \caption{Performance comparison with/without bilevel optimization (BO).  We report "Matthew's correlation" for CoLA and "Accuracy" for MNLI, SST-2, QQP and QNLI. Higher score means "better performance"}    
    \label{fig:ablation_bo}
\end{figure}

\section{Stability Analysis} \label{sec:stability}
\begin{figure}[!t]
    \centering
    \subfigure[The averaged loss.]{\includegraphics[scale=0.4]{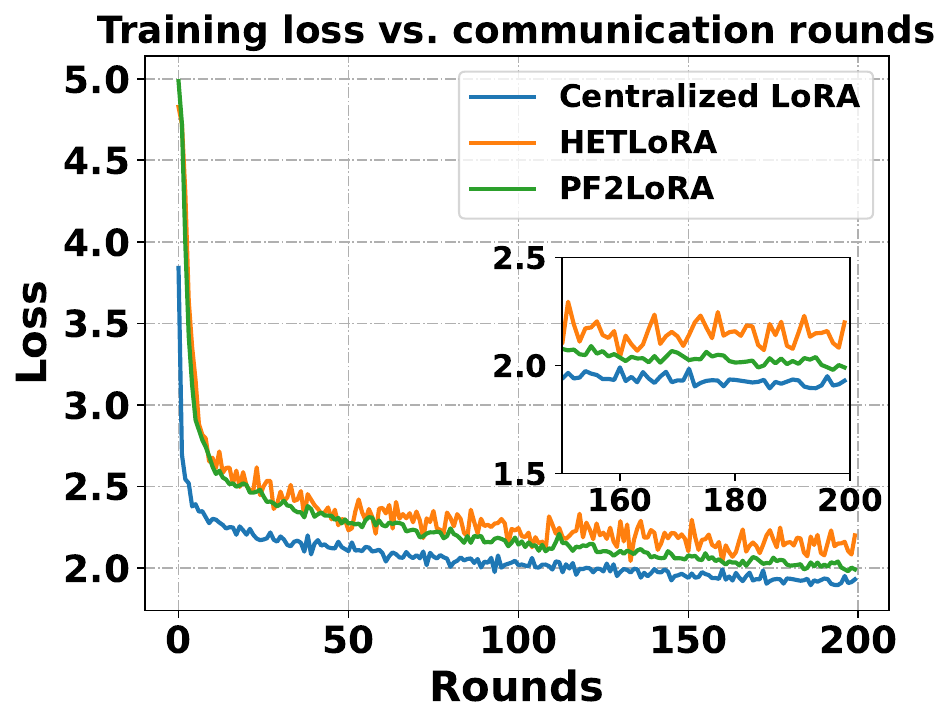}\label{fig:ppl_webnlg}}
    \subfigure[The averaged perplexity.]{\includegraphics[scale=0.4]{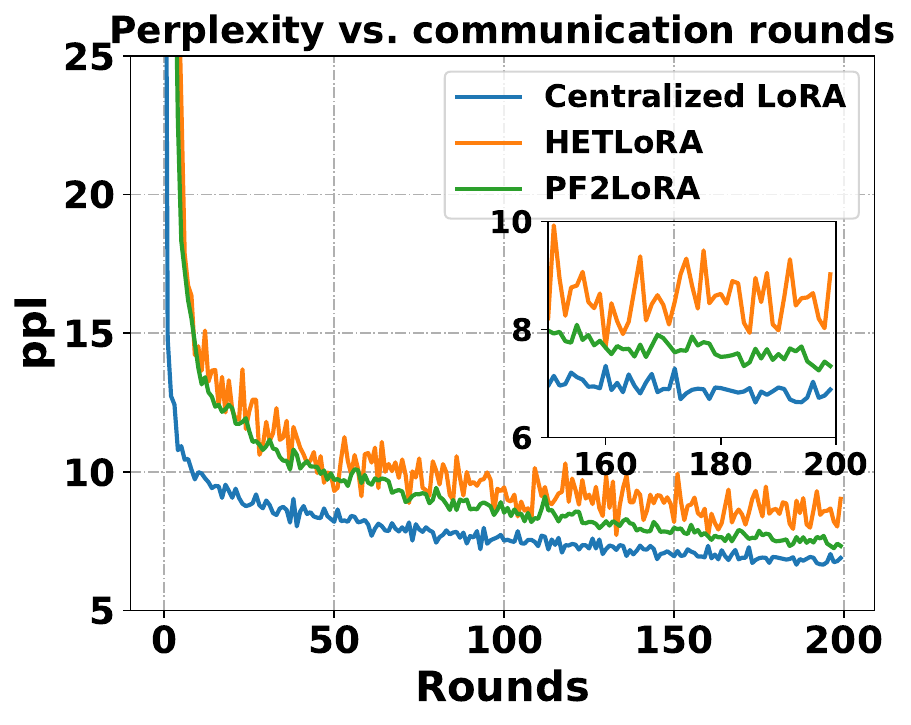}    \label{fig:ppl_webnlg}}
     \caption{The averaged training loss and perplexity on natural language generation task of WebNLG.}    
    \label{fig:loss_ppl_webnlg}
\end{figure}
Despite that HETLoRA is a strong baseline which performs usually well on heterogeneous data. However, we empirically observe that the training process of HETLoRA is not as stable as ours and Centralized LoRA in Figure \ref{fig:loss_ppl_webnlg}, where the training loss and perplexity (ppl) are averaged across all the clients. A possible and reasonable explanation is that HETLoRA adopts dynamical rank pruning and matrices truncation which directly change the intrinsic structure of local adapters, leading to unstable training. On the one hand, pruning removes some columns or rows from the original weights, which can degrade the model performance and require some steps of fine-tuning to recover the performance \citep{han2015deep}. On the other hand, each client is required to truncate the common adapter matrices to align the matrices' dimensions at each communication round, which inevitably loses some potentially important information. In contrast, our method circumvents the alignment issue of adapter matrices by assigning a uniform rank $r_k$ to the common adapter and uniform $\tilde{r}_k$ to all the client-specific adapters.


\section{Generated Result of NLU}
\subsection{Generated Examples for E2E NLG Challenge}
Table \ref{tab:e2e_generated_example1} and \ref{tab:e2e_generated_example2} show the generated examples of algorithm HETLoRA and PF2LoRA. The federated fine-tuning experiments are run across 8 clients on E2E NLG Challenges, where we construct the heterogeneous data by the "name" of restaurants, thus each client has different meta-information from different restaurants. There are 18 restaurants in the test set distributed in 8 clients. We show the generated examples based given context information on each client, while multiple references are provided to evaluate the quality of generated contents. We compare the generated contents from HETLoRA and PF2LoRA. In most cases, PF2LoRA can generate more complete and logically coherent sentences.
For example, the generated contents on client 4 and client 7, HETLoRA misses some important information (highlighted in green). The examples on  client 1, 2, 3 and 4, PF2LoRA produces more grammatically coherent sentences than HELoRA.  

\begin{table*}[htbp]
  \centering
  \caption{The generated examples for E2E NLG Challenges}
  \scalebox{0.8}{
    \begin{tabular}{ll}
    \toprule
    \toprule
    \multicolumn{2}{c}{Client 0} \\
    \midrule
    Context  & name : blue spice | type : pub | food : english | area : riverside | family friendly : yes | near : rainbow vegetarian café \\
    \midrule
    References  & in riverside , near the rainbow vegetarian café , you can find a family friendly pub called blue spice . \\
          & if you like english food there is a family - friendly pub called blue spice near the rainbow vegetarian café in riverside . \\
          & the blue spice is a child - friendly , english pub located in riverside area , near rainbow vegetarian café . \\
          & blue spice is located near rainbow vegetarian café in the riverside area and is a kid friendly pub that serves \\& english food . \\
          & there is a pub called blue spice which serves english food , is kid friendly , and is in riverside near rainbow \\&vegetarian café . \\
          & blue spice is a child - friendly pub near rainbow vegetarian café in the riverside area . \\
          & blue spice near rainbow vegetarian café in riverside is a pub serving english meals and child friendly \\
          & the blue spice is a pub . it is located near rainbow vegetarian café in the area of riverside . this is a family \\&friendly pub \\
          & serving english food . \\
          & an english serving child friendly pub in riverside is blue spice near rainbow vegetarian café \\
          & there is a pub that provides food and is children friendly , near rainbow vegetarian café and the riverside and is \\&called blue spice . \\
          & situated near the rainbow vegetarian café in the riverside area of the city , the blue spice pub , is ideal if you fancy\\
          &traditional english food whilst out with the kids . \\
    \midrule
    HETLoRA & blue spice is a pub near rainbow vegetarian café in the riverside area . \textcolor{red}{it is family friendly} \\&\textcolor{red}{and serves english food }. \\
    PF2LoRA & \textcolor{green}{blue spice is a family friendly pub that serves english food }. it is located in the riverside area \\&near the rainbow vegetarian café . \\
    \midrule
    \midrule
    \multicolumn{2}{c}{Client 1} \\
    \midrule
    Context  & name : the cricketers | type : coffee shop | customer rating : low | family friendly : no | near : ranch \\
    \midrule
    References  & the cricketers is a coffee shop with a low customer rating , located near ranch . it is not family - friendly . \\
    \midrule
    HETLoRA & \textcolor{red}{city centre coffee shop , the cricketers} , is not family - friendly and has a low customer rating . it is located near ranch . \\
    PF2LoRA & north of ranch , \textcolor{green}{there is a coffee shop called the cricketers} . it is not family - friendly and has a low customer rating . \\
    \midrule
    \midrule
    \multicolumn{2}{c}{Client 2} \\
    \midrule
    Context  & name : the mill | type : restaurant | food : english | price : moderate | customer rating : 3 out of 5 | \\ 
    &area : riverside | family friendly : yes | near : café rouge \\
    \midrule
    References  & the riverside area has restaurant near the café rouge that is both in the moderate price range and kid friendly \\
    &called the mill . it has a 3 out of 5 customer rating and serves english food . \\
          & the riverside area near café rouge has a restaurant that is kids - friendly . it has a price range in the mill . i give\\ &the food a 3 out of 5 . \\
          & the mill is a kids friendly restaurant that has moderate prices and serves english food . it has a 3 out of 5 customer\\ &rating and is located in the riverside area near the café rouge . \\
    \midrule
    HETLoRA & the mill is a moderately priced english restaurant \textcolor{red}{near café rouge in the riverside area} . it is kid friendly and has\\& a customer rating of 3 out of 5 . \\
    PF2LoRA & the mill is a moderately priced restaurant \textcolor{green}{in the riverside area near café rouge} . it serves english food and is kid \\&friendly . it has a customer rating of 3 out of 5 . \\
    \midrule
    \midrule
    \multicolumn{2}{c}{Client 3} \\
    \midrule
    Context  & name : the phoenix | type : pub | food : french | price : £ 20 - 25 | customer rating : high | area : riverside \\&| family friendly : no | near : crowne plaza hotel \\
    \midrule
    References  & a pub that is not kid friendly is located in the riverside area near crowne plaza hotel . it is named the phoenix\\& , has french food and price range of £ 20 - £ 30 and a high customer rating . \\
          & the phoenix , which is a pub that is not kid friendly , is near crowne plaza hotel and serves french food in the price \\&range of £ 20 - 25 in the riverside area . it has a high customer rating . \\
    \midrule
    HETLoRA & the phoenix is a pub \textcolor{red}{near the crowne plaza hotel in the riverside area} . it has a high customer rating and a price \\&range of £ 20 - 25 . it is not kid friendly . \\
    PF2LoRA & the phoenix is a pub \textcolor{green}{in the riverside area near the crowne plaza hotel} . it serves french food with a price range of \\&£ 20 - 25 and has a high customer rating . it is not kid friendly . \\
    
    \bottomrule
    \bottomrule
    \end{tabular}}%
  \label{tab:e2e_generated_example1}%
\end{table*}%

\begin{table*}[htbp]
  \centering
  \caption{The generated examples for E2E NLG Challenges (continued).}
  \scalebox{0.8}{
    \begin{tabular}{ll}
    \toprule
    \toprule
    \multicolumn{2}{c}{Client 4} \\
    \midrule
    Context  & name : the punter | type : restaurant | food : italian | price : cheap | customer rating : average | area : riverside\\& | family friendly : no | near : rainbow vegetarian café \\
    \midrule
    References  & hello and welcome to the punter , we serve the finest italian food around and have an average customer rating this\\& is very good for a restaurant we are near rainbow vegetarian café and our area is the riverside our price range is very\\& cheap for such good food at the moment we are not family - friendly . \\
          & a restaurant serving italian food for adults can be found on the riverside near rainbow vegetarian café . the punter has \\&average ratings , and cheap prices \\
    \midrule
    HETLoRA & the italian restaurant the punter is located in the riverside area near rainbow vegetarian café . it is not \\&family - friendly and has an average customer rating . \\
    PF2LoRA & the punter is a \textcolor{green}{cheap} italian restaurant near the rainbow vegetarian café in the riverside area . it has an average\\& customer rating and is not family - friendly . \\
    \midrule
    \midrule
    \multicolumn{2}{c}{Client 5} \\
    \midrule
    Context  & name : the vaults | type : pub | food : japanese | price : less than £ 20 | customer rating : average \\&| area : city centre | family friendly : no | near : raja indian cuisine \\
    \midrule
    References  & no , the vaults is fast food with pub on side raja indian cuisine in the city centre with price less than \\&£ 20 average \\
          & the vaults pub that is less than 20 pounds and has an average customer rating and is family - friendly . they serve\\& japanese food near the center of the city and also near the raja indian cuisine . \\
    \midrule
          & the vaults is japanese with pub on side raja indian cuisine in the city centre with average no less than £ 20 \\
    \midrule
    HETLoRA & \textcolor{red}{the vaults is a pub that serves japanese food . it is located in the city centre near raja indian cuisine }. it is not\\& family - friendly and has a price range of less than £ 20 . it has an average customer rating . \\
    PF2LoRA & \textcolor{green}{the vaults is a japanese pub located in the city centre near raja indian cuisine }. it has an average customer rating \\&and a price range of less than £ 20 . it is not family - friendly . \\
    \midrule
    \midrule
    \multicolumn{2}{c}{Client 6} \\
    \midrule
    Context  & name : the waterman | type : pub | food : italian | price : high | area : riverside | family friendly : yes \\&| near : raja indian cuisine \\
    \midrule
    References  & the waterman is a high price range italian pub in the riverside area near raja indian cuisine . it is children\\& friendly .\\
    & near raja indian cuisine , the waterman pub has high prices and facilities for the family .\\
    & the pub the waterman is an expensive family friendly establishment located near raja indian cuisine . \\
    & located near raja indian cuisine in the riverside area , the waterman is a high priced children friendly pub serving italian \\&food . \\
    & conveniently located on the riverside , close to raja indian cuisine , the waterman is an elite , but family friendly \\&established pub . \\
    \midrule
    HETLoRA & the waterman is a high priced italian pub near raja indian cuisine in the riverside area . \textcolor{red}{it is children friendly }. \\
    PF2LoRA & the waterman is an italian pub located in the riverside area near raja indian cuisine . \textcolor{green}{it is children friendly and} \\& \textcolor{green}{has a high price range} . \\
    \midrule
    \midrule
    \multicolumn{2}{c}{Client 7} \\
    \midrule
    Context  & name : name : wildwood | type : pub | food : indian | area : city centre | family friendly : yes \\&| near : raja indian cuisine \\
    \midrule
    References  &located near the city centre and raja indian cuisine the wildwood is a family friendly indian pub .\\
    & wildwood is in the city centre area near raja indian cuisine . it is a pub that serves indian food and is family friendly .   \\
    & wildwood also offers indian food to go along with the family friendly pub located near raja indian cuisine\\
    \midrule
    HETLoRA & aji indian cuisine pub in the city centre near raja indian cuisine is kid friendly and serves indian food . \\
    PF2LoRA & a pub near raja indian cuisine in the city centre called \textcolor{green}{ wildwood} serves indian food and is kid friendly . \\
    \bottomrule
    \bottomrule
    \end{tabular}}%
  \label{tab:e2e_generated_example2}%
\end{table*}%

\subsection{Generated Examples for WebNLG}
For WebNLG dataset, we construct the heterogeneity data by the topics [`Airport', `Astronaut', `Building', `City', `ComicsCharacter', `Food', `Monument', `SportsTeam', `University', `WrittenWork']. These topics are distributed across 8 clients. Thus, the language style varies with the text topics. We run the personalized federated fine-tuning across 8 clients and report the generated examples for given test context. The comparison results show that PF2LoRA can generate more complete and high quality sentences than HETLoRA. For example on client 0 and 1,  HETLoRA misses key words ``runwayname",  ``test pilot", which actually are important information. On client 2 and 5, HETLoRA generates incorrect information, while PF2LoRA produces accurate sentences.

\begin{table*}[htbp]
  \centering
  \caption{The generated examples for WebNLG.}
  \scalebox{0.8}{
    \begin{tabular}{ll}
    \toprule
    \toprule
    \multicolumn{2}{c}{Client 0 (Airport)} \\
    \midrule
    Context  & al\_asad\_airbase : operatingorganisation : united\_states\_air\_force | al\_asad\_airbase : runwaylength : 3992 . 88 | \\&al\_asad\_airbase : location : " al anbar province , iraq " | al\_asad\_airbase : icao\_location\_identifier : " oraa " | \\&al\_asad\_airbase : runwayname : " 08 / 26 " \\
    \midrule
    References  & al asad air base has a runway name of 08 / 26 which is 3992 . 8 in length . it is situated in the al anbar \\& province of iraq , is operated by the united states air force and has the icao location identifier oraa . \\
    & the united states airport operates the al asad airbase which is located in the al anbar province , iraq . the icao location\\& identifer of al asad airbase is oraa and the length is 3992 . 88m and the runway is known as 08 / 26 .\\
    \midrule
    HETLoRA & ! the united states air force is the operating organisation for al asad airbase which is located in al anbar province \\&, iraq . the airbase has a runway length of 3992 . 88 and the icao location identifier is oraa . \\
    PF2LoRA & the united states air force is the operating organisation for al asad airbase in al anbar province , iraq . the icao \\& location identifier of al asad airbase is oraa and it has a runway length of 3992 . 88 . \textcolor{green}{the runway name} \\&\textcolor{green}{of the airbase is 08 / 26 }. \\
    \midrule
    \midrule
    \multicolumn{2}{c}{Client 1 (Astronaut)} \\
    \midrule
    Context  & alan\_shepard : status : " deceased " | alan\_shepard : almamater : " nwc , m . a . 1957 " | alan\_shepard : deathplace \\&: california | alan\_shepard : occupation : test\_pilot | alan\_shepard : birthplace : new\_hampshire | alan\_shepard : was \\&selected by nasa : 1959 | alan\_shepard : birthdate : " 1923 - 11 - 18 "\\
    \midrule
    References  & alan shepard has died in california . he was born on 18 november 1923 in new hampshire and attended school at nwc\\& , graduating in 1957 with an ma . he became a test pilot and was eventually selected by nasa in 1959 . \\
    & alan shepard was born in new hampshire on november 18th , 1923 . he graduated from nwc in 1957 with an m . a . \\&he was selected by nasa in 1959 and he was a test pilot . he died in california .\\
    & alan shepard , born on november 18 , 1923 , graduated from nwc in 1957 with an m . a . alan shepard served as a \\&test pilot , and was selected by nasa in 1959 . alan shepard , born in new hampshire , died in california , .\\
    \midrule
    HETLoRA & alan shepard was born on november 18th , 1923 in new hampshire . he graduated from nwc in 1957 with an m . a . \\&and was selected by nasa in 1959 . he died in california .\\
    PF2LoRA & alan shepard was born in new hampshire on november 18th , 1923 . he graduated from nwc with a m . a . in 1957 . \\&he was  selected by nasa in 1959 and served as \textcolor{green}{a test pilot} . alan shepard died in california . \\
    \midrule
    \midrule
    \multicolumn{2}{c}{Client 2 (Building)} \\
    \midrule
    Context  & adisham\_hall : country : sri\_lanka | sri\_lanka : capital : sri\_jayawardenepura\_kotte | sri\_lanka : currency : \\&sri\_lankan\_rupee \\
    \midrule
    References  & sri jayawardenepura kotte is the capital of sri lanka , which uses the sri lankan rupee as its currency\\& and is the location of adisham hall .\\
    & sri jayawardenepura kotte is the capital of sri lanka , whose currency is the rupee . adisham hall is located \\&in sri lanka .\\
    \midrule
    HETLoRA & \textcolor{red}{college adisham hall} is located in the country of sri lanka , where the capital is sri jayawardenepura kotte \\&and the currency is the sri lankan rupee . \\
    PF2LoRA & \textcolor{green}{alan adisham hall} is located in sri lanka , the capital of which is sri jayawardenepura kotte . the currency of sri lanka \\& is the sri lankan rupee . \\
    \midrule
    \midrule
    \multicolumn{2}{c}{Client 3 (File)} \\
    \midrule
    Context  & big\_hero\_6\_ ( film ) : starring : ryan\_potter | big\_hero\_6\_ ( film ) : distributor : \\&walt\_disney\_studios\_motion\_pictures | baymax : series : big\_hero\_6\_ ( film ) \\
    \midrule
    References  & the movie big hero 6 stars ryan potter which has baymax as one of its characters , was distributed by walt disney \\&studios motion pictures .\\
    & baymax is a character in the big hero 6 film starring ryan potter and distributed by walt disney studios motion pictures .   \\
    & walt disney studio motion picture distributed the film big hero 6 , in which ryan potter starred and baymax is a character .\\
    \midrule
    HETLoRA & \textcolor{red}{!} baymax is a character in the film big hero 6 which stars ryan potter . the film was distributed by walt disney \\&studios motion pictures . \\
    PF2LoRA & walt disney studios motion pictures is the distributor of big hero 6 , a film in which baymax is a character . \\&the film stars ryan potter . \\
    \bottomrule
    \bottomrule
    \end{tabular}}%
  \label{tab:webnlg_generated_example1}%
\end{table*}%

\begin{table*}[htbp]
  \centering
  \caption{The generated examples for WebNLG (continued).}
  \scalebox{0.8}{
    \begin{tabular}{ll}
    \toprule
    \toprule
    \multicolumn{2}{c}{Client 4 (Food)} \\
    \midrule
    Context  & bacon\_sandwich : dishvariation : blt | bacon\_sandwich : mainingredients : " bread and bacon , with a condiment , \\&often ketchup or brown sauce " | bacon\_sandwich : country : united\_kingdom | bacon\_sandwich : ingredient : ketchup | \\&bacon\_sandwich : alternativename : " bacon butty , bacon sarnie , rasher sandwich , bacon sanger , piece ' n bacon , bacon \\&cob , bacon barm , bacon muffin " \\
    \midrule
    References  & the bacon sandwich , also known as : bacon butty , bacon sarnie , rasher sandwich , bacon sanger , piece n ' \\&bacon , bacon cob , bacon barm , blt and bacon muffin , is from the united kingdom . it includes bread , bacon \\&and ketchup or brown sauce . \\
    & the variant blt bacon sandwich includes : bread , bacon , and ketchup or brown sauce . it originates from the united kingdom\\& and goes by the various names : bacon butty , bacon sarnie , rasher sandwich , bacon sanger , piece n ' bacon , \\&bacon barm , bacon cob and bacon muffin .\\
    \midrule
    HETLoRA & the bacon sandwich is a variation of the blt . the main ingredients are bread and bacon , with a condiment , often\\& ketchup or brown sauce . it is also known as a bacon butty , bacon sarnie , rasher sandwich , bacon sanger , piece ' n bacon , \\&bacon cob , bacon barm , bacon muffin .\\
    PF2LoRA & the bacon sandwich is a variation of the blt \textcolor{green}{and is a dish from the united kingdom }. the main ingredients of a bacon \\&sandwich are bread and bacon with ketchup or brown sauce as a condiment . the bacon sandwich can also be known \\&as a bacon butty , bacon sarnie , rasher sandwich , bacon sanger , piece ' n bacon , bacon cob , bacon barm , or bacon muffin .\\
    \midrule
    \midrule
    \multicolumn{2}{c}{Client 5 (SportsTeam)} \\
    \midrule
    Context  & a . f . c . \_blackpool : manager : stuart\_parker\_ ( footballer ) | stuart\_parker\_ ( footballer ) : club : \\&chesterfield\_f . c . | stuart\_parker\_ ( footballer ) : club : stockport\_county\_f . c . | a . f . c . \_blackpool : ground :\\& blackpool | blackpool : leader : labour\_party\_ ( uk )\\
    \midrule
    References  & a . f . c . blackpool is in blackpool , which council is labour , it has had stuart parker as their manager ,\\& whose football club was stockport county f . c and is attached to chesterfield football club .\\
    & alan shepard , born on november 18 , 1923 , graduated from nwc in 1957 with an m . a . alan shepard served as a \\&test pilot , and was selected by nasa in 1959 . alan shepard , born in new hampshire , died in california , .\\
    \midrule
    HETLoRA & ! stuart parker ( footballer ) is the manager of a . f . c . blackpool \textcolor{red}{who play in blackpool} , where the leader is the\\& labour party ( uk ) and the ground is called blackpool .\\
    PF2LoRA & ! a . f . c . blackpool is in blackpool , where the leader is the labour party ( uk ) . the club is managed by \\&stuart parker ( footballer ) \textcolor{green}{who played for chesterfield fc and stockport county f . c .} \\
    \midrule
    \midrule
    \multicolumn{2}{c}{Client 6 (University)} \\
    \midrule
    Context  & romania : ethnicgroup : germans\_of\_romania | romania : leadertitle : prime\_minister\_of\_romania | alba\_iulia : \\&country : romania | romania : leadername : klaus\_iohannis | romania : capital : bucharest | 1\_decembrie\_1918\_university :\\& city : alba\_iulia | romania : anthem : deșteaptă - te , \_române ! \\
    \midrule
    References  & the 1 decembrie 1918 university is in the city alba iulia in romania . klaus iohannis the leader of romania and \\&they also have a prime minister . the germans of romania are the main ethnic group in romania and the capital is bucharest . \\&the romania anthem is deșteaptă - te , române !\\
    \midrule
    HETLoRA &  \textcolor{red}{!} the 1 decembrie 1918 university is located in alba iulia , romania . the country ' s leader is prime minister klaus\\& iohannis and its capital is bucharest . the anthem of the country is deșteaptă - te , române ! \\
    PF2LoRA & the 1 decembrie 1918 university is located in alba iulia , romania . romania ' s capital is bucharest and its leader \\&is prime minister klaus iohannis . the national anthem of romania is deșteaptă - te , române ! \textcolor{green}{and its ethnic group is the}\\& \textcolor{green}{germans of romania }. \\
    \midrule
    \midrule
    \multicolumn{2}{c}{Client 7 (WrittenWork)} \\
    \midrule
    Context  & administrative\_science\_quarterly : publisher : cornell\_university | cornell\_university : affiliation : \\&association\_of\_public\_and\_land - grant\_universities | cornell\_university : affiliation : \\&association\_of\_american\_universities | cornell\_university : president : elizabeth\_garrett | cornell\_university : city :\\& ithaca , \_new\_york \\
    \midrule
    References  & administrative science quarterly was published by cornell university , located in ithaca , new york , and \\&affiliated with the association of public and land grant universities , as well as with the association of american \\&universities . president of cornell university is elizabeth garrett .\\
    \midrule
    HETLoRA &\ \textcolor{red}{!} the administrative science quarterly is published by cornell university , which is affiliated with the association of\\& public and land grant universities and the association of american universities . it is located in ithaca , new york . the \\& \textcolor{red}{president of} cornell university is elizabeth garrett . \\
    PF2LoRA & the administrative science quarterly is published by cornell university , ithaca , new york . the university is\\& affiliated with the association of public and land grant universities and the association of american universities . \textcolor{green}{the} \\& \textcolor{green}{president of} the university is elizabeth garrett .\\
    \bottomrule
    \bottomrule
    \end{tabular}}%
  \label{tab:webnlg_generated_example2}%
\end{table*}%

\section{Proof of Theorem~\ref{mainthm1}}
\subsection{Basic Lemmas}
The hypergradient estimation  is defined as $\nabla \widehat{\Phi} (x;y^{t+1})=\nabla_x f(x,y^{t+1})-\alpha\nabla_{xy}f(x,y^t)\nabla_y f(x,y^{t+1})$.
\begin{lemma}[gradient descent for strongly convex and smooth functions]\label{scdecent}
when $\alpha \leq \frac{1}{L_{f,1}}$, for lower level each step we have
\begin{equation}
    \|y^{t+1}-y^*(x^t)\| \leq (1-\alpha\mu)^{\frac{1}{2}}\|y^t-y^*(x^t) \|.
\end{equation}
\end{lemma}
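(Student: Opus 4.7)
\medskip

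\noindent\textbf{Proof proposal for Lemma~\ref{scdecent}.} The plan is to reduce the per-iteration bound to a standard one-step contraction analysis for gradient descent on a strongly convex and smooth function. Fix the upper-level iterate $x^t$ and define the auxiliary function $g(y):=f(x^t,y)$. By Assumption~\ref{ass:bilevel}(ii)--(iii), $g$ is $\mu$-strongly convex and $L_{f,1}$-smooth in $y$, so the unique minimizer $y^*(x^t)=\arg\min_y g(y)$ exists and satisfies the first-order optimality condition $\nabla g(y^*(x^t))=0$. Then the lower-level update in line 5 of Algorithm~\ref{alg:PF2LORA} can be rewritten as
\begin{equation*}
y^{t+1}-y^*(x^t)=\bigl(y^t-y^*(x^t)\bigr)-\alpha\bigl(\nabla g(y^t)-\nabla g(y^*(x^t))\bigr).
\end{equation*}

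\noindent First I would expand $\|y^{t+1}-y^*(x^t)\|^2$ using the identity above and obtain three terms: $\|y^t-y^*(x^t)\|^2$, a cross term, and an $\alpha^2$ term proportional to $\|\nabla g(y^t)-\nabla g(y^*(x^t))\|^2$. The main workhorse is then Nesterov's coercivity inequality for $\mu$-strongly convex $L_{f,1}$-smooth functions, namely
\begin{equation*}
\bigl\langle \nabla g(y^t)-\nabla g(y^*(x^t)),\,y^t-y^*(x^t)\bigr\rangle\ \geq\ \tfrac{\mu L_{f,1}}{\mu+L_{f,1}}\|y^t-y^*(x^t)\|^2+\tfrac{1}{\mu+L_{f,1}}\|\nabla g(y^t)-\nabla g(y^*(x^t))\|^2.
\end{equation*}
Substituting this into the expansion and using the step-size condition $\alpha\leq 1/L_{f,1}\leq 2/(\mu+L_{f,1})$ makes the coefficient in front of the squared gradient norm nonpositive, so that term can be dropped. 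What remains is $\|y^{t+1}-y^*(x^t)\|^2\leq (1-\tfrac{2\alpha\mu L_{f,1}}{\mu+L_{f,1}})\|y^t-y^*(x^t)\|^2$, and one checks that $\tfrac{2L_{f,1}}{\mu+L_{f,1}}\geq 1$ (since $L_{f,1}\geq \mu$) yields $\|y^{t+1}-y^*(x^t)\|^2\leq (1-\alpha\mu)\|y^t-y^*(x^t)\|^2$. Taking square roots finishes the bound.

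\noindent An even shorter alternative, which I expect to present if a cleaner derivation is preferred, uses Assumption~\ref{ass:bilevel}(iv) (twice differentiability) to invoke the fundamental theorem of calculus: $\nabla g(y^t)-\nabla g(y^*(x^t))=H_t(y^t-y^*(x^t))$, where $H_t:=\int_0^1 \nabla^2 g(y^*(x^t)+s(y^t-y^*(x^t)))\,ds$ is symmetric positive semidefinite with spectrum in $[\mu,L_{f,1}]$. Then $y^{t+1}-y^*(x^t)=(I-\alpha H_t)(y^t-y^*(x^t))$, and when $\alpha\leq 1/L_{f,1}$ the operator norm $\|I-\alpha H_t\|$ is at most $1-\alpha\mu$, which directly yields the claim after taking square roots.

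\noindent There is no serious technical obstacle: the lemma is a textbook one-step contraction and both routes are brief. The only point that warrants care is the interplay between the step-size restriction and the exact contraction factor one extracts. The first route is slightly fussier because one must verify the sign of the $\alpha^2$ term vanishes and then compare $\tfrac{2\mu L_{f,1}}{\mu+L_{f,1}}$ with $\mu$, whereas the operator-norm route automatically gives the sharp factor $(1-\alpha\mu)$. Either way, the final inequality in the lemma follows from $\sqrt{1-\alpha\mu}\geq 1-\alpha\mu$ on $[0,1]$, so presenting the bound with the square root (as stated) is in fact a slightly weakened but more convenient form for use downstream in the bilevel analysis.
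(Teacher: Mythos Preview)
Your proposal is correct. Both routes you outline are valid and yield the stated contraction; the second (operator-norm) route in fact gives the sharper factor $(1-\alpha\mu)$ without the square root, as you note at the end.

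The paper's proof follows the same overall skeleton as your first route---expand $\|y^{t+1}-y^*(x^t)\|^2$ into three terms and then eliminate the cross and gradient terms---but it invokes a different pair of inequalities. Instead of Nesterov's coercivity bound, the paper uses (i) the strong-convexity lower bound $\langle\nabla g(y^t),y^t-y^*(x^t)\rangle\geq g(y^t)-g(y^*(x^t))+\tfrac{\mu}{2}\|y^t-y^*(x^t)\|^2$ to extract the $(1-\alpha\mu)$ factor while leaving a negative function-value residual, and (ii) the smooth-convex bound $\|\nabla g(y^t)\|^2\leq 2L_{f,1}\bigl(g(y^t)-g(y^*(x^t))\bigr)$ to convert the $\alpha^2$ term into another function-value residual; the two residuals then combine into $-2\alpha(1-\alpha L_{f,1})(g(y^t)-g(y^*(x^t)))\leq 0$ under $\alpha\leq 1/L_{f,1}$. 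So the paper routes everything through function-value gaps, whereas your first route stays entirely in gradient/iterate space via coercivity, and your second route bypasses both by linearizing through the averaged Hessian. All three are standard and of comparable length; your Hessian argument is the cleanest and gives the sharpest constant, while the paper's function-value argument has the minor advantage of not requiring twice differentiability (though that is assumed anyway in Assumption~\ref{ass:bilevel}(iv)).
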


\begin{proof}
Note that
\begin{align}
    &\|y^{t+1}-y^*(x^t)\|^2 
    =
    \|y^t-\alpha\nabla_y f(x^t,y^t)-y^*(x^t)\|^2 
    \\\nonumber
    &= \|y^t-y^*(x^t) \|^2-2\alpha\langle \nabla_y f(x^t,y^t),y^t-y^*(x^t) \rangle
    +\alpha^2\|\nabla_y f(x^t,y^t)\|^2 \nonumber\\
    &\overset{(i)}{\leq} (1-\alpha\mu)\|y^t-y^*(x^t) \|^2-2\alpha(f(x,y^t)-\inf_y f(x^t,y))+\alpha^2\|\nabla f_y(x^t,y^t)\|^2 \nonumber\\
    &\overset{(ii)}{\leq} (1-\alpha\mu)\|y^t-y^*(x^t) \|^2-2\alpha(f(x^t,y^t)-\inf_y f(x^t,y))+2\alpha^2L_{f,1}(f(x^t,y^t))-\inf_y f(x^t,y)) \nonumber\\
    &=(1-\alpha\mu)\|y^t-y^*(x^t) \|^2-2\alpha(1-\alpha L_{f,1})(f(x^t,y^t)-\inf_y f(x^t,y))\nonumber\\
    &\overset{(iii)}{\leq} (1-\alpha\mu)\|y^t-y^*(x^t) \|^2
\end{align}
where $(i)$ is because of the $\mu$-strongly convexity, $(ii)$ is because of $L_{g,1}$-smooth of the function, $(iii)$ is because of $2\alpha(1-\alpha L_{f,1})(f(x^t,y^t)-\inf_y f(x^t,y)) \geq 0$.
\end{proof}

\begin{lemma}[true hypergradient] \label{hyper-formulation}
The hypergradient $\nabla \Phi(x)$ equals to $\nabla_x f(x,y^*(x))$.
\end{lemma}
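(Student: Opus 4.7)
The plan is to differentiate $\Phi(x) = f(x, y^*(x))$ by the chain rule and then eliminate the term involving $\nabla y^*(x)$ using first-order optimality at the lower level. This is the expected form of the hypergradient in the special case where the upper-level objective coincides with the lower-level objective, as in the single-machine reduction~\eqref{eq:bilevel_obj_singlemachine}.

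First I would justify that $y^*(x)$ is well-defined and continuously differentiable in $x$. Assumption~\ref{ass:bilevel}(ii) gives $\mu$-strong convexity of $f(x, \cdot)$, which guarantees uniqueness of the lower-level minimizer $y^*(x)$. Together with the twice continuous differentiability from Assumption~\ref{ass:bilevel}(iv), strong convexity ensures that $\nabla_{yy}^2 f(x, y^*(x))$ is positive definite, hence invertible. Applying the implicit function theorem to the identity $\nabla_y f(x, y^*(x)) \equiv 0$ then yields continuous differentiability of the map $x \mapsto y^*(x)$.

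Next, the chain rule applied to the composition $\Phi(x) = f(x, y^*(x))$ produces
\begin{equation*}
\nabla \Phi(x) = \nabla_x f(x, y^*(x)) + [\nabla y^*(x)]^\top \nabla_y f(x, y^*(x)).
\end{equation*}
The second term vanishes because $y^*(x)$ is an unconstrained minimizer of the strongly convex map $f(x,\cdot)$, so that $\nabla_y f(x, y^*(x)) = 0$. This leaves $\nabla \Phi(x) = \nabla_x f(x, y^*(x))$, which is the claim.

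I do not anticipate any substantive obstacle: the result is a routine simplification of the usual bilevel hypergradient formula $\nabla_x f(x, y^*(x)) - \nabla_{xy}^2 f(x, y^*(x)) \, [\nabla_{yy}^2 f(x, y^*(x))]^{-1} \nabla_y f(x, y^*(x))$, whose correction term collapses precisely because upper- and lower-level objectives coincide here. The main purpose of this lemma is to serve as a clean reference point for bounding the bias of the approximate hypergradient $\nabla \widehat{\Phi}(x; y^{t+1})$ used in~\eqref{update:singlemachine}; the real analytic work will occur when that bias is controlled in terms of $\|y^{t+1} - y^*(x^t)\|$ via Lemma~\ref{scdecent}.
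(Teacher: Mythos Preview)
Your proposal is correct and essentially matches the paper's proof: both arguments reduce to observing that the correction term in the hypergradient vanishes because $\nabla_y f(x, y^*(x)) = 0$. The paper writes out the full implicit-function-theorem formula $\nabla_x f - \nabla_{xy} f\,[\nabla_{yy} f]^{-1}\nabla_y f$ and kills the second summand, whereas you apply the chain rule directly and never expand $[\nabla y^*(x)]^\top$, but these are the same argument.
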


\begin{proof} By the implicit function theorem~\citep{ghadimi2018approximation}, we have
\begin{equation*}
    \begin{aligned}
        &\nabla\Phi(x) = \nabla_x f(x,y^*(x)) - \nabla_{xy} f(x,y^*(x))[\nabla_{yy} f(x,y^*(x))]^{-1}\nabla_y f(x, y^*(x))
        \overset{(i)}{=}\nabla_x f(x,y^*(x))
    \end{aligned}
\end{equation*}
where $(i)$ holds due to $\nabla_y f(x,y^*(x))=0$.
\end{proof}


\begin{lemma}[Lipschitz property~\citep{ghadimi2018approximation}]\label{lipschitz}
 $y^*(x)$ is $\frac{L_{f,1}}{\mu}$-Lipschitz continuous.
\end{lemma}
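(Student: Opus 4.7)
The plan is to exploit the first-order optimality condition $\nabla_y f(x, y^*(x)) = 0$ together with the strong convexity of $f$ in $y$ and the joint smoothness of $f$. Both sides of the desired inequality involve differences of two points, so a direct two-point comparison argument is natural and avoids invoking the implicit function theorem.

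Concretely, fix arbitrary $x_1, x_2$ and denote $y_1^* = y^*(x_1)$, $y_2^* = y^*(x_2)$. First, I would record the optimality conditions $\nabla_y f(x_1, y_1^*) = 0$ and $\nabla_y f(x_2, y_2^*) = 0$. Next, by $\mu$-strong convexity of $f(x_1, \cdot)$, the map $y \mapsto \nabla_y f(x_1, y)$ is $\mu$-strongly monotone, so
\begin{equation*}
\langle \nabla_y f(x_1, y_1^*) - \nabla_y f(x_1, y_2^*),\, y_1^* - y_2^* \rangle \;\geq\; \mu \|y_1^* - y_2^*\|^2.
\end{equation*}
Substituting $\nabla_y f(x_1, y_1^*) = 0$ and then adding $\nabla_y f(x_2, y_2^*) = 0$ on the left turns the inner product into $\langle \nabla_y f(x_2, y_2^*) - \nabla_y f(x_1, y_2^*),\, y_1^* - y_2^* \rangle$.

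Applying Cauchy--Schwarz and then using joint $L_{f,1}$-smoothness of $f$ (which implies $\|\nabla_y f(x_2, y_2^*) - \nabla_y f(x_1, y_2^*)\| \leq L_{f,1}\|x_1 - x_2\|$) gives
\begin{equation*}
\mu \|y_1^* - y_2^*\|^2 \;\leq\; L_{f,1}\,\|x_1 - x_2\|\,\|y_1^* - y_2^*\|,
\end{equation*}
so dividing through by $\mu\|y_1^* - y_2^*\|$ (the case $y_1^* = y_2^*$ is trivial) yields $\|y^*(x_1) - y^*(x_2)\| \leq (L_{f,1}/\mu)\|x_1 - x_2\|$, as claimed.

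I do not anticipate any real obstacle here: each ingredient (strong monotonicity from strong convexity, Cauchy--Schwarz, smoothness bound on mixed differences) is standard and the chain closes in a single line. The only step to state carefully is the use of joint $L_{f,1}$-smoothness to bound $\|\nabla_y f(x_2, y_2^*) - \nabla_y f(x_1, y_2^*)\|$ by $L_{f,1}\|x_1 - x_2\|$, which follows because Assumption~\ref{ass:bilevel}(iii) gives Lipschitz continuity of the full gradient $\nabla f$ in $(x,y)$ jointly and hence of each partial gradient block in each argument.
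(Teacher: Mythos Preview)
Your argument is correct and is the standard two-point monotonicity proof of this classical fact. Note, however, that the paper does not actually supply a proof of Lemma~\ref{lipschitz}: it simply states the result and cites \cite{ghadimi2018approximation}, where precisely this argument (strong monotonicity of $\nabla_y f(x,\cdot)$ combined with the joint Lipschitz bound on $\nabla_y f$) appears. So your proposal fills in what the paper defers to the reference, and it does so with the same underlying idea; there is nothing to correct or add.
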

 \begin{lemma}[Lipschitz hypergradient]
    $\Phi(x)$ is $L_\Phi$-smooth and $L_\Phi= L_{f,1}+\frac{L_{f,1}^2}{\mu}$.
 \end{lemma}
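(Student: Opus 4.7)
The plan is to combine the three ingredients already assembled in the preceding lemmas: the closed form $\nabla\Phi(x)=\nabla_x f(x,y^*(x))$ from Lemma~\ref{hyper-formulation}, the joint $L_{f,1}$-smoothness of $f$ from Assumption~\ref{ass:bilevel}(iii), and the $\frac{L_{f,1}}{\mu}$-Lipschitz continuity of $y^*(\cdot)$ from Lemma~\ref{lipschitz}. With the closed form in hand, proving smoothness of $\Phi$ reduces to bounding the change in $\nabla_x f$ along the composite curve $x\mapsto (x,y^*(x))$.

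Concretely, for any $x_1, x_2$ I would write
\begin{equation*}
\|\nabla\Phi(x_1)-\nabla\Phi(x_2)\|=\|\nabla_x f(x_1,y^*(x_1))-\nabla_x f(x_2,y^*(x_2))\|,
\end{equation*}
and then upper bound the right-hand side by $L_{f,1}\|(x_1,y^*(x_1))-(x_2,y^*(x_2))\|$ using joint smoothness (since $\nabla_x f$ is a component of $\nabla f$, it inherits the same Lipschitz constant in the joint variable). Next, apply the triangle inequality to split the norm of the pair into $\|x_1-x_2\|+\|y^*(x_1)-y^*(x_2)\|$, then invoke Lemma~\ref{lipschitz} to upper bound the second term by $\frac{L_{f,1}}{\mu}\|x_1-x_2\|$. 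Collecting terms yields $\|\nabla\Phi(x_1)-\nabla\Phi(x_2)\|\leq \bigl(L_{f,1}+\tfrac{L_{f,1}^2}{\mu}\bigr)\|x_1-x_2\|$, which is precisely $L_\Phi$-smoothness with the stated constant.

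There is essentially no hard step here; the only minor care point is the choice of norm on the product space. If one uses the Euclidean norm $\|(u,v)\|=\sqrt{\|u\|^2+\|v\|^2}$ in the smoothness assumption, then $\sqrt{\|x_1-x_2\|^2+\|y^*(x_1)-y^*(x_2)\|^2}\leq \|x_1-x_2\|+\|y^*(x_1)-y^*(x_2)\|$ gives the same bound via the triangle inequality, so the final constant $L_\Phi=L_{f,1}+\frac{L_{f,1}^2}{\mu}$ is recovered regardless. The second-order Lipschitz assumption (Assumption~\ref{ass:bilevel}(iv)) is not needed for this lemma; it plays its role only in the hypergradient approximation analysis used elsewhere in the convergence argument.
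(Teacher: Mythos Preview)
Your proposal is correct and follows essentially the same route as the paper: write $\nabla\Phi(x)=\nabla_x f(x,y^*(x))$ via Lemma~\ref{hyper-formulation}, apply the $L_{f,1}$-Lipschitzness of $\nabla_x f$ jointly in $(x,y)$, split into $\|x_1-x_2\|+\|y^*(x_1)-y^*(x_2)\|$, and then invoke Lemma~\ref{lipschitz}. Your remark that Assumption~\ref{ass:bilevel}(iv) is not needed here is also accurate.
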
 

\begin{proof}
By definition of hypergradient in Lemma~\ref{hyper-formulation} and Assumption~\ref{ass:bilevel}, we have
\begin{align}
    &\|\nabla\Phi(x_1)-\nabla\Phi(x_2)\| 
   = \|\nabla_x f(x_1,y^*(x_1)-\nabla_x f(x_2,y^*(x_2)\| \nonumber\\
    &\leq L_{f,1}\|x_1-x_2\|+L_{f,1}\|y^*(x_1)-y^*(x_2)\|\nonumber \\
    &\overset{(i)}{\leq} L_{f,1}\|x_1-x_2\|+\frac{L_{f,1}^2}{\mu}\|x_1-x_2\|=L_\Phi\|x_1-x_2\|,
\end{align}
where $(i)$ comes from Lemma~\ref{lipschitz}.
\end{proof}

 \subsection{Proof}

\begin{lemma}[Hypergradient bias] \label{lm:hypergradient-bias}
 Hypergradient estimation $\nabla \widehat{\Phi} (x;y^{t+1})$ satisfy:
\begin{equation}
    \begin{aligned}
        & \| \nabla \widehat{\Phi}(x^t;y^{t+1}) - \nabla\Phi(x^t)\| \leq L_{f,1}(\alpha L_{f,1}+1)(1-\alpha\mu)^{\frac{1}{2}}\|y^{t}-y^*(x^t)\| \nonumber\\
    \end{aligned}
\end{equation}
\end{lemma}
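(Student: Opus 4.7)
The plan is to bound the bias by directly comparing the estimator $\nabla\widehat\Phi(x^t;y^{t+1})=\nabla_x f(x^t,y^{t+1})-\alpha\nabla_{xy}f(x^t,y^t)\nabla_y f(x^t,y^{t+1})$ against the true hypergradient, which by Lemma~\ref{hyper-formulation} reduces to $\nabla\Phi(x^t)=\nabla_x f(x^t,y^*(x^t))$. Subtracting these and invoking the triangle inequality splits the bias into two pieces: a ``direct'' term $\|\nabla_x f(x^t,y^{t+1})-\nabla_x f(x^t,y^*(x^t))\|$ and a ``correction'' term $\alpha\|\nabla_{xy}f(x^t,y^t)\nabla_y f(x^t,y^{t+1})\|$. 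The correction term is an artifact of the single-step inner update: if $y^{t+1}$ were exactly $y^*(x^t)$, then $\nabla_y f(x^t,y^{t+1})=0$ and the correction would vanish, so we expect both pieces to be controlled by $\|y^{t+1}-y^*(x^t)\|$.

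For the first piece I will apply joint $L_{f,1}$-smoothness of $f$ in $(x,y)$ (Assumption~\ref{ass:bilevel}(iii)), yielding a bound of $L_{f,1}\|y^{t+1}-y^*(x^t)\|$. For the correction term, I will bound $\|\nabla_{xy}f(x^t,y^t)\|\leq L_{f,1}$ using the same smoothness assumption on the mixed Hessian block, and then bound $\|\nabla_y f(x^t,y^{t+1})\|$ by adding and subtracting $\nabla_y f(x^t,y^*(x^t))=0$ and applying smoothness once more, giving $L_{f,1}\|y^{t+1}-y^*(x^t)\|$. Combining the two pieces yields
\begin{equation*}
\|\nabla\widehat\Phi(x^t;y^{t+1})-\nabla\Phi(x^t)\| \leq L_{f,1}(1+\alpha L_{f,1})\|y^{t+1}-y^*(x^t)\|.
\end{equation*}

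The final step is to translate the error on $y^{t+1}$ into an error on $y^t$ using Lemma~\ref{scdecent}, which (for $\alpha\leq 1/L_{f,1}$) gives $\|y^{t+1}-y^*(x^t)\|\leq(1-\alpha\mu)^{1/2}\|y^t-y^*(x^t)\|$. Substituting produces exactly the claimed bound $L_{f,1}(\alpha L_{f,1}+1)(1-\alpha\mu)^{1/2}\|y^t-y^*(x^t)\|$.

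I do not anticipate a serious obstacle: every ingredient is already in place (smoothness of $f$, optimality condition $\nabla_y f(x,y^*(x))=0$, and the contraction from Lemma~\ref{scdecent}). The only subtle point is that we must bound $\|\nabla_y f(x^t,y^{t+1})\|$ by using the \emph{same} reference point $y^*(x^t)$ as in the contraction, rather than $y^*(x^{t+1})$ or any other point; otherwise a Lipschitz term in $x$ would creep in. Keeping the reference point consistent is what lets the whole bound collapse into a single factor involving $\|y^t-y^*(x^t)\|$.
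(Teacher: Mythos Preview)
Your proposal is correct and matches the paper's proof essentially step for step: the paper likewise subtracts $\nabla_y f(x^t,y^*(x^t))=0$ inside the correction term, applies $L_{f,1}$-smoothness to both pieces to obtain $(L_{f,1}+\alpha L_{f,1}^2)\|y^{t+1}-y^*(x^t)\|$, and then invokes Lemma~\ref{scdecent} to replace $y^{t+1}$ by $y^t$. There is no substantive difference in strategy or execution.
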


\begin{proof}
Note that
\begin{align}
     &\nabla \widehat{\Phi}(x^t;y^{t+1}) - \nabla\Phi(x^t) \nonumber \\
    &= \nabla_x f(x^t, y^{t+1})  
    - \alpha\nabla_{xy}f(x^t, y^{t})\nabla_y f(x^t, y^{t+1})-\nabla_x f(x^t,y^*(x^t))\nonumber \\
    &\overset{(i)}{=} \nabla_x f(x^t, y^{t+1})-\nabla_x f(x^t, y^*(x^t))- \alpha\nabla_{xy}f(x^t, y^{t})(\nabla_y f(x^t, y^{t+1})-\nabla_y f(x^t,y^*(x^t)))
 \end{align}
where $(i)$ holds due to  $\nabla_y f(x^t,y^*(x^t))=0$. Then we obtain that
\begin{align}
    &\| \nabla \widehat{\Phi}(x^t;y^{t+1}) - \nabla\Phi(x^t)\|
    \nonumber \\
    &\overset{(i)}{\leq} (L_{f,1}+\alpha L_{f,1}^2)\|y^{t+1}-y^*(x^t)\| \nonumber \\
    &\overset{(ii)}{\leq}   (L_{f,1}+\alpha L_{f,1}^2)(1-\alpha\mu)^{\frac{1}{2}}\|y^{t}-y^*(x^t)\|\\
    &= A\|y^t-y^*(x^t)\| 
\end{align}
where $A=(L_{f,1}+\alpha L_{f,1}^2)(1-\alpha\mu)^{\frac{1}{2}}$,
$(i)$ holds because $\nabla_x f$ and $\nabla_y f$ are $L_{f,1}$ Lipschitz with $x,y$, and 
$(ii)$ holds due to Lemma~\ref{scdecent}.
\end{proof}

\begin{lemma}[Hypergradient descent]\label{hyperdecent} Define $A=(L_{f,1}+\alpha L_{f,1}^2)(1-\alpha\mu)^{\frac{1}{2}}$, we have
\begin{align} \label{telescopingsum}
    \frac{1}{T}\sum_{t=0}^{T-1}(\frac{1}{2}-\eta L_{\Phi})\|\nabla\Phi(x^t)\|^2
\leq \frac{\Phi(x_0)-\inf\Phi(x)}{\eta T} +\frac{1}{T}
    (\frac{1}{2}+\eta L_{\Phi})A^2\sum_{k=0}^{T-1}\|y^t-y^*(x^t)\|^2
\end{align}
\end{lemma}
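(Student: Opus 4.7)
My plan is to derive the telescoping inequality directly from the descent lemma for $L_{\Phi}$-smooth functions applied to the update rule $x^{t+1}=x^t-\eta\nabla\widehat{\Phi}(x^t;y^{t+1})$, then to split off the bias $e^t\coloneq \nabla\widehat{\Phi}(x^t;y^{t+1})-\nabla\Phi(x^t)$ and finally invoke Lemma~\ref{lm:hypergradient-bias} to replace $\|e^t\|$ with $A\|y^t-y^*(x^t)\|$. Concretely, I would start from
\begin{equation*}
\Phi(x^{t+1})\leq \Phi(x^t)-\eta\langle \nabla\Phi(x^t),\nabla\widehat{\Phi}(x^t;y^{t+1})\rangle+\frac{L_\Phi\eta^2}{2}\|\nabla\widehat{\Phi}(x^t;y^{t+1})\|^2,
\end{equation*}
which follows from the $L_\Phi$-smoothness of $\Phi$ established earlier.

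Next I would decompose the inner product and the squared norm through $\nabla\widehat{\Phi}(x^t;y^{t+1})=\nabla\Phi(x^t)+e^t$. For the cross term I apply Young's inequality $-\langle \nabla\Phi(x^t),e^t\rangle\leq \tfrac{1}{2}\|\nabla\Phi(x^t)\|^2+\tfrac{1}{2}\|e^t\|^2$, which produces the coefficient $\tfrac{1}{2}$ in front of both $\|\nabla\Phi(x^t)\|^2$ and $\|e^t\|^2$. For the quadratic term I use $\|a+b\|^2\leq 2\|a\|^2+2\|b\|^2$ to get $\tfrac{L_\Phi\eta^2}{2}\|\nabla\widehat{\Phi}\|^2\leq L_\Phi\eta^2\|\nabla\Phi(x^t)\|^2+L_\Phi\eta^2\|e^t\|^2$. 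Collecting terms yields a per-iteration descent inequality
\begin{equation*}
\eta\bigl(\tfrac{1}{2}-\eta L_\Phi\bigr)\|\nabla\Phi(x^t)\|^2\leq \Phi(x^t)-\Phi(x^{t+1})+\eta\bigl(\tfrac{1}{2}+\eta L_\Phi\bigr)\|e^t\|^2.
\end{equation*}

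Dividing by $\eta$, telescoping $t=0,\dots,T-1$, using $\Phi(x^T)\geq \inf_x \Phi(x)$, and substituting the bias bound $\|e^t\|^2\leq A^2\|y^t-y^*(x^t)\|^2$ from Lemma~\ref{lm:hypergradient-bias} gives exactly the claimed inequality after dividing through by $T$.

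\textbf{Expected obstacles.} The main calculation is bookkeeping of constants: one has to choose the split in Young's inequality and the split in $\|a+b\|^2\leq 2\|a\|^2+2\|b\|^2$ so that the coefficients match $(\tfrac{1}{2}\mp \eta L_\Phi)$ precisely as stated. Aside from that, the argument is a standard descent-with-biased-gradient template and I do not anticipate any deeper technical difficulty; the more subtle part of the overall convergence proof is controlling the lower-level tracking error $\|y^t-y^*(x^t)\|^2$ that appears on the right-hand side here, but that is handled separately by Lemma~\ref{scdecent} and the step-size choice in Theorem~\ref{mainthm1}, not within this lemma.
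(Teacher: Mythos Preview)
Your proposal is correct and follows essentially the same approach as the paper's proof: the paper also starts from the $L_\Phi$-smoothness descent inequality, rewrites the cross term via the polarization identity $2\langle a,b\rangle=\|a\|^2+\|b\|^2-\|a-b\|^2$ (which is equivalent to your decomposition $h^t=\nabla\Phi(x^t)+e^t$ plus Young's inequality), bounds $\|h^t\|^2\leq 2\|\nabla\Phi(x^t)\|^2+2\|e^t\|^2$, telescopes, and invokes Lemma~\ref{lm:hypergradient-bias} to replace $\|e^t\|^2$ by $A^2\|y^t-y^*(x^t)\|^2$. The constants you obtain, $(\tfrac{1}{2}\mp\eta L_\Phi)$, match exactly.
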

\begin{proof}
      The proof is very similar to the proof of Theorem 1 in~\cite{ji2021bilevel}. The $L_{\Phi}$-smoothness of $\Phi(x)$  implies that
    \begin{align}
        \Phi(x^{t+1}) - \Phi(x^t)\leq \langle\nabla \Phi(x^t), x^{t+1} - x^t \rangle+ \frac{L_{\Phi}}{2}\|{x^{t+1}-x^t}\|^2
    \end{align}
Define $h^t=\nabla \widehat{\Phi}(x^t;y^{t+1})=\nabla_x f(x^t, y^{t+1})  
    - \alpha\nabla_{xy}f(x^t, y^{t})\nabla_y f(x^t, y^{t+1})$. We have
\begin{align}
    \Phi(x^{t+1}) \leq
    &\Phi(x^t)-\eta \langle\nabla\Phi(x^t),h^t\rangle+\frac{L_{\Phi}\eta^2}{2}\|h^t\|  \nonumber \\
    &\leq \Phi(x^t)-\eta(\frac{1}{2}-\frac{\eta L_{ \Phi}}{2})\|h^t\|^2 +\frac{\eta^2L_\Phi}{2}\|h^t-\nabla \Phi(x^t)\|^2 \nonumber\\
    &\leq \Phi(x^{t})-(\frac{\eta}{2}- \eta^2 L_{\Phi})\|\nabla\Phi(x^t)\|^2+(\frac{\eta}{2}+\eta^2 L_\Phi)\|h^t-\nabla \Phi(x^t)\|^2
\end{align}
Do telescoping and use Lemma~\ref{lm:hypergradient-bias} we get
\begin{align} 
    \frac{1}{T}\sum_{t=0}^{T-1}(\frac{1}{2}-\eta L_{\Phi})\|\nabla\Phi(x^t)\|^2
    \overset{\text{Lemma}~\ref{lm:hypergradient-bias}}{\leq} \frac{\Phi(x_0)-\inf\Phi(x)}{\eta T} +\frac{1}{T}
    (\frac{1}{2}+\eta L_{\Phi})A^2\sum_{k=0}^{T-1}\|y^t-y^*(x^t)\|^2
\end{align}

\end{proof}

\begin{lemma}[Lower Level Convergence]\label{lemma:lowerlevel}
$\|y^{t+1}-y^*(x^{t+1})\|^2\leq C\|y^t-y^*(x^t)\|^2+D\|\nabla\Phi(x^t)\|^2$, where $C=1-\alpha^2\mu^2+2(1+\frac{1}{\alpha\mu})\frac{L_{f,1}^2}{\mu^2}\eta^2
A^2$, $D=2(1+\frac{1}{\alpha\mu})\eta^2\frac{L_{f,1}^2}{\mu^2}$.
\end{lemma}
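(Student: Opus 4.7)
The plan is to bound $\|y^{t+1}-y^*(x^{t+1})\|^2$ by splitting it into two terms via Young's inequality, one measuring lower-level contraction at a fixed $x^t$ and one measuring the drift of $y^*(\cdot)$ induced by the upper-level step. For any $c>0$,
\begin{equation*}
\|y^{t+1}-y^*(x^{t+1})\|^2 \leq (1+c)\|y^{t+1}-y^*(x^t)\|^2 + \bigl(1+\tfrac{1}{c}\bigr)\|y^*(x^t)-y^*(x^{t+1})\|^2.
\end{equation*}
First I would apply Lemma~\ref{scdecent} to the first term to get $(1+c)(1-\alpha\mu)\|y^t-y^*(x^t)\|^2$, and then use Lemma~\ref{lipschitz} on the second to get $(1+\tfrac{1}{c})(L_{f,1}/\mu)^2\|x^{t+1}-x^t\|^2$.

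Next I would substitute the upper-level update $x^{t+1}-x^t=-\eta h^t$ with $h^t=\nabla\widehat{\Phi}(x^t;y^{t+1})$, and bound $\|h^t\|^2$ via
\begin{equation*}
\|h^t\|^2 \leq 2\|\nabla\Phi(x^t)\|^2 + 2\|h^t-\nabla\Phi(x^t)\|^2 \leq 2\|\nabla\Phi(x^t)\|^2 + 2A^2\|y^t-y^*(x^t)\|^2,
\end{equation*}
where the second inequality invokes Lemma~\ref{lm:hypergradient-bias}. Substituting back produces an inequality of the form
\begin{equation*}
\|y^{t+1}-y^*(x^{t+1})\|^2 \leq \Bigl[(1+c)(1-\alpha\mu) + 2\bigl(1+\tfrac{1}{c}\bigr)\tfrac{L_{f,1}^2}{\mu^2}\eta^2 A^2\Bigr]\|y^t-y^*(x^t)\|^2 + 2\bigl(1+\tfrac{1}{c}\bigr)\tfrac{L_{f,1}^2}{\mu^2}\eta^2\|\nabla\Phi(x^t)\|^2.
\end{equation*}

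The only remaining step is to choose $c$ so that the leading coefficient matches the stated $C$. Setting $c=\alpha\mu$ gives $(1+c)(1-\alpha\mu)=1-\alpha^2\mu^2$ and $1+\tfrac{1}{c}=1+\tfrac{1}{\alpha\mu}$, which recovers exactly
\begin{equation*}
C = 1-\alpha^2\mu^2 + 2\bigl(1+\tfrac{1}{\alpha\mu}\bigr)\tfrac{L_{f,1}^2}{\mu^2}\eta^2 A^2, \qquad D = 2\bigl(1+\tfrac{1}{\alpha\mu}\bigr)\eta^2\tfrac{L_{f,1}^2}{\mu^2}.
\end{equation*}
I do not expect a real obstacle: every ingredient (strongly convex contraction, Lipschitz continuity of $y^*(\cdot)$, hypergradient bias bound) is already established earlier. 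The only subtle point is picking the Young's-inequality parameter $c=\alpha\mu$ rather than the more common $c=\alpha\mu/2$, since this specific choice is what produces the clean coefficient $1-\alpha^2\mu^2$ needed in $C$.
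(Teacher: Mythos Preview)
Your proposal is correct and matches the paper's proof essentially line by line: the paper applies Young's inequality with parameter $\alpha\mu$, invokes the strongly-convex contraction (Lemma~\ref{scdecent}) and the Lipschitzness of $y^*$ (Lemma~\ref{lipschitz}), and then bounds $\|h^t\|^2\leq 2\|h^t-\nabla\Phi(x^t)\|^2+2\|\nabla\Phi(x^t)\|^2$ via Lemma~\ref{lm:hypergradient-bias}. Your write-up is in fact slightly more explicit than the paper's about why $c=\alpha\mu$ is the right choice.
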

\begin{proof}
Note that
\begin{align}\label{ydecent}
   & \|y^{t+1}-y^*(x^{t+1})\|^2 \nonumber\\
   &\overset{(i)}{\leq} (1+\alpha\mu)\|y^{t+1}-y^*(x^t)\|^2+(1+\frac{1}{\alpha\mu})\|y^*(x^{t+1})-y^*(x^{t})\|^2 \nonumber\\
   & \overset{(ii)}{\leq} (1+\alpha\mu)(1-\alpha\mu)\|y^{t}-y^*(x^t)\|^2+(1+\frac{1}{\alpha\mu})\frac{L_{f,1}^2}{\mu^2}\|x^{t+1}-x^t\|^2 \nonumber\\
   &\leq (1+\alpha\mu)(1-\alpha\mu)\|y^{t}-y^*(x^t)\|^2+
   (1+\frac{1}{\alpha\mu})\frac{2L_{f,1}^2}{\mu^2}\eta^2(\|h^t-\nabla\Phi(x^t)\|^2+\|\nabla\Phi(x^t)\|^2) \nonumber\\  
   &= C\|y^t-y^*(x^t)\|^2+D\|\nabla\Phi(x^t)\|^2,
\end{align}
where $(i)$ uses the Young's inequality, $(ii)$ is due to Lemma~\ref{scdecent} and the Lipschitzness of the mapping $y^*(x)$,  $C=1-\alpha^2\mu^2+2(1+\frac{1}{\alpha\mu})L_y^2\eta^2
A^2$; $D=2(1+\frac{1}{\alpha\mu})\eta^2\frac{L_{f,1}^2}{\mu^2}$.
\end{proof}

\begin{proof}[Proof of Theorem~\ref{mainthm1}]
Substituting Lemma~\ref{lemma:lowerlevel} to Lemma~\ref{hyperdecent} yields
\begin{align}\label{paratelescopingsum}
    &\frac{1}{T}\sum_{t=0}^{T-1}\left[\frac{1}{2}-\eta L_{\Phi}-(\frac{1}{2}+\eta L_{\Phi})A^2D\right]\|\nabla\Phi(x^t)\|^2 \nonumber\\
    &\leq \frac{\Phi(x^0)-\inf\Phi(x)}{\eta T} + \frac{1}{T}
    (\frac{1}{2}+\eta L_{\Phi})A^2\frac{\|y^0-y^*(x^0)\|^2}{1-C},
\end{align}
where  $A=(L_{f,1}+\alpha L_{f,1}^2)(1-\alpha\mu)^{\frac{1}{2}}$, $C=1-\alpha^2\mu^2+2(1+\frac{1}{\alpha\mu})\frac{L_{f,1}^2}{\mu^2}\eta^2
A^2$; $D=2(1+\frac{1}{\alpha\mu})\eta^2\frac{L_{f,1}^2}{\mu^2}$.

We want to carefully choose the parameter $\alpha, \eta$ s.t. $C<1$, $\alpha \leq \frac{1}{L_{f,1}}$ and $\frac{1}{2}-\eta L_{\Phi}-(\frac{1}{2}+\eta L_{\Phi})A^2D >0$. For example, we can choose 
$\alpha=\frac{1}{4L_{f,1}}, \eta=\min\left(\frac{\mu^2}{5L_{f,1}^3\sqrt{(\frac{4L_{f,1}}{\mu}-\frac{\mu}{4L_{f,1}})}}, \frac{1}{8L_\Phi}, \sqrt{\frac{1}{16N}}, \sqrt[3]{\frac{1}{81NL_\Phi}}\right)$, and $N=\frac{25L_{f,1}^4(\frac{4L_{f,1}}{\mu}+1)}{16\mu^2}$.
 
\end{proof}

\section{Theoretical Analysis: An Example on Multivariate Linear Regression}\label{sec:analysis_rank_learning}

In this section, we provide the theoretical analysis to demonstrate why our method is able to learn the ground truth rank, whereas HETLoRA fails in a multivariate linear regression example


If our algorithm can find a better low rank approximation than HETLoRA, then our method surely performs better than HETLoRA. 
So theoretically, we want to find the exact analytic solution of the best low rank approximation.
Recall multivariate linear regression problem, the goal is to minimize the reconstruction error:
\[
\min_{W\in \mathbb{R}^{m\times n}} \|Y-XW\|^2_F
\]
where $(X,Y)$ is the data and label.
We know the solution which can minimize the reconstruction error is,
\[ W=(X^TX)^{-1}X^TY\]

However, $rank(W)$ is possibly very large, leading to computationally inefficient. So we want to find the optimal low-rank matrix approximation of $W$
(i.e. minimize the reconstruction error with small rank of $W$), then we add a rank restriction on $W$,
\[
Y=XW+\epsilon, \quad \text{s.t.}, \quad rank(W)\leq r.
\]

In statistics, this is a Reduced Rank Regression (RRR) problem, which has been well-explored,
\[
\min_{W\in \mathbb{R}^{m\times n}} \|Y-XW\|_F^2, \quad \text{s.t.}, \quad rank(W)\leq r,
\]
which is equivalent to 
\[
\min_{W\in \mathbb{R}^{m\times n}} tr[(Y-XW)(Y-XW)^T], \quad rank(W)\leq r
\]
where $tr(.)$ is the matrix trace. \\
Given the upper bound of $rank(W)=r$, we directly do rank factorization on $W$, i.e., LoRA:
\[
\min_{A \in \mathbb{R}^{m \times r}, B \in \mathbb{R}^{r \times n}} tr[(Y-XAB)(Y-XAB)^T], 
\]

Specifically in HETLora setting, given the rank initialization of the $k-$client: $r_k^{init}$, the objective function is:
\[
\min_{A \in \mathbb{R}^{m \times r_k^{init}}, B \in \mathbb{R}^{r_k^{init} \times n}} tr[(Y_k-X_kAB)(Y_k-X_kAB)^T].
\]
In our setting, we initialize the rank of the common adapter to $r$, and the local adapter to $\tilde{r}$, the objective function is, 
\[
\min_{A\in\mathbb{R}^{m \times r}, B \in \mathbb{R}^{r \times n},C_k \in \mathbb{R}^{m \times \tilde{r}},D_k \in \mathbb{R}^{\tilde{r} \times n}} tr[(Y_k-X_k(AB+C_kD_k))(Y_k-X_k(AB+C_kD_k)^T]. 
\]

In the synthetic experiment, we make global $AB$ to be in the orthogonal row and vector space of $C_kD_k$, then we directly get \[r(W_k)=r(AB+C_kD_k)=r(AB)+r(C_kD_k)=r+\tilde{r}\] 
then our problem is equivalent to reduced-rank regression problem.

\begin{lemma} \citep{reinsel1998multivariate}  Theorem 2.2[RRR solution]
Suppose the \((m+n)\)-dimensional random vector \((Y_k, X_k)\) has mean vector \(0\) and covariance matrix with:
\[
\Sigma_{yx} = \Sigma_{xy} = \text{Cov}(Y_k, X_k) \quad \text{and} \quad \Sigma_{xx} = \text{Cov}(X_k) \quad \text{nonsingular}.
\]

Then, for any positive-definite matrix \(\Sigma\), an \(m \times r\) matrix \(A\) and \(r \times n\) matrix \(B\), for \(r \leq \min(m, n)\), which minimize
\[
\text{tr} \big\{ \mathbb{E} \big[\Sigma^{1/2} (Y_k - X_kAB)(Y_k - X_kAB)^\top \Sigma^{1/2} \big] \big\}
\]
are given by:
\[
A^{(r)} = \Sigma^{-1/2} [V_1, \dots, V_r] = \Sigma^{-1/2} V, \quad B^{(r)} = V^\top \Sigma^{1/2} \Sigma_{yx} \Sigma_{xx}^{-1}
\]
where \(V = [V_1, \dots, V_r]\) and \(V_j\) is the (normalized) eigenvector that corresponds to the \(j\)-th largest eigenvalue \(\lambda_j^2\) of the matrix:
\[
\Sigma^{1/2} \Sigma_{yx} \Sigma_{xx}^{-1} \Sigma_{xy} \Sigma^{1/2}, \quad j = 1, 2, \dots, r.
\]
\end{lemma}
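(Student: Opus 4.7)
The plan is to use the classical two-step reduction standard for reduced-rank regression: first profile out $B$ by its closed-form minimizer for fixed $A$, then recognize the residual problem in $A$ as a trace maximization over rank-$r$ projections, which is solved by the top eigenvectors of $M=\Sigma^{1/2}\Sigma_{yx}\Sigma_{xx}^{-1}\Sigma_{xy}\Sigma^{1/2}$ via the Ky Fan / Courant--Fischer principle. Throughout, the expectation in the objective is eliminated at the outset by using $\mathbb{E}[Y_k]=\mathbb{E}[X_k]=0$ together with the definitions of $\Sigma_{yy},\Sigma_{yx},\Sigma_{xx}$, reducing the problem to pure linear algebra on the three covariance blocks.

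First, expanding the objective gives
\[
J(A,B)=\operatorname{tr}(\Sigma\Sigma_{yy})-2\operatorname{tr}\!\bigl(\Sigma\Sigma_{yx}(AB)^{\!\top}\bigr)+\operatorname{tr}\!\bigl(\Sigma\, AB\,\Sigma_{xx}(AB)^{\!\top}\bigr).
\]
For fixed $A$ of full column rank, $J(A,\cdot)$ is a strictly convex quadratic in $B$, whose first-order condition $A^{\!\top}\Sigma AB\,\Sigma_{xx}=A^{\!\top}\Sigma\Sigma_{yx}$ yields the unique minimizer
\[
B^{\star}(A)=(A^{\!\top}\Sigma A)^{-1}A^{\!\top}\Sigma\,\Sigma_{yx}\Sigma_{xx}^{-1}.
\]
Setting $\tilde A:=\Sigma^{1/2}A$ and $P_{\tilde A}:=\tilde A(\tilde A^{\!\top}\tilde A)^{-1}\tilde A^{\!\top}$ (an $r$-dimensional orthogonal projector), a direct computation shows that both the cross term and the quadratic term in $J$ collapse to $\operatorname{tr}(MP_{\tilde A})$, giving the clean identity
\[
\min_{B} J(A,B)=\operatorname{tr}(\Sigma\Sigma_{yy})-\operatorname{tr}(MP_{\tilde A}).
\]

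Thus the original minimization is equivalent to maximizing $\operatorname{tr}(MP)$ over rank-$\le r$ orthogonal projectors $P$ on $\mathbb{R}^{\dim\Sigma}$. Since $M\succeq 0$ with eigenpairs $\{(\lambda_j^2,V_j)\}$ in decreasing order, the Ky Fan trace-maximum principle yields the maximal value $\sum_{j=1}^{r}\lambda_j^2$, attained when $P$ is the projector onto $\operatorname{span}(V_1,\dots,V_r)$. Choosing $A^{(r)}=\Sigma^{-1/2}V$ realizes $\tilde A=V$ and $P_{\tilde A}=VV^{\!\top}$; substituting into $B^{\star}(A)$ and using $(A^{(r)})^{\!\top}\Sigma A^{(r)}=V^{\!\top}V=I_r$ gives $B^{(r)}=V^{\!\top}\Sigma^{1/2}\Sigma_{yx}\Sigma_{xx}^{-1}$, which is exactly the lemma's formula.

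The main obstacle is the bookkeeping in the profiling step: showing that after substituting $B^{\star}(A)$, the objective depends on $A$ only through the subspace $\operatorname{range}(\Sigma^{1/2}A)$ and in fact only through the projector $P_{\tilde A}$. This is the step that collapses the joint nonconvex problem in $(A,B)$ onto a convex geometric problem on the Grassmannian and reflects the gauge freedom $(A,B)\mapsto(AG^{-1},GB)$ for invertible $G\in\mathbb{R}^{r\times r}$; once this identity is in hand, Step 3 (Ky Fan) and the read-off of $A^{(r)},B^{(r)}$ are routine. A minor additional subtlety is that the derivation above assumes $A^{\!\top}\Sigma A$ is invertible (full column rank of $A$); the degenerate case is handled by a limiting argument or by noting that the optimizer can always be taken with full column rank without loss of generality since rank-deficient $A$ yields a strictly suboptimal projector rank.
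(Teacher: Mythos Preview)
Your proposal is correct and follows the classical two-step profiling argument (optimize $B$ in closed form for fixed $A$, then reduce to a Ky Fan trace maximization over rank-$r$ projectors). Note, however, that the paper does not prove this lemma at all: it simply quotes it as Theorem~2.2 of \citet{reinsel1998multivariate} and immediately uses the resulting truncated-error formula. So there is no ``paper's own proof'' to compare against---you have supplied the argument that the paper outsourced to the reference, and the route you take is essentially the one given in Reinsel--Velu.
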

From solution formula we directly get minimum truncated error
\[
\min_{A,B:rank(AB) \leq r}\|W-AB\|_F^2=\sqrt{\sum_{i=r+1}^n \lambda_i} \quad \forall W,rank(W) \geq r
\]

\subsubsection{Low-Rank approximation}
Specifically in HETLoRA setting, given the rank initialization of the $k-$client: $r_k^{init}$, the objective function is:
\[
\min_{A \in \mathbb{R}^{m \times r_k^{init}}, B \in \mathbb{R}^{r_k^{init} \times n}} tr[(Y_k-X_kAB)(Y_k-X_kAB)^T].
\]
In our setting, we initialize the rank of the common adapter to $r$, and the local adapter to $\tilde{r}$, the objective function is, 
\begin{equation} \label{eq:obj_mlr}
    \min_{A\in\mathbb{R}^{m \times r}, B \in \mathbb{R}^{r \times n},C_k \in \mathbb{R}^{m \times \tilde{r}},D_k \in \mathbb{R}^{\tilde{r} \times n}} tr[(Y_k-X_k(AB+C_kD_k))(Y_k-X_k(AB+C_kD_k)^T]. 
\end{equation}

note $C_kD_k$, is a local adapter. we mark 
\[W_k=P_kQ_k=AB+C_kD_k\]
 note that $rank(P_kQ_k) \in [r-\tilde{r}, r+\tilde{r}]$.
Generally we cannot say the problem (\ref{eq:obj_mlr}) 
and 

\[
\min_{P_k \in \mathbb{R}^{m \times r+\tilde{r}} \quad Q_1 \in \mathbb{R}^{r+\tilde{r} \times n}} tr[(Y_k-X_kP_kQ_k)(Y_k-X_kP_kQ_k)]^T,
\]
are equivalent since the former one is subset of the latter problem. However, under some certain dataset setting, the two problems are equivalence. We defer the equivalence proof to Lemma \ref{sec:eq_prob}.

Suppose we have two clients, the optimal solution in HETLoRA is 
\[
\text{Client 1} \quad A_1^{r_1^{init}} = \Sigma^{-1/2} [V_1, \dots, V_{r_{init}}] = \Sigma^{-1/2} V, \quad B_1^{r_1^{init}} = V^\top \Sigma^{1/2} \Sigma_{yx} \Sigma_{xx}^{-1}
\]
\[
\text{Client 2} \quad A_2^{r_2^{init}} = \Sigma^{-1/2} [V_1, \dots, V_{r_{init}}] = \Sigma^{-1/2} V, \quad B_2^{r_2^{init}} = V^\top \Sigma^{1/2} \Sigma_{yx} \Sigma_{xx}^{-1}
\]
In our setting, the optimal solution is 
\[
\text{Client 1} \quad P_1^{r+\tilde{r_1}} = \Sigma^{-1/2} [V_1, \dots, V_{r+\tilde{r_1}}] = \Sigma^{-1/2} V, \quad Q_1^{r+\tilde{r_1}} = V^\top \Sigma^{1/2} \Sigma_{yx} \Sigma_{xx}^{-1}
\]
\[
\text{Client 2} \quad P_2^{r+\tilde{r_2}} = \Sigma^{-1/2} [V_1, \dots, V_{r+\tilde{r_2}}] = \Sigma^{-1/2} V, \quad Q_1^{r+\tilde{r_1}} = V^\top \Sigma^{1/2} \Sigma_{yx} \Sigma_{xx}^{-1}
\]

Suppose for Client 1 data, $\Sigma^{1/2} \Sigma_{yx} \Sigma_{xx}^{-1} \Sigma_{xy} \Sigma^{1/2}$ has eigenvector $\lambda_1=\lambda_2=\lambda_3=1$; $\lambda_4=\dots=\lambda_n=0$, obviously the low rank approximation is $r_1^*=3$.
For Client 2 data, $\Sigma^{1/2} \Sigma_{yx} \Sigma_{xx}^{-1} \Sigma_{xy} \Sigma^{1/2}$ has eigenvector $\lambda_1=\dots=\lambda_4=1$; $\lambda_5=\dots=\lambda_n=0$, the low rank approximation is $r_2^*=4$.

In our synthetic experiments \ref{sec:automatic_rank_learning},
HETLoRA underestimates the rank for client 1, i.e., $r_1^{init}=2 < r_1^*=3$ due to the random rank initialization, and the learned rank $r_1=1$ by self-pruning; Client 2 initializes a reasonable $r_2^{init}=10$, and the learned rank $r_2=5=r_2^*$.
Thus client 1 fails to learn the optimal low rank approximation because 
\[\min_{A,B:rank(AB) \leq r_1^{init}}\|W-AB\|_F^2=\sqrt{\sum_{i=r+1}^n \lambda_i}=1.\]
Our PF2LoRA initializes $r=4$ for the common adapter ($AB$), and $\tilde{r}=2$ ($C_kD_k$) for the local adapter, which means $r-\tilde{r} = 2 \leq rank(AB+C_kD_k) \leq r+\tilde{r} = 6$, and learned rank for client 1 is $r_1=3$. The learned rank for client 2 is $r_2=4$. Both succeeded to learn the optimal low rank approximation.
\[\min_{A,B, C_k, D_k:r-\tilde{r}\leq rank(W_k) \leq r+\tilde{r}}\|W-W_k\|_F^2=\sqrt{\sum_{i=r+\tilde{r}}^n \lambda_i}=0.\]

\subsubsection{Problem Equivalence} \label{sec:eq_prob}
Next we prove two problems to be equivalent:
\[
\min_{A \in \mathbb{R}^{m \times r} \quad B \in \mathbb{R}^{r \times n} \quad C_k \in \mathbb{R}^{m \times \tilde{r}} \quad D_k \in \mathbb{R}^{\tilde{r} \times n}} tr[(Y_k-X_k(AB+C_kD_k))(Y_k-X_k(AB+C_kD_k)^T] 
\]
and 
\[
\min_{W_k\in\mathbb{R}^{m\times n}} tr[(Y_k-X_kW_k)(Y_k-X_kW_k)^T], \quad r-\tilde{r} \leq rank(W_k)\leq r+\tilde{r}
\]
\begin{lemma}
The rank of the sum of \(AB\) and \(CD\) satisfies:
\[
r(AB + CD) = r(AB) + r(CD)
\]
if and only if 
\[
\text{dim}(\mathcal{C}_1 \cap \mathcal{C}_2) = \text{dim}(\mathcal{R}_1 \cap \mathcal{R}_2) = 0.
\]
where \(\mathcal{C}_1\) and \(\mathcal{C}_2\) be the column spaces of \(AB\) and \(CD\), and  \(\mathcal{R}_1\), \(\mathcal{R}_2\) are their row spaces. 
\end{lemma}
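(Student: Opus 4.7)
My plan is to reduce the lemma to the standard submodular bounds on the rank of a matrix sum and then use a block factorization for the converse. Writing $M = AB$ and $N = CD$ and letting $\mathcal{C}_i$, $\mathcal{R}_i$ denote the column and row spaces of the $i$-th summand with $r_1 = r(M)$, $r_2 = r(N)$, I would first record the two elementary inequalities
\begin{equation*}
r(M+N) \le \dim(\mathcal{C}_1 + \mathcal{C}_2) = r_1 + r_2 - \dim(\mathcal{C}_1 \cap \mathcal{C}_2),
\end{equation*}
which follows since every column of $M+N$ lies in $\mathcal{C}_1 + \mathcal{C}_2$, together with the dual inequality $r(M+N) \le r_1 + r_2 - \dim(\mathcal{R}_1 \cap \mathcal{R}_2)$ obtained by transposition. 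If $r(M+N) = r_1 + r_2$, both bounds must be tight, so the necessity direction is immediate: $\dim(\mathcal{C}_1 \cap \mathcal{C}_2) = \dim(\mathcal{R}_1 \cap \mathcal{R}_2) = 0$.

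For the sufficiency direction I would take rank-revealing factorizations $M = U_1 V_1$ and $N = U_2 V_2$, where $U_i$ is $m \times r_i$ with columns forming a basis of $\mathcal{C}_i$ and $V_i$ is $r_i \times n$ with rows forming a basis of $\mathcal{R}_i$. This yields the block identity
\begin{equation*}
M + N = \begin{pmatrix} U_1 & U_2 \end{pmatrix} \begin{pmatrix} V_1 \\ V_2 \end{pmatrix}.
\end{equation*}
The hypothesis $\mathcal{C}_1 \cap \mathcal{C}_2 = \{0\}$ says exactly that $[U_1 \ U_2]$ has full column rank $r_1 + r_2$, while $\mathcal{R}_1 \cap \mathcal{R}_2 = \{0\}$ says the stacked $V$-block has full row rank $r_1 + r_2$. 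Since a product $PQ$ in which $P$ has full column rank satisfies $r(PQ) = r(Q)$ (left-multiplying by $(P^\top P)^{-1} P^\top$ recovers $Q$, so the rank cannot drop), I obtain $r(M+N) = r_1 + r_2 = r(M) + r(N)$, closing the equivalence.

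The argument is a short linear-algebra exercise; the only delicate point is that both trivial-intersection hypotheses are genuinely needed, and they enter the sufficiency step on opposite sides of the block factorization. I would sanity-check this with a small example: taking $M = \mathrm{diag}(1,0)$ and $N = e_2 e_1^\top$ gives $r(M) = r(N) = 1$ with trivial column-space intersection but identical row spaces, and indeed $r(M+N) = 1 \ne 2$. This confirms that dropping either hypothesis breaks the conclusion, matching the ``and'' in the stated lemma and justifying its use in the synthetic-experiment analysis, where the construction orthogonalizes $AB$ against $C_k D_k$ in both column and row spaces so that $r(AB + C_k D_k) = r + \tilde{r}$ holds exactly.
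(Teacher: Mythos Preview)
Your proof is correct and follows essentially the same route as the paper: both directions use the same pair of rank inequalities $r(M+N)\le r_1+r_2-\dim(\mathcal{C}_1\cap\mathcal{C}_2)$ and its row-space dual for necessity, and the same block factorization $M+N=[U_1\ U_2]\begin{psmallmatrix}V_1\\V_2\end{psmallmatrix}$ together with a left inverse of the full-column-rank factor for sufficiency. The only cosmetic differences are that the paper names the left inverse $L$ rather than writing $(P^\top P)^{-1}P^\top$, and derives the intermediate bound $r(E+F)\ge r(E)+r(F)-d$ before setting $d=0$; your added counterexample is a nice sanity check not present in the paper.
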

\begin{proof}
To simplify the notation in proof, we mark $c=\text{dim}(\mathcal{C}_1 \cap \mathcal{C}_2)$, $d= \text{dim}(\mathcal{R}_1 \cap \mathcal{R}_2)$; $E=AB$, $F
=CD$. First, the condition \(c = d = 0\) is necessary, as two strings of inequalities show:
\[
r(E + F) \leq r[(E, F)] = r(E) + r(F) - c \leq r(E) + r(F),
\]
\[
r(E + F) \leq r[(E; F)] = r(E) + r(F) - d \leq r(E) + r(F).
\]

To show \(c = d = 0\) is sufficient, we use full-rank decompositions of \(E\) and \(F\):
\[
E = C_1R_1, \quad r(A) = r(C_1) = r(R_1) = a,
\]
where \(E\) is \(m \times n\), \(C_1\) is \(m \times a\), and \(R_1\) is \(a \times n\).

\[
F = C_2R_2, \quad r(F) = r(C_2) = r(R_2) = b,
\]
where \(F\) is \(m \times n\), \(C_2\) is \(m \times b\), and \(R_2\) is \(b \times n\).

Such representations exist since \(R_1\) can be any matrix whose rows form a basis of the row space of \(A\). Then \(A = C_1R_1\) for some \(C_1\), and:
\[
r(E) = r(C_1) = \min\big(\text{rank}(C_1), \text{rank}(R_1)\big) \leq a = r(E).
\]

We now write:
\[
E + F = C_1R_1 + C_2R_2 = (C_1, C_2) \begin{pmatrix} R_1 \\ R_2 \end{pmatrix} = CR,
\]
 Then \(c = 0\) implies that all the \(a + b\) columns of \(C\) are linearly independent, and so \(C\) has a left inverse \(L\) such that \(LC = I\). Thus, when \(c = 0\),
\[
r(E + F) = r(CR) \geq r(LCR) = r(R) = r(E) + r(F) - d.
\]

If in addition \(d = 0\), the entire string collapses, and:
\[
r(E + F) = r(E) + r(F).
\]
\end{proof}
In the following synthetic experiment setting we make global $AB$ in orthogonal row and vector space
of $C_1D_1$, $C_2D_2$, according to above lemma we directly get \[r(W_1)=r(AB+C_1D_1)=r(AB)+r(C_1D_1)=r+\tilde{r}\] and
\[r(W_2)=r(AB+C_2D_2)=r(AB)+r(C_2D_2)=r+\tilde{r}\]
So under our synthetic experiment setting, our problem is equivalent to reduced-rank regression problem, which provides a theoretical guarantee.

\section{Experimental Settings in the Synthetic Example}\label{sec:hyperparameter_syn_exp}

We conduct a synthetic experiment of multivariate linear regression in federated learning to show why HETLoRA fails to learn the ground truth rank, but PF2LoRA does. The following describes the details of experiments and the hyperparameter settings for both algorithms,

        \begin{enumerate}
            \item HETLoRA: Following its rank initialization strategy $r_{min}\leq rank_1 \leq rank_2 ...\leq rank_k...\leq r_{max}$, we assume that $r_{min}=1, r_{max} = 12$ and initialize $\hat{W}_k = \hat{A}_k\hat{B}_k$ by,
            \begin{align*}
                & \hat{A}_1\in \mathbb{R}^{10\times 2}, \hat{B}_1\in \boldsymbol{0}^{2\times 10}, \ s.t. \ \hat{A}_1 \sim  \mathcal{N}(0, 1),\\
                & \hat{A}_2\in \mathbb{R}^{10\times 10}, \hat{B}_2\in \boldsymbol{0}^{10\times 10}, \ s.t. \ \hat{A}_2 \sim  \mathcal{N}(0, 1)
            \end{align*}
            so we have $rank(\hat{A}_1)=2$ and $rank(\hat{A}_2)=10$. We can easily get that the total number of trainable parameters for two clients is $240$. Other hyperparameters are set as follows. We search the regularization factor $\gamma$ in the range $[0.05, 0.5\}$ with the search grid $0.05$ and set it to the optimal value $0.1$. The pruning parameter $\gamma=0.3$, which is responsible for imposing the regularization to the last $30\%$ columns to sparse them. We tune the learning rate within the range $\{0.001, 0.002, 0.003, 0.004, 0.005\}$ and set it to the optimal value $0.002$.  The total training steps are 2000, and the communication is performed every 10 steps, which means we train the parameters for 10 steps locally, and then execute the parameter aggregation and distribution.  
            
            \item PF2LoRA: For a fair comparison, we initialize the trainable parameters $\hat{W}_k = \hat{A}_k\hat{B}_k+\hat{C}_k \hat{D}_k$, and make sure the total number of trainable parameters to be the same as that in HETLoRA.
            For client $k=1, 2$, we have $r=4, \tilde{r}=2$ and,
            \begin{align*}
                &\hat{A}_k\in \mathbb{R}^{10\times 4}, \hat{B}_k\in \mathbb{R}^{4\times 10}, \hat{C}_k\in \mathbb{R}^{10\times 2}, \hat{D}_k\in \mathbb{R}^{2\times 10}, \\
                & s.t. \ \hat{A}_k \sim  \mathcal{N}(0, 1), \hat{C}_k \sim  \mathcal{N}(0, 1), \hat{C}_k \sim  \mathcal{N}(0, 1), \hat{D}_k \sim  \mathcal{N}(0, 1).
            \end{align*}
            and $A_kB_k$ is orthogonal to the matrix $C_kD_k$, such that their column space or row space are independent mutually.
            The total number of training steps are fixed as 2000, and the communication interval is 10. We search the best upper-level and lower-level learning rates within the range $[0.001, 0.01]$ with the search grid  of $0.001$, and set the best upper-level learning rate to $0.005$ and the lower-level learning rate to $0.002$. In each communication round, we aggregate the common adapter parameters $A_k, B_k$ and then distribute them, and the local adapter parameters  $C_k, D_k$ are not involved in communication.   
        \end{enumerate}
         

\section{Sensitivity Analysis of Hyperparameter} \label{sec:sensitivity}

We run our algorithm PF2LoRA on GLUE benchmarks using a hyperparameter sweep, and the results are presented in Figure \ref{fig:sensitity_parameters}. In our setting, we require the local adapter to be light-weight, so the rank of local adapters is always small, i.e., $\tilde{r}=2$. We perform a hyperparameter sweep on the local learning rate $\alpha$ and the rank of the common adapter, respectively. As you see in subfigure 2(a), our algorithm is pretty robust to the learning rate $\alpha$. Since COLA dataset is more challenging than others, a larger rank is helpful to improve the model performance, but the performance keeps almost the same when the rank is larger than 8. Our algorithm also exhibits high robustness on data MNLI and SST-2.     

\begin{figure}[!t]
    \centering
    \subfigure[ PF2LoRA performance vs. learning rate $\alpha$.]{\includegraphics[width=0.35\linewidth]{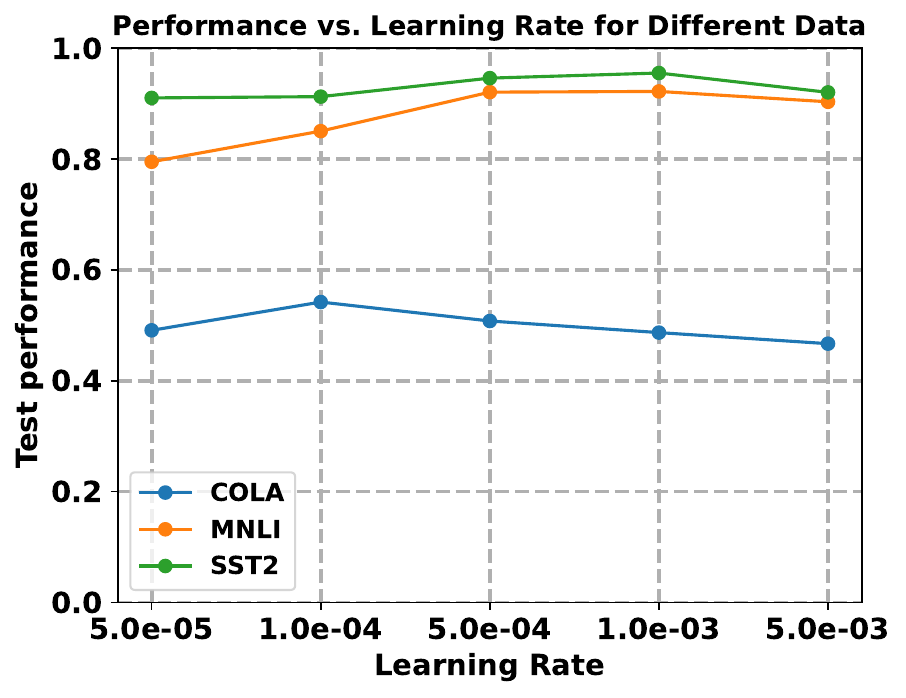}}\ \ \
    \subfigure[ PF2LoRA performance vs. rank $r$.]{\includegraphics[width=0.35\linewidth]{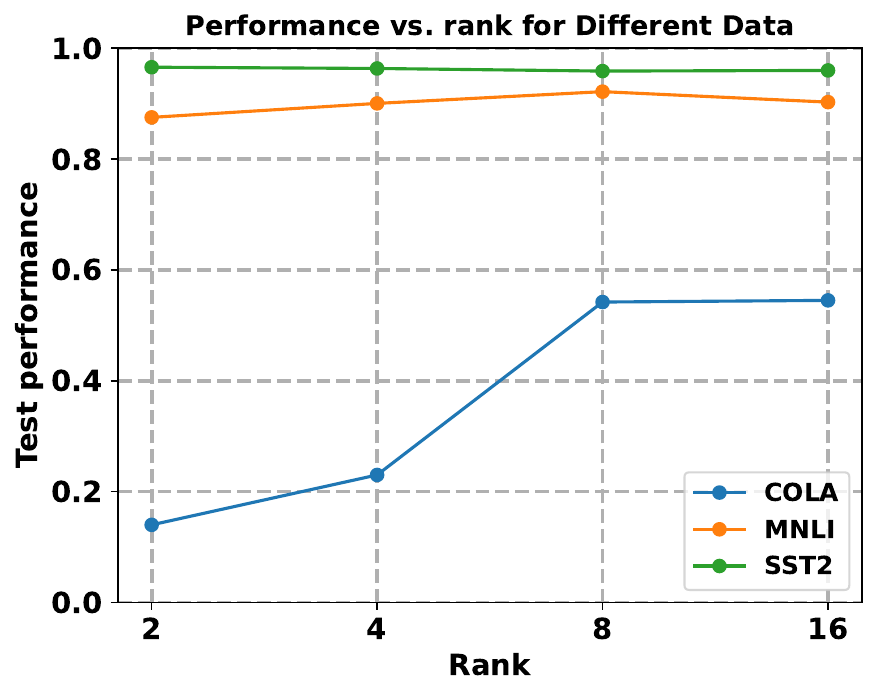}}
    \caption{ Sensitivity analysis of hyperparameters.}
    \label{fig:sensitity_parameters}
\end{figure}

\section{Computation and Communication Cost} \label{sec:computing_cost}

We evaluated the total computational costs (FLOPs on 8 NVIDIA RTX A6000 GPUs) and communication costs in a single communication round for each algorithm on GLUE benchmark. The results are summarized in Table \ref{tab:computing_cost}.  From our understanding, communication costs are the total number of parameters that participate in the aggregation and distribution of parameters in federated learning. The computational cost (FLOPs) per round are determined by the number of model parameters and the forward/backward propagation operations. As PF2LoRA requires to compute the hessian-vector product for hypergradient estimation, it incurs a higher computational cost. But the communication cost of PF2LoRA remains consistent with that of HOMLoRA and Centralized LoR, as the communication parameters in PF2LoRA are only global adapters that have the same rank $r_k=8$ with that in HOMLoRA and Centralized LoRA. Instead, HETLoRA has a higher parameter rank requirement for a high performance, resulting in increased communication costs.    

\begin{table*}[htbp]
  \centering
  \caption{ Computational/Communication costs per communication round.}
    \setlength{\tabcolsep}{10pt}
    \begin{tabular}{l|cc}
    \Xhline{1.2pt}
    Method  & TFLOPs/round  & Communication parameters/round \\
    \hline
    Centralized LoRA  ($r_k=8$) & 258.40  & 0.30M \\
    HOMLoRA ($r_k=8$) & 258.40  & 0.30M\\
    Per-FedAvg-LoRA ($r_k=8$) & 908.00 & 0.30M \\
    HETLoRA ($r_{max}=12, r_{min}=8$) & 272.60  & 0.35M \\
    PF2LoRA ($r_k=8, \tilde{r}=2$)  & 1202.40    & 0.30M \\
    \Xhline{1.2pt}
    \end{tabular}%
  \label{tab:computing_cost}%
\end{table*}%

\end{document}